\documentclass[11pt]{article}

\usepackage[margin=1in]{geometry}

\usepackage{microtype}
\usepackage{graphicx}
\usepackage{subcaption}
\usepackage{booktabs} 
\usepackage{xcolor}
\usepackage{colortbl}
\usepackage{bm}
\usepackage[numbers]{natbib}  
\usepackage{float}

\usepackage{hyperref}
\usepackage[noblocks]{authblk}  
\usepackage{titletoc}  

\usepackage{amsmath}
\usepackage{amssymb}
\usepackage{mathtools}
\usepackage{amsthm}
\usepackage{amssymb}

\usepackage{thmtools}

\usepackage[normalem]{ulem}

\usepackage{cleveref}
\crefname{equation}{Eqn.}{Eqns.}
\Crefname{equation}{Eqn.}{Eqns.}

\newcommand{\E}{\mathbb{E}}

\newcommand{\R}{\mathbb{R}}
\newcommand{\supp}{\mathrm{supp}}

\newcommand{\sh}[2]{S_{#1}({#2})}

\newcommand{\pd}{p_\mathcal{D}}

\theoremstyle{plain}
\newtheorem{theorem}{Theorem}[section]

\newtheorem{lemma}[theorem]{Lemma}
\newtheorem{corollary}[theorem]{Corollary}
\theoremstyle{definition}
\newtheorem{definition}[theorem]{Definition}

\theoremstyle{remark}
\newtheorem{remark}[theorem]{Remark}

\usepackage[textsize=tiny]{todonotes}

\title{Two Calm Ends and the Wild Middle:\\A Geometric Picture of Memorization in  Diffusion Models}

\author[1]{Nick Dodson$^{*}$}
\author[1]{Xinyu Gao$^{*}$}
\author[2]{Qingsong Wang$^{*}$}
\author[2]{Yusu Wang$^{\dagger}$}
\author[1]{Zhengchao Wan$^{\dagger\ddagger}$}

\affil[1]{Department of Mathematics, University of Missouri, Columbia, Missouri, USA}
\affil[2]{Halıcıoğlu Data Science Institute, University of California San Diego, La Jolla, California, USA}

\date{\vspace{-1em}\small $^{*}$Co-first authors. $^{\dagger}$Co-last authors. $^{\ddagger}$\texttt{zwan@missouri.edu}}

\begin{document}

\maketitle

\begin{abstract}
Diffusion models generate high-quality samples but can also memorize training data, raising serious privacy concerns. Understanding the mechanisms governing when memorization versus generalization occurs remains an active area of research. In particular, it is unclear where along the noise schedule memorization is induced, how data geometry influences it, and how phenomena at different noise scales interact. We introduce a geometric framework that partitions the noise schedule into three regimes based on the coverage properties of training data by Gaussian shells and the concentration behavior of the posterior, which we argue are two fundamental objects governing memorization and generalization in diffusion models. This perspective reveals that memorization risk is highly non-uniform across noise levels. We further identify a danger zone at medium noise levels where memorization is most pronounced. In contrast, both the small and large noise regimes resist memorization, but through fundamentally different mechanisms: small noise avoids memorization due to limited training coverage, while large noise exhibits low posterior concentration and admits a provably near linear Gaussian denoising behavior.
For the medium noise regime, we identify geometric conditions through which we propose a geometry-informed targeted intervention that mitigates memorization.
\end{abstract}

\section{Introduction}
Diffusion models have achieved remarkable success in generative modeling, producing high-quality images but sometimes memorizing the training data, raising serious privacy and copyright concerns~\citep{somepalli2023diffusion, carlini2023extracting, wen2024detecting}.
This memorization risk is inherent to the training objective, as it admits a unique global minimum, namely the \emph{empirical optimal} solution, which can only generate the training data \cite{gu2024on}. 
Consequently, it is both theoretically interesting and practically important to understand how and when a learned model can deviate from the empirical optimal to generalize beyond its training data.

Some recent works have analyzed this question. In particular, \cite{vastola2025generalization} argues that the stochastic nature of diffusion model training helps prevent fitting the empirical optimal denoiser, and thus promotes generalization.
Later, \cite{bertrand2025closed} contests this view, showing that for a large range of training times the regression target is essentially unique while model still generalizes.
Recently, \cite{song2025selective} point out that supervised training in diffusion models is limited to small regions which fails to cover most parts of the inference trajectory, and argues that the generalization behavior of the model stems from extrapolation.

\paragraph{Our work.}
A key aspect to understand when models \emph{generalize} is to understand when they \emph{memorize}, which we focus on in this paper.
To this end, we distinguish between \emph{trajectory-level memorization}, which concerns whether full sampling trajectories collapse to the training set, and \emph{per-noise-level memorization}, which characterizes memorizing denoiser behavior at a fixed noise scale.
Empirically, we find that models exhibiting strong trajectory-level memorization may nonetheless behave benignly at many individual noise levels, indicating that memorization is not induced uniformly across the noise schedule (see~\Cref{sec:empirical_investigation} for more details).

This observation motivates our central question: \emph{At which noise levels is memorization formed, and what geometric mechanisms govern its emergence?}

To address these questions, we analyze diffusion training through a geometric lens based on two quantities: \emph{posterior weight concentration} and \emph{Gaussian shell coverage} of the training data. This perspective naturally partitions the noise schedule into three regimes. At both small and large noise levels, the training lies on ``calm ends'' that are resistant to memorization, but for fundamentally different reasons: in the small noise regime, memorization is suppressed due to limited training coverage, while in the large noise regime due to weak posterior concentration and effectively linear denoising behavior. In contrast, a ``danger zone'' emerges at intermediate noise level -- the ``wild middle'' -- where these two effects align and memorization risk peaks sharply. Crucially, memorization formed in this regime can propagate along the inference trajectory and persist into the small noise stage, leading the model to memorize even when both ends of the schedule remain benign. This observation suggests that, to avoid memorization, it is most effective to control the behavior of diffusion models specifically within this intermediate ``danger zone''. By analyzing how coverage and posterior concentration interact, we can predict the location of this danger zone directly from the dataset. Experiments in Section~\ref{sec:experiments} confirm this picture and show that selectively undertraining the intermediate regime can substantially mitigate memorization while preserving generation quality.

\paragraph{More related work.}
Several explanations have been proposed for diffusion model generalization. Notably, \cite{li2024understanding} shows that models in the generalization regime exhibit closeness to a Gaussian linear model, suggesting an inductive bias toward learning the covariance structure.
Neural networks also exhibit spectral bias \citep{rahaman2019spectral, wang2025analytical, kadkhodaie2024generalization}, which may influence what diffusion models learn. Other factors known to promote generalization include early stopping before memorization occurs, limiting model capacity relative to dataset size~\citep{li2024understanding}, increasing the amount of training data as the scaling behavior for generalization and memorization differ in terms of the number of data~\citep{bonnaire2025why}, and conditional training with auxiliary noise classes to provide distributional contrast~\citep{yoon2023diffusion}.

\section{Background}\label{sec:bg}
Let $p$ denote a data distribution on $\R^d$. Let $\bm{X}\sim p$ denote a random variable following $p$ and $\bm{Z}\sim\mathcal N(0,I_d)$ a standard Gaussian random variable independent of $\bm{X}$. For the training and sampling of diffusion models, we mostly follow the description given in \cite{karras2022elucidating}.

\paragraph{Training.} In diffusion models, one of the central tasks is to denoise a noisy observation $\bm{X}_\sigma=\bm{X}+\sigma \bm{Z}$ for various noise levels $\sigma>0$ in order to obtain $\bm{X}$. In particular, one popular diffusion training loss is defined as
\begin{equation}\label{eq:training loss}
    \mathcal L(\theta) := \mathbb E_{\substack{\bm{X} \sim p\\\bm{Z} \sim \mathcal N (0,I_d)\\\sigma \sim \lambda}}\left\|m^{\theta}_\sigma(\bm{X}_\sigma) - \bm{X}\right\|^2
\end{equation}
where $\lambda$ denotes certain distribution on some chosen $[\sigma_{\min},\sigma_{\max}]\subset(0,\infty)$, and $m_\sigma^\theta:\R^d\to\R^d$ is a neural network parameterized by $\theta$ that aims to approximate the \emph{(optimal) denoiser} $m_\sigma$ that we will introduce below. We refer to $m_\sigma^\theta$ as a \emph{trained denoiser} in what follows.

\paragraph{Optimal solution.} The above loss admits a unique global minimum given by the posterior mean which we call the \emph{optimal denoiser}:
$m_\sigma(x):=\mathbb E[\bm X\mid \bm X_\sigma=x]$.
In practice, a model is trained with finite samples $\mathcal{D}=\{x_i\}_{i=1}^N$ assumed to be drawn i.i.d. from some data distribution $p$. In this case, the expectation over $\bm X\sim p$ in the training loss \Cref{eq:training loss} can be replaced by $\bm X\sim \pd$ where $\pd := \frac{1}{N}\sum_{i=1}^N \delta_{x_i}$, and the resulting optimal denoiser, which we now call the \emph{empirical optimal denoiser}, has a closed form given by
\begin{equation}\label{eq:empirical optimal denoiser}
    m_\sigma(x) = \sum_{i=1}^N w_i(x,\sigma)x_i
\end{equation}
where $w_i(x,\sigma) = \mathbb P(\bm{X}= x_i \mid \bm{X}_\sigma = x)$ is called the \emph{posterior weight} on $x_i$ at the noise level $\sigma$ and is given by
\begin{equation}\label{eq:posterior weight}
    w_i(x,\sigma) = \frac{\exp\left(-\frac{\|x - x_i\|^2}{2\sigma^2}\right)}{\sum_{j=1}^N \exp\left(-\frac{\|x - x_j\|^2}{2\sigma^2}\right)}.
\end{equation}

\noindent \textbf{Sampling.} After training, one samples $z\sim \mathcal{N}(0,I_d)$ and lets $x_{\sigma_{\max}}=\sigma_{\max}z$ for some large $\sigma_{\max}$. New samples are generated by solving the following ODE:
\begin{equation}\label{eq:denoiser_ode}
    {dx_\sigma}/{d\sigma} = -({m_\sigma^\theta(x_\sigma) - x_\sigma})/{\sigma}
\end{equation}
from $\sigma_{\max}$ to $\sigma_{\min}$. For any $\sigma\in [\sigma_{\min},\sigma_{\max}]$, we denote by $\Psi^\theta_{\sigma}:\R^d\to\R^d$ the flow map induced by the denoiser $m_\sigma^\theta$, which maps $x_{\sigma_{\max}}$ to $x_\sigma$ by solving the ODE above.

\begin{figure}[t]
    \centering
    \includegraphics[width=0.95\linewidth]{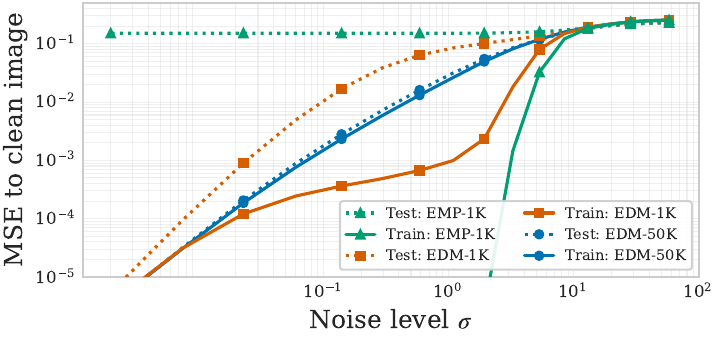}
    \caption{\textbf{MSE to Clean Image.} Comparison of denoising quality across noise levels. Solid lines: training data; dotted lines: test data. \textbf{EDM-1K} shows a generalization gap in the mid-$\sigma$ region.}
    \label{fig:mem_mse_to_clean}
\end{figure}

\section{Empirical Investigation of Memorization}
\label{sec:empirical_investigation}
To understand where and how memorization emerges during diffusion model training, we begin with an empirical investigation that motivates our subsequent geometric framework. By comparing models with different memorization behaviors across the noise schedule, we find the intermediate noise level is the most susceptible to memorization.

We start by comparing trained denoisers and the empirical optimal denoiser on the CIFAR-10 dataset \cite{krizhevsky2009learning}. We adopt the EDM from \cite{karras2022elucidating} to model $m_\sigma^\theta$ and consider three denoisers: a full-data EDM denoiser trained on 50k images (\textbf{EDM-50K}),
an EDM denoiser trained on only 1k images (\textbf{EDM-1K})\footnote{ Training on 1k CIFAR-10 images is known to lead to memorization \cite{gu2024on}. We also validate this in \Cref{tab:swap_memorization}. The 1k data is the first 1k in CIFAR-10 which is randomly indexed.}, 
and the empirical optimal denoiser under the same 1k training images (\textbf{EMP-1K}).

One straightforward way to compare the three models is to evaluate their denoising losses via \Cref{eq:training loss}. For a fixed noise level $\sigma$, this objective reduces to the denoising mean squared error (MSE). We therefore evaluate the fixed-$\sigma$ denoising MSE of the three models over $\sigma\in[0.002,80]$ on both training and test data from CIFAR-10:
\begin{equation}\label{eq:denoising mse}
    \mathrm{MSE}_\sigma(p_\bullet) := \mathbb E_{\substack{\bm{X} \sim p_\bullet\\\bm{Z} \sim \mathcal N (0,I_d)}}\left\|m_{\sigma}^\bullet(\bm{X}_\sigma) - \bm{X}\right\|^2
\end{equation}
where $m_\sigma^\bullet$ denotes either a trained or an optimal denoiser and $p_\bullet$ is either the training distribution $\pd$ or the test distribution $p_\mathcal{T}$. 

Intuitively,  $\mathrm{MSE}_\sigma(\pd)$  measures how well a model can recover a training image from its noisy version at noise level $\sigma$, while  $\mathrm{MSE}_\sigma(p_\mathcal{T})$ measures the corresponding performance on unseen test images. Taken together, these two quantities provide a coarse diagnostic of generalization behavior across noise scales: a small train–test gap suggests similar behavior on training and test data, whereas a large gap indicates degradation on test samples and may signal memorization. We emphasize, however, that train–test MSE alone is not a definitive measure of memorization, but serves here as a useful summary statistic.

\begin{figure}[t]
    \centering
    \includegraphics[width=0.95\linewidth]{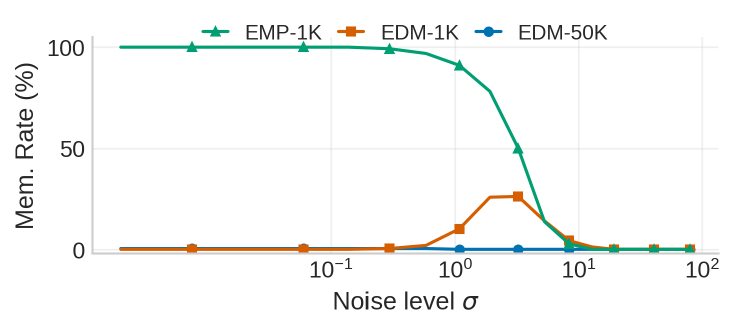}
    \caption{\textbf{Per-Noise-Level Memorization Rate.} Fraction of denoised test images classified as memorized at each noise level.}
    \label{fig:mem_memorization_rate}
\end{figure}
The results are shown in \Cref{fig:mem_mse_to_clean}. Interestingly, we observe three distinct noise regions in which the three models exhibit qualitatively different behavior. At this stage, this partition is purely observational; in subsequent sections, we provide a geometric characterization that explains the emergence of these large-, intermediate-, and small noise regimes.
\begin{enumerate}
    \item At large $\sigma$ (i.e., large noise), all three denoisers behave similarly on both train and test. We will later explain in \Cref{early_stage} that this is due to the fact that high noise denoising is dominated by coarse statistics.

    \item At middle $\sigma$, \textbf{EMP-1K} sharply diverges from \textbf{EDM-50K}, achieving near-zero training error but large test error due to its nearest-neighbor nature, while \textbf{EDM-50K} continues to generalize well.
    \item Interestingly, at small $\sigma$, although \textbf{EDM-1K} generates training data during sampling, it differs from \textbf{EMP-1K} (the empirical optimal denoiser) in its per $\sigma$-level denoising behavior on test data. 
\end{enumerate}

This three-regime pattern motivates a closer examination of memorization for the three denoisers. In particular, instead of only considering the classic notion of memorization, we should also consider memorization risk {\it at each noise level}. While the MSE gap between train and test data may be a good metric for generalization, it is not directly capturing memorization. To resolve this issue, we propose the following metrics for evaluating memorization.

\paragraph{Trajectory-Level memorization.} For a denoiser $m^\bullet_\sigma$ with noise schedule $\sigma\in[\sigma_{\text{min}}, \sigma_{\text{max}}]$, let $\Psi^\bullet_\sigma$ be the resulting flow map given by solving \Cref{eq:denoiser_ode}. Intuitively, the denoiser $m^\bullet_\sigma$ is said to have memorized the data set $\mathcal D$ if its generated samples through the flow map are close to one of the data points in $\mathcal D$. While there could be many ways of quantifying this intuition, we adopt the idea from \cite{yoon2023diffusion}, in which they proposed to classify a generated sample as memorized if it is significantly closer to its nearest training neighbor than to the second nearest. Formally, we say $\{m^\bullet_\sigma\}_{\sigma\in[\sigma_{\min},\sigma_{\max}]}$ is memorizing if for a large portion of noise samples $\bm{Z} \sim N(0,I_d)$, $d_{1\mathrm{NN}}(\Psi^\bullet_{\sigma_{\text{min}}}(\sigma_{\text{max}}\bm{Z})) < d_{2\mathrm{NN}}(\Psi^\bullet_{\sigma_{\text{min}}}(\sigma_{\text{max}}\bm{Z})/3$, where $d_{K\mathrm{NN}}$ denotes the distance to the $K$th nearest point in $\mathcal{D}$. In particular, we compute the ratio of samples satisfying this condition to quantify the trajectory-level memorization rate.

\begin{table}[t]
\centering
\caption{\textbf{Memorization Rates in Denoiser Swapping.} Memorization measured as fraction of 256 samples where $d_{1\mathrm{NN}} < d_{2\mathrm{NN}}/3$. Swapping in medium region flips behavior. }
\label{tab:swap_memorization}
\small
\begin{tabular}{lcc}
\toprule
\textbf{Condition} & \textbf{Noise Range ($\sigma$)} & \textbf{Mem. Rate} \\
\midrule
 \\
EDM-1K (default sample) & $[0.002, 80]$ & 92.2\% \\
EDM-50K (default sample)  & $[0.002, 80]$ & 0.0\% \\
\midrule
\multicolumn{3}{l}{\textit{EDM-1K $\to$ EDM-50K swap:}} \\
\quad large & $\sigma > 8.4$ & 93.0\% \\
\quad \textbf{medium} & $[0.14, 8.4]$ & \textbf{0.0\%} \\
\quad small & $\sigma < 0.14$ & 91.0\% \\
\midrule
\multicolumn{3}{l}{\textit{EDM-50K $\to$ EDM-1K swap:}} \\
\quad large & $\sigma > 8.4$ & 0.0\% \\
\quad \textbf{medium} & $[0.14, 8.4]$ & \textbf{92.2\%} \\
\quad small & $\sigma < 0.14$ & 0.0\% \\
\bottomrule
\end{tabular}
\end{table}

\paragraph{Per-Noise-Level memorization.}
Motivated by the definition above, we identify a finer-grained notion of memorization at each noise level $\sigma$.
Given a training set $\mathcal D \subset \mathbb R^d$ sampled from a population distribution $p$, a denoiser $m^\bullet_\sigma$ is said to have memorized $\mathcal D$ at a noise level $\sigma$ if for (nearly) all sample test points $x_{\text{test}} \sim p$ and a large portion of noise samples $\bm{Z} \sim \mathcal N(0,I_d)$, we have $d_{1\mathrm{NN}}(m_\sigma^\bullet(x_{\text{test}} + \sigma \bm{Z})) < d_{2\mathrm{NN}}(m_\sigma^\bullet(x_{\text{test}} + \sigma \bm{Z}))/3$. Under this notion we can examine denoiser performance at a specific noise level.

With these two metrics, we now analyze where memorization arises for \textbf{EDM-1K} and \textbf{EDM-50K}. As expected and shown in \Cref{tab:swap_memorization}, \textbf{EDM-1K} exhibits strong trajectory-level memorization (92.2\% of samples), while \textbf{EDM-50K} generalizes well (0\% memorization); we omit \textbf{EMP-1K}, which is expected to exhibit 100\% memorization. We now examine per-noise-level memorization across the three models. Figure~\ref{fig:mem_memorization_rate} shows that \textbf{EDM-50K} exhibits no memorization for any $\sigma$, whereas \textbf{EMP-1K} increasingly approaches nearest-neighbor behavior as $\sigma\!\downarrow$, reaching $\approx100\%$ for $\sigma<0.3$.
In contrast, \textbf{EDM-1K} attains its highest memorization rate in an intermediate regime (peaking at $\sim26\%$ around $\sigma\in[1.1,5.3]$) and drops to nearly $0$ memorization rate at very small $\sigma$. We include qualitative visualizations of one-step denoising outputs across different noise levels in Appendix~\ref{app:onestep_showcase}, which further illustrate this phenomenon. 
In summary, while both \textbf{EDM-1K} and \textbf{EMP-1K} exhibit strong trajectory-level memorization, the diffusion model behind \textbf{EDM-1K} in fact deviates significantly from the empirical optimal \textbf{EMP-1K}. In particular, the per-noise-level memorization behaviors are quite different from mid-to-small levels of $\sigma$s.

Finally, we would like to investigate how per-noise-level memorization affects the trajectory-level memorization. We conduct the \emph{denoiser swapping experiments} (Table~\ref{tab:swap_memorization}, see Appendix~\ref{app:denoiser_swap} for details) which show that swapping the denoiser in the medium region ($\sigma \in [0.14, 8.4]$) flips memorization behavior. 
Specifically, replacing \textbf{EDM-1K} with \textbf{EDM-50K} in the medium region drops memorization from 92.2\% to 0\%, while the reverse (swapping \textbf{EDM-50K}'s denoiser with \textbf{EDM-1K}'s in this range) increases it from 0\% to 92.2\%. In contrast, swapping in large or small regions has negligible effect.

Taken together, these results reveal a qualitative mismatch between empirical optimality (\textbf{EMP-1K}) and learned memorization (\textbf{EDM-1K}): while the empirical optimal memorizes strongly across a broad noise range, the learned diffusion model memorizes primarily in an intermediate regime. This suggests that to fully understand memorization in diffusion models, we not only need to analyze the trajectory-level behavior but also the per-noise-level contributions. In particular, if one wants to mitigate memorization, it is crucial to understand and control the denoiser behavior at per-noise-level, especially in the medium noise regime.

\section{Geometric Interpretation of Noise Regimes}\label{sec:interpretation}
Section~\ref{sec:empirical_investigation} shows that per-noise-level memorization concentrates at medium noise levels and contributes most to trajectory-level memorization {(as shown by the denoiser swapping experiments in Table~\ref{tab:swap_memorization})}. In this section, we seek to provide a geometric interpretation of this phenomenon. In particular, we characterize the three regimes and explain why the two ends are ``calm'' while the middle is ``wild''. We further identify a ``danger zone'' of memorization risk in the medium noise regime and provide a practical mitigation strategy in \Cref{sec:experiments}.

\subsection{Key Players for Per-Noise-Level Memorizations}

To analyze per-noise-level memorization we examine the training loss \Cref{eq:training loss} when $\sigma$ is fixed, which reduces to $\mathrm{MSE}_\sigma(\pd)$ (cf. \Cref{eq:denoising mse}).
Two major players arise from this loss: the empirical optimal denoiser $m_\sigma$, 
which the model attempts to fit, and the distribution of noisy training samples $\bm{X}_\sigma$, which determines the supervised region. With sufficient model capacity, the trained denoiser $m_\sigma^\theta$ can match $m_\sigma$ on this region; but if this supervision is not of sufficient coverage of test samples, the trained denoiser might not generalize beyond that. Consequently, whether this induces per-noise-level memorization depends on how these two players interact as the noise level varies.

To understand these two players, we analyze two quantities below: the \emph{posterior weight}, induced from the empirical optimal denoiser, and the so-called \emph{Gaussian shell coverage},  
induced from the distribution of training samples. Their combined behavior 

at different noise levels leads to our characterization of three distinct noise regimes (cf. \Cref{section:characterization}).

\noindent \textbf{Posterior Weight.} Recall that the functions $w_i(x,\sigma)$ defined in \Cref{eq:posterior weight}
are the posterior weights 
on the data samples $x_i$ at a noise level $\sigma$. From \Cref{eq:empirical optimal denoiser} it is clear that the posterior weights determine how close the empirical optimal denoiser is to a given data point. In particular if at a given noise level one weight is always substantially larger than the rest, the empirical optimal denoiser becomes concentrated near the corresponding data points, resulting in per-noise-level memorization of the training data. As a trained model attempts to fit the empirical optimal denoiser, it is worth investigating which noise levels yield this effect. To make this more precise, we define the following quantity.

\begin{definition}
The \emph{posterior weight for the dataset $\mathcal D = \{x_i\}_{i=1}^N$} at a noise level $\sigma$ is given by the quantity

\[
W_\sigma (\mathcal D) := \mathbb E_{\substack{\bm{X}\sim \pd\\ \bm{Z}\sim \mathcal N(0,I_d)} } \max_{1\leq i\leq N}w_i(\bm{X}_\sigma,\sigma)
\]
\end{definition} 
Note that we only take expectation over the training data. This is due to the fact that during training, the model only has access to $\bm{X}_\sigma$ where $\bm{X}$ is drawn from the training data distribution.

This definition captures the expected maximum posterior weight over all noisy samples at a given noise level sigma. This quantity can be easily estimated on a given dataset; see e.g., \Cref{fig:true-vs-maxweight}, where we graph a curve of the weight using the CIFAR-10 dataset. The figure shows an interesting sharp transition window at moderate noise levels which we aim to provide a theoretical explanation next.

First of all, consider any sample data point, and w.l.o.g, assume we consider $x_1 \in \mathcal D$. Although the maximum weight $\max_{i}w_i(x_1+\sigma \bm{Z},\sigma)$ is present in the definition above, this maximum is usually attained at moderate noise levels for $i=1$, i.e., the maximum is attained at the data point itself. See \Cref{fig:max-vs-w1-weights} for an empirical validation, where the two quantities are compared across a full noise schedule. 
For this reason we provide the following theorem that gives noise thresholds for high probability weight concentration.

\begin{theorem}
\label{thm:post_weight}
Let $x_1 \in \mathcal D$ and condition on $\bm{X} = x_1$. Let $w_1(\bm{X}_\sigma,\sigma)$ denote the posterior weight on $x_1$ at a noise level $\sigma$. Then for any $\delta \in (0,1)$ and $q\in (\frac{1}{2},1)$, with probability at least $1-\delta$, we have that: 
\begin{itemize}
\item[1.] If
$\sigma \geq \min_{K>1}\frac{d_{K\mathrm{NN}}(x_1)}{a_{K-1,\delta,q}}$,        
then $w_1(\bm{X}_\sigma,\sigma) \leq q.$
\item[2.] If
$\sigma \leq \frac{d_{2\mathrm{NN}}(x_1)}{b_{\delta,q}}$,
then $w_1(\bm{X}_\sigma,\sigma) \geq q.$

\end{itemize}
where the constants $a_{K,\delta,q}$ and $b_{\delta,q}$ are defined by 
\begin{align*}
 a_{K,\delta,q} &:= F^{-1}(\delta/K) + \sqrt{\left(F^{-1}(\delta/K)\right)^2+2\log\left(\frac{Kq}{1-q}\right)}\\
 b_{\delta, q} &:= \tilde F^{-1}(\delta/N) + \sqrt{\left(\tilde F^{-1}(\delta/N)\right)^2 + 2\log\left(\frac{Nq}{1-q}\right)}
\end{align*}
and where $F$ is the CDF of the standard normal distribution and $\tilde F := 1- F$. (Note that $d_{1\mathrm{NN}}(x_1) = 0$.)
\end{theorem}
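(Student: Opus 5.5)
Condition on $\bm X=x_1$, so $\bm X_\sigma=x_1+\sigma\bm Z$. Write the weight ratio as
\[
\frac{w_j}{w_1}=\exp\!\Big(-\frac{\|x_1+\sigma\bm Z-x_j\|^2-\|\sigma\bm Z\|^2}{2\sigma^2}\Big)=\exp\!\Big(-\frac{\|x_j-x_1\|^2}{2\sigma^2}+\frac{\langle \bm Z,\,x_j-x_1\rangle}{\sigma}\Big).
\]
Let $u_j:=(x_j-x_1)/\|x_j-x_1\|$ and $r_j:=\|x_j-x_1\|$. Then $G_j:=\langle\bm Z,u_j\rangle\sim\mathcal N(0,1)$, and the log-ratio equals
\[
\ell_j:=-\frac{r_j^2}{2\sigma^2}+\frac{r_j}{\sigma}G_j .
\]
This is a quadratic in $t_j=r_j/\sigma$. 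Both parts of the theorem reduce to controlling finitely many one-dimensional Gaussians via a union bound. The Gaussian dependence among the $G_j$ never matters, because we only use marginal tail bounds.

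\textbf{Part 2 (concentration).} We have $w_1\ge q$ iff $\sum_{j\ge2}e^{\ell_j}\le (1-q)/q$. It suffices that $\ell_j\le \log\frac{1-q}{(N-1)q}$ for every $j\ge 2$. The proof takes the slightly cruder threshold $\log\frac{1-q}{Nq}$, which explains the $N$ inside the logarithm of $b_{\delta,q}$.

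By a union bound over at most $N$ indices, with probability at least $1-\delta$ we have $G_j\le c:=\tilde F^{-1}(\delta/N)$ for all $j$. On this event, $\ell_j\le -t_j^2/2+ct_j$. This is at most $-\log(Nq/(1-q))$ whenever
\[
t_j\ge c+\sqrt{c^2+2\log(Nq/(1-q))}=b_{\delta,q},
\]
the larger root of the quadratic. Since $r_j\ge d_{2\mathrm{NN}}(x_1)$ for all $j\ge2$, the assumption $\sigma\le d_{2\mathrm{NN}}(x_1)/b_{\delta,q}$ gives $t_j\ge b_{\delta,q}$ for all $j$. The condition $q>1/2$ ensures $\log(Nq/(1-q))>0$, so the root is real and positive.

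\textbf{Part 1 (non-concentration).} Fix the minimizing $K>1$ and take the $K-1$ nearest neighbors $x_2,\dots,x_K$ (other than $x_1$), each with $r_j\le d_{K\mathrm{NN}}(x_1)$. It suffices to show $\sum_{j=2}^K e^{\ell_j}\ge (1-q)/q$.

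A naive bound asks each of the $K-1$ terms to satisfy $\ell_j\ge\log\frac{1-q}{(K-1)q}$. By a union bound over $K-1$ indices, with probability at least $1-\delta$ we have $G_j\ge -c'$ with $c':=-F^{-1}(\delta/(K-1))$. Then
\[
\ell_j\ge -t_j^2/2-c't_j ,
\]
which is decreasing in $t_j$. The requirement becomes $t_j\le$ the positive root of $t^2/2+c't=\log\frac{(K-1)q}{1-q}$, namely $-c'+\sqrt{c'^2+2\log\frac{(K-1)q}{1-q}}$.

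This matches $a_{K-1,\delta,q}$ up to the sign convention on $F^{-1}(\delta/K)$. The main obstacle is reconciling the sign convention for $F^{-1}$ and checking the logarithm's argument against the stated formula; the rest is routine. Under the sign convention where the root equals $a_{K-1,\delta,q}$, the condition $\sigma\ge d_{K\mathrm{NN}}(x_1)/a_{K-1,\delta,q}$ gives $t_j\le a_{K-1,\delta,q}$ for all $j=2,\dots,K$, which concludes.

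If the paper's quadratic has the $+F^{-1}$ form, I would instead derive it as $t^2/2-\tilde c t\le\log(\cdot)$ with $\tilde c=F^{-1}(\delta/K)<0$, which is the same event rewritten. Monotonicity in $t$ lets the worst-case neighbor distance $d_{K\mathrm{NN}}$ serve as the bound, and minimizing over $K$ is free because the bound holds for each $K$ separately.
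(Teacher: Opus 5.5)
Your proposal is correct and is essentially the paper's proof. Both write $w_j/w_1=e^{V_j}$ with $V_j$ an affine function of a standard Gaussian projection, require each relevant $V_j$ to clear a threshold (from below on the $K-1$ nearest neighbors for part 1, from above on all $j\neq 1$ for part 2), union-bound the one-dimensional Gaussian tails, and take the positive root of the resulting quadratic in $\|x_j-x_1\|/\sigma$. The one cosmetic difference is that the paper works with $N-1$ in part 2 and then relaxes to $N$ by monotonicity, which your direct threshold $\log\frac{1-q}{Nq}$ sidesteps.

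Your sign-convention worry is unfounded: with $c'=-F^{-1}(\delta/(K-1))$ your root is exactly $a_{K-1,\delta,q}$. One small correction: $-t^2/2-c't$ is not decreasing when $c'<0$ (e.g.\ $K=2$, $\delta>1/2$). The implication still holds, because by convexity $\{t\ge 0:\ t^2/2+c't\le \log\frac{(K-1)q}{1-q}\}$ is exactly $[0,a_{K-1,\delta,q}]$.
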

\begin{remark}

We consider a dataset consisting of 1K images randomly sampled from the CIFAR-10 dataset. Using the bounds given by \Cref{thm:post_weight}, over a random sample of 20 points, the average lower bound
needed to guarantee posterior concentration below $0.6$ with probability $0.95$ was $\sigma \geq 22.54$. The average upper bound needed to guarantee posterior concentration above $0.95$ with probability $0.95$ was $\sigma \leq 2.25$. This upper bound compares well with \Cref{fig:true-vs-maxweight}, where in the same setting $\sigma= 2.25$ lies close to where the curve of the average max weight reaches 0.95. 
\end{remark}
\begin{remark}
In Section \ref{sec:cosine-similarity} we use a variant of \Cref{thm:post_weight} to theoretically explain the sharp alignment between the optimal and conditional vector fields observed in high dimensional flow matching settings by \cite{bertrand2025closed}. 
\end{remark}

\noindent \textbf{Gaussian Shell Coverage.}\label{Gaussianshells}
\cite{song2025selective} proposes using Gaussian shells around training data points to model supervised regions during training. Following their intuition, we will use a theoretically tighter bound of Gaussian shells (cf. \Cref{lem:one_shell_lemma_main}) and propose a novel notion of data coverage utilizing the supervised regions via Gaussian shells.
The following high dimensional concentration result is key to our analysis in this paper (see proof in \Cref{sec:Gaussian shell theory}).
\begin{lemma}[Gaussian concentration \cite{laurent_massart_2000}]
\label{lem:one_shell_lemma_main}
Let $d \ge 2$ and fix $c>0$. Define
$r^{\mathrm{in}}_{c,d} := \sqrt{d - 2\sqrt{cd}}, 
$ and 
$r^{\mathrm{out}}_{c,d} := \sqrt{d + 2\sqrt{cd} + 2c}$.
     Let $\bm Z\sim\mathcal N(0,I_d)$. Then, 
$\mathbb P\big(\|\bm Z\|\in[r^{\mathrm{in}}_{c,d},r^{\mathrm{out}}_{c,d}]\big)\ge 1-2e^{-c}.$
\end{lemma}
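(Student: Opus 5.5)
The statement is a two-sided tail bound for the $\chi^2_d$ variable $\|\bm Z\|^2=\sum_{j=1}^d \bm Z_j^2$. I would prove each tail separately with a Chernoff argument and then take a union bound. Squaring the radii, the event $\|\bm Z\|\notin[r^{\mathrm{in}}_{c,d},r^{\mathrm{out}}_{c,d}]$ is contained in the union of
\[
\bigl\{\|\bm Z\|^2-d\ge 2\sqrt{cd}+2c\bigr\}\quad\text{and}\quad \bigl\{d-\|\bm Z\|^2\ge 2\sqrt{cd}\bigr\}.
\]
It therefore suffices to show that each event has probability at most $e^{-c}$. These are exactly the Laurent--Massart inequalities with $x=c$ and all weights equal to one.

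\textbf{Cumulant bounds.} Write $Y_j=\bm Z_j^2-1$, so that $\log\mathbb E e^{tY_j}=-t-\tfrac12\log(1-2t)$ for $t<1/2$. Expanding $-\tfrac12\log(1-2t)=\sum_{k\ge1}(2t)^k/(2k)$ gives two estimates:
\begin{itemize}
\item for $0<t<1/2$, the cumulant is at most $t^2/(1-2t)$, by bounding the series termwise by a geometric series;
\item for $t<0$, the cumulant is at most $t^2$, since $-u-\tfrac12\log(1-2u)\le u^2$ for $u\le0$, which can be checked by differentiating.
\end{itemize}
By independence, $\log\mathbb E e^{t(\|\bm Z\|^2-d)}\le d\,t^2/(1-2t)$ in the first case and $\le d\,t^2$ in the second.

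\textbf{Lower tail.} Markov's inequality with $t=-s$ gives
\[
\mathbb P\bigl(d-\|\bm Z\|^2\ge u\bigr)\le \exp\bigl(ds^2-su\bigr).
\]
Optimizing at $s=u/(2d)$ yields $e^{-u^2/(4d)}$, and $u=2\sqrt{cd}$ gives exactly $e^{-c}$.

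\textbf{Upper tail.} We get
\[
\mathbb P\bigl(\|\bm Z\|^2-d\ge u\bigr)\le\inf_{0<t<1/2}\exp\bigl(dt^2/(1-2t)-tu\bigr).
\]
This is the one mildly delicate step. The infimum of $dt^2/(1-2t)-tu$ over $t$ is the Legendre-type transform of a sub-gamma cumulant. I would not solve the optimization exactly. Instead I would plug in the choice
\[
t=\tfrac12\Bigl(1-\sqrt{d/(d+2u)}\Bigr)
\]
and verify that the exponent equals $-\bigl(\sqrt{d+2u}-\sqrt d\bigr)^2/4$. Equivalently, the infimum equals $-x$ whenever $u=2\sqrt{dx}+2x$. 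Substituting $x=c$ gives the bound $e^{-c}$. This is the standard sub-gamma inversion, and checking the algebra here is the main obstacle; everything else is routine.

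\textbf{Conclusion.} The union bound gives failure probability at most $2e^{-c}$, which is the claim. One edge case remains: if $d<2\sqrt{cd}$, then $r^{\mathrm{in}}_{c,d}$ is not real. In that case the lower event is interpreted as vacuous (take $r^{\mathrm{in}}=0$), and the bound still holds, indeed with only one $e^{-c}$ term. I would note this convention explicitly. Alternatively, since the lemma is attributed to \cite{laurent_massart_2000}, the proof can simply cite their Lemma 1 for the two tail bounds and give only the union-bound step above.
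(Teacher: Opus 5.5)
Your proposal is correct and follows the same route as the paper: reduce to the two $\chi^2_d$ tail events for $\|\bm Z\|^2$, bound each by $e^{-c}$ using the Laurent--Massart inequalities, then take a union bound and square roots. The paper simply cites those two tail bounds, whereas you rederive them by Chernoff arguments (your cumulant estimates and the choice $t=\tfrac12(1-\sqrt{d/(d+2u)})$ check out), and you also flag the case $d<4c$, where $r^{\mathrm{in}}_{c,d}$ is undefined and which the paper's ``taking square roots'' step silently ignores.
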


This result states that $Z\sim \mathcal{N}(0,I_d)$ will concentrate in a thin shell around the origin with radius $\approx\sqrt{d}$ and thickness controlled by $c$. If we let $c=5$, then the concentration probability is at least 0.9865. So we will simply let $c=5$ through this paper unless otherwise stated.

Based on the concentration result, for any $x \in \mathbb{R}^d$ and $\sigma>0$, we define the
\emph{Gaussian shell} centered at $x$ by
\begin{equation}\label{eq:Gaussian shell}
    \sh{\sigma}{x}
:= \Bigl\{ x' \in \mathbb{R}^d : 
\sigma r^{\mathrm{in}}_{c,d} \le \|x' - x\| \le \sigma r^{\mathrm{out}}_{c,d} \Bigr\}.
\end{equation}

 As a direct consequence of \Cref{lem:one_shell_lemma_main}, we have that 
\[\mathbb P\big(\bm{X}_\sigma\in \sh{\sigma}{x}\mid \bm{X}=x\big)\ge\ 1-2e^{-c}.\]
This shows that a single Gaussian shell provides a high-probability description of where a noisy data point lies. Based on the notion of Gaussian shells, we next introduce the notion of data coverage.

\begin{definition}[Gaussian Shell Coverage]\label{def:data-coverage}
Suppose $\mathcal D= \{x_i\}_{i=1}^N$ is a finite dataset sampled from the data distribution $p$. We define the \emph{Gaussian shell coverage} of the distribution $p$ relative to $\mathcal D$ at the noise level $\sigma$ as follows:
\[
C_\sigma(p,\mathcal{D}) :=\mathbb P\left(\bm{X}_\sigma \in \bigcup_{i=1}^N \sh{\sigma}{x_i}\right), \quad\text{where }\bm{X}\sim p
\]
\end{definition}
The quantity approximately quantifies the likelihood that a diffusion model encounters a noisy test $x_\text{test} + \sigma z$ during training. In other words, $C_\sigma$ measures how large a portion of the underlying ground-truth distribution is covered by the training set after the convolution $p_\sigma=p*\mathcal{N}(0,\sigma^2I)$. For brevity, in the remainder of the paper we will refer to the Gaussian shell coverage $C_\sigma(p,\mathcal{D})$ simply as ``coverage” when no confusion can arise.
Note that while in applications the data distribution $p$ is not known, this quantity can be empirically estimated if a test dataset is available; see \Cref{fig:true-vs-maxweight} where we estimate $C_\sigma$ for the CIFAR-10 dataset.

\Cref{fig:true-vs-maxweight} also exhibits an interesting sharp phase transition for the coverage as the noise level varies. 
To provide a theoretical understanding why this happens, we find it better to treat the dataset as random and study the coverage in expectation.
Interestingly, it turns out that controlling this coverage is closely related to the probability mass of intersection between just two shells. Below we provide a clean description of such a probability mass.
\begin{definition}
    Let $e_1=[1,0,\cdots,0]^T\in\R^d$ and define
\[
\Phi_{d,c}\left(t\right)
:=
\mathbb P\!\left(
\|\bm Z\|,\left\|\bm Z+t e_1\right\|\in[r^{\mathrm{in}}_{c,d},r^{\mathrm{out}}_{c,d}]
\right).
\]
\end{definition}
The function $\Phi_{d,c}(t)$ measures the probability of a Gaussian noisy observation for one data point lying also in the Gaussian shell over a data point at distance $t$ and hence roughly represents the ``probability of seeing a noisy data in the intersection of two Gaussian shells".
The following result shows that the coverage is closely controlled by $\Phi_{d,c}$. 

\begin{theorem}
\label{thm:shell-coverage}
Let $\bm{X}, \bm{X}_1, \dots, \bm{X}_N \overset{\text{i.i.d.}}{\sim} p$, and let $\bm{Z} \sim \mathcal N(0,I_d)$ be independent of $(\bm{X},\bm{X}_1,\dots,\bm{X}_N)$. Fix a noise level $\sigma>0$. Define the union of shells
$\mathcal U_\sigma := \bigcup_{i=1}^N \sh{\sigma}{\bm{X}_i}.$ Then the coverage probability satisfies
\[
\begin{aligned}
\mathbb P(\bm{X}_\sigma \in \mathcal U_\sigma)
&\ge
\mathbb E\!\left[
\Phi_{d,c}\!\left(\frac{d_{1\mathrm{NN}}(\bm X)}{\sigma}\right)
\right],\\
\mathbb P(\bm{X}_\sigma \in \mathcal U_\sigma)
&\le
2e^{-c}
+
N\,\mathbb E\!\left[
\Phi_{d,c}\!\left(\frac{\|\bm{X}-\bm{X}'\|}{\sigma}\right)
\right]
\end{aligned}
\]
where  $\bm{X}'$ is an independent copy of $\bm{X}$.
\end{theorem}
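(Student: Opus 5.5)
Both inequalities come from conditioning on $\bm X$ and the training points, which reduces everything to statements about a single Gaussian $\bm Z$ and at most two shells. Throughout I use rotational invariance of $\mathcal N(0,I_d)$: for any fixed $u\in\R^d$,
\[
\mathbb P\bigl(\|\bm Z\|\in[r^{\mathrm{in}}_{c,d},r^{\mathrm{out}}_{c,d}],\ \|\bm Z+u\|\in[r^{\mathrm{in}}_{c,d},r^{\mathrm{out}}_{c,d}]\bigr)=\Phi_{d,c}(\|u\|).
\]
This holds because a rotation taking $u/\|u\|$ to $e_1$ preserves the law of $\bm Z$ and both norms.

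\textbf{Lower bound.}
1. Condition on $\bm X=x$ and $\bm X_i=x_i$.
2. Let $j$ be the index with $\|x-x_j\|=d_{1\mathrm{NN}}(x)$, where the nearest neighbour is taken in the random dataset $\{\bm X_i\}$.
3. Since $\mathcal U_\sigma\supseteq \sh{\sigma}{x_j}$, we get $\mathbb P(\bm X_\sigma\in\mathcal U_\sigma\mid\cdot)\ge \mathbb P(\|x+\sigma\bm Z-x_j\|\in\sigma[r^{\mathrm{in}}_{c,d},r^{\mathrm{out}}_{c,d}])$.
4. This is at least the probability of the intersection with the event $\|\bm Z\|\in[r^{\mathrm{in}}_{c,d},r^{\mathrm{out}}_{c,d}]$.
5. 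Writing $\|x+\sigma\bm Z-x_j\|=\sigma\|\bm Z+(x-x_j)/\sigma\|$ and applying the invariance with $u=(x-x_j)/\sigma$ gives exactly $\Phi_{d,c}(d_{1\mathrm{NN}}(x)/\sigma)$.
6. Taking expectation over $(\bm X,\bm X_1,\dots,\bm X_N)$ yields the first inequality.

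Dropping to the intersection in step 4 is deliberate: it is what turns the single-shell probability into the symmetric function $\Phi_{d,c}$.

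\textbf{Upper bound.}
1. Let $A=\{\|\bm Z\|\in[r^{\mathrm{in}}_{c,d},r^{\mathrm{out}}_{c,d}]\}$. Split
\[
\mathbb P(\bm X_\sigma\in\mathcal U_\sigma)\le \mathbb P(A^c)+\mathbb P(\{\bm X_\sigma\in\mathcal U_\sigma\}\cap A).
\]
2. By \Cref{lem:one_shell_lemma_main}, $\mathbb P(A^c)\le 2e^{-c}$.
3. Apply a union bound over $i$ to the second term:
\[
\mathbb P(\{\bm X_\sigma\in\mathcal U_\sigma\}\cap A)\le\sum_{i=1}^N \mathbb P\bigl(A\cap\{\bm X+\sigma\bm Z\in \sh{\sigma}{\bm X_i}\}\bigr).
\]
4. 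Conditionally on $\bm X,\bm X_i$, each summand equals $\Phi_{d,c}(\|\bm X-\bm X_i\|/\sigma)$ by the same rescaling and rotation argument.
5. Since $(\bm X,\bm X_i)$ has the law of $(\bm X,\bm X')$, each summand has expectation $\mathbb E[\Phi_{d,c}(\|\bm X-\bm X'\|/\sigma)]$, and summing the $N$ identical terms gives the stated bound.

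\textbf{Main subtlety.} The only delicate point is the invariance identity itself. It uses that $\Phi_{d,c}(t)$ depends only on $|t|$, which follows from the symmetry $\bm Z\mapsto-\bm Z$ and rotations. It also needs the shift $u$ to be independent of $\bm Z$, which holds here because $\bm Z$ is independent of $(\bm X,\bm X_1,\dots,\bm X_N)$.

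I would also note measurability of $d_{1\mathrm{NN}}(\bm X)$, which is clear because it is a minimum of finitely many continuous functions. The case $\bm X=\bm X_j$ is harmless, since it gives $\Phi_{d,c}(0)$.
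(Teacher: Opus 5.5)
Your proof is correct and follows the paper's argument. For the lower bound you restrict to the intersection of the self-shell $\sh{\sigma}{\bm X}$ with the nearest-neighbour shell and evaluate it by rotation invariance; for the upper bound you split off $\{\bm X_\sigma\notin\sh{\sigma}{\bm X}\}$ (probability at most $2e^{-c}$) and union-bound over the $N$ two-shell intersections. The only cosmetic difference is that you condition on the whole sample $(\bm X,\bm X_1,\dots,\bm X_N)$ rather than on $(\bm X,\bm X_{i^\star})$, which works equally well since $\bm Z$ is independent of all of it.
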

The two bounds in the theorem are governed by the distribution of pairwise distances in the data as well as the ambient dimension $d$ reflected by $\Phi_{d,c}$. The lower bound actually provides a very accurate estimation empirically and one may directly use it as a proxy to estimate the coverage; see \Cref{fig:shell-coverage-cifar10} for experimental results on CIFAR-10.

\subsection{Characterization of the Noise Regimes}\label{section:characterization}

With the two quantities of posterior weight and data coverage defined and explored, we are ready to characterize the noise regimes. 
In \Cref{fig:true-vs-maxweight} we plot two curves of the two quantities against noise levels, based on a 1k subset of the CIFAR-10 dataset. 

Note that in \Cref{fig:mem_memorization_rate}, 
\textbf{EDM-1K} exhibits a positive memorization ratio for noise levels range $\sigma\in[0.6,12]$, which closely aligns with the transition interval observed in \Cref{fig:true-vs-maxweight}. 
Based on all these observations, we now informally identify the three noise regimes as follows using the two quantities:

\begin{figure}[t]
    \centering
    \includegraphics[width=0.85\linewidth]{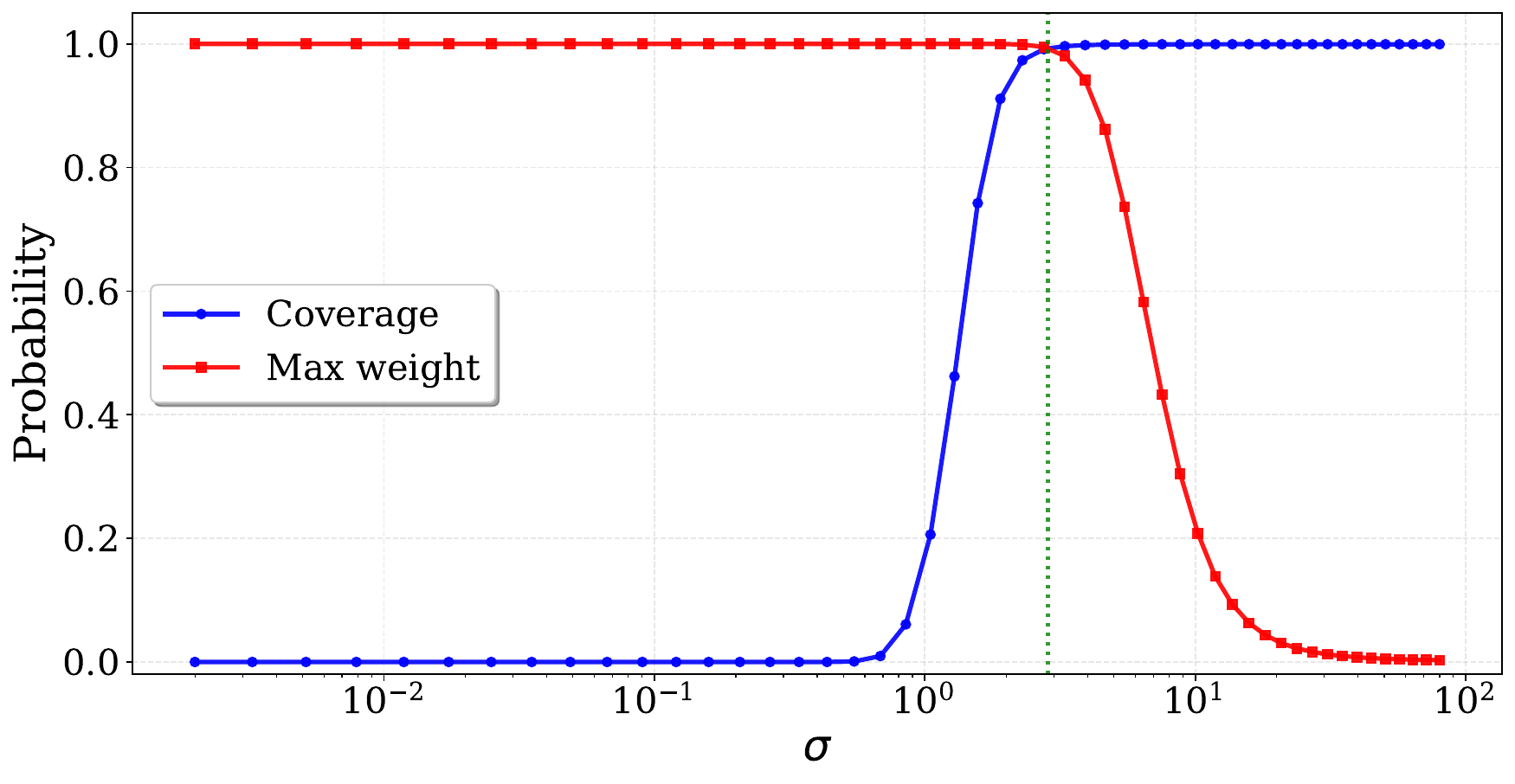}
    \caption{\textbf{Gaussian shell coverage and max posterior weight on CIFAR-10.}
We plot empirical estimates of the Gaussian shell coverage $C_\sigma(p,\mathcal D)$ and the max posterior weight $W_\sigma(\mathcal D)$ as functions of the noise level $\sigma$, using a $1\mathrm{k}$ CIFAR-10 training subset and $1\mathrm{k}$ held-out test images. }\vspace{-0.1in}
    \label{fig:true-vs-maxweight}
\end{figure}

\begin{center}
\small
\setlength{\tabcolsep}{4pt}
\begin{tabular}{lcc}
\toprule
Regime & Posterior weight & Shell coverage \\
\midrule
Small noise  & High            & Low  \\
Medium noise & Phase transition & Phase transition \\
\ \ -Danger zone-  & High            & High \\
Large noise  & Low             & High \\
\bottomrule
\end{tabular}
\end{center}

While informal this description may serve as a guiding principal for classifying training dynamics across a full noise schedule. Precise estimation of noise ranges requires further work, but our \Cref{thm:post_weight} and \Cref{thm:shell-coverage} serve as a starting point. In what follows, we provide rationales for this characterization and further analysis within each regime.

\subsubsection{Small Noise Regime}\label{sec:small sigma}

The small noise (small $\sigma$) regime is characterized by high posterior weight and low coverage.
This is where the empirical optimal denoiser behaves almost like a nearest neighbor map. If a trained model well approximates the empirical optimal one, it will absolutely produce memorized training data and hence it is natural to suggest mitigation for memorization in this region~\cite{wan2025elucidating,baptista2025memorization}. However, although the empirical optimal denoiser exhibits strong memorization behavior (which is almost locally constant), the trained denoiser is unlikely to fit the empirical optimal denoiser since the supervision region during training, described by shell coverage, is very limited.

We have empirically observed in Figure~\ref{fig:mem_memorization_rate} that the small noise regime is quite safe from memorization risk in per-noise-level: while the empirical optimal denoiser exhibits nearly 100\% memorization for $\sigma < 0.3$, the learned model (\textbf{EDM-1K}) shows minimal memorization in this regime. We also validate in the swapping experiment in \Cref{tab:swap_memorization} that the small noise regime in a trained diffusion model does not contribute to memorization. 

Within the small noise regime, when $\sigma$ is small enough (e.g., $<0.2$ for the CIFAR-10), all Gaussian shells around data points will be disjoint. In this setting, we have the following description of the optimal solution to the training objective. See Theorem~\ref{thm:inf-many-minimizers-disjoint-shell} for the complete statement and proof.

\begin{theorem}[Informal theorem]
\label{thm:inf-many-minimizers-disjoint-shell-informal}
Suppose the training is only restricted to the union of shells $\bigcup_{i=1}^N \sh{\sigma}{x_i}$, which are also pairwise disjoint for small $\sigma$. Then, there are infinitely many global minimizers of the denoising objective sending $\sh{\sigma}{x_i}$ to $x_i$ for $i=1,\ldots,N$.
\end{theorem}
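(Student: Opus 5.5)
The plan is to formalize the restricted objective and then check directly that both the minimum value and the minimizers are as stated. Restrict the fixed-$\sigma$ loss to the shells:
\[
\mathcal L_\sigma^{S}(f) := \frac1N\sum_{i=1}^N \mathbb E_{\bm Z}\Bigl[\|f(x_i+\sigma\bm Z)-x_i\|^2\,\mathbf 1\{x_i+\sigma\bm Z\in \sh{\sigma}{x_i}\}\Bigr],
\]
where $f$ ranges over measurable maps $\R^d\to\R^d$. An equivalent formulation conditions each term on the event $\{\bm X_\sigma\in\sh{\sigma}{x_i}\}$; this event has probability at least $1-2e^{-c}>0$ by \Cref{lem:one_shell_lemma_main}.

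The first step is to fix the small-noise regime quantitatively. Two shells $\sh{\sigma}{x_i}$ and $\sh{\sigma}{x_j}$ are contained in the balls of radius $\sigma r^{\mathrm{out}}_{c,d}$ around their centers, so they are disjoint once $2\sigma r^{\mathrm{out}}_{c,d}<\|x_i-x_j\|$. Hence all shells are pairwise disjoint whenever
\[
\sigma<\min_i d_{2\mathrm{NN}}(x_i)/(2r^{\mathrm{out}}_{c,d}).
\]
A finer condition that also uses $r^{\mathrm{in}}$ is possible, but it is not needed.

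The second step is a pointwise lower bound. The loss is nonnegative, and it equals $0$ exactly when $f(x')=x_i$ for almost every $x'\in\sh{\sigma}{x_i}$ (with respect to the Gaussian law, equivalently Lebesgue measure on the shell), for every $i$. Disjointness is used here: a point $x'$ belongs to at most one shell, so the $N$ prescriptions never conflict. Consequently, any map of the form
\[
f(x')=x_i \text{ on } \sh{\sigma}{x_i},\qquad f \text{ arbitrary on } \R^d\setminus\bigcup_i \sh{\sigma}{x_i}
\]
attains the value $0$ and is a global minimizer. Conversely, every global minimizer must send each shell to its center almost everywhere.

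The third step is to show there are infinitely many such minimizers. The complement $\R^d\setminus\bigcup_i\sh{\sigma}{x_i}$ has nonempty interior, for instance the open ball of radius $\sigma r^{\mathrm{in}}_{c,d}$ around each $x_1$, as well as the unbounded exterior region. Any two maps that differ on a positive-measure subset of it are distinct minimizers, for example $f_v := x_i$ on the shells and $v$ elsewhere, for each $v\in\R^d$. To keep the result meaningful for neural networks, I would also exhibit infinitely many \emph{continuous}, and even smooth, minimizers. Take a smooth partition-of-unity bump $\phi_i$ equal to $1$ on $\sh{\sigma}{x_i}$, with the supports of the $\phi_i$ disjoint; this is possible because the shells are compact and at positive distance from each other. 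Then set $f=\sum_i \phi_i x_i+(1-\sum_i\phi_i)g$ for an arbitrary smooth $g$. In particular the empirical optimal denoiser $m_\sigma$ itself is not singled out: it does not even equal $x_i$ on the shells, though it is close to $x_i$ there. So the infinitely many minimizers include maps far from nearest-neighbor behavior off the shells. That is the interpretation: off-shell behavior is unconstrained.

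The main obstacle is the precise meaning of ``restricted to the shells.'' If it means the conditioned objective, the minimizers are exactly as above. If it means that the training noise itself is truncated, the statement is the same. I would state the theorem with the indicator form and remark on the conditional variant. A subtle point is measurability and the almost-everywhere qualifier, which I would handle by stating uniqueness only up to null sets on the shells.
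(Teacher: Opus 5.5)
Your proposal is correct and follows the paper's argument. The paper idealizes the noise as uniform on the unit shell instead of your truncated Gaussian, which yields the same minimizer set since both laws are equivalent to Lebesgue measure on the shell; it then notes that the loss is nonnegative, that any map sending each disjoint shell $\sh{\sigma}{x_i}$ to $x_i$ attains zero loss, and that such a map is arbitrary off the union. Your additions (the explicit threshold $\sigma<\min_i d_{2\mathrm{NN}}(x_i)/(2r^{\mathrm{out}}_{c,d})$, the almost-everywhere converse, and the smooth bump-function minimizers) are valid refinements that the paper does not include.
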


In the small-$\sigma$ regime, training supervision is concentrated within the small region consisting of the union of Gaussian shells. Then by Theorem~\ref{thm:inf-many-minimizers-disjoint-shell-informal}, the training objective alone does not constrain the denoiser outside this small region, so extrapolation behavior is largely determined by the model’s inductive bias.
In one dimension, shallow network denoisers have been shown to interpolate the data with piecewise linear behavior \citep{zeno2023minimum}.
In high dimensional and deep network settings, this behavior may not hold. Below we try to provide some evidence that certain locality inductive bias will appear for image datasets.

For an image data such as CIFAR-10, we hypothesize that the locality increases when $\sigma$ decreases: each output pixel depends mainly on a local patch. The task therefore reduces to patch denoising in a much lower-dimensional subspace, where the effective training data, the extracted patches, are plentiful, making the problem easier. This aligns with empirical findings in~\citep{kamb2025an,lukoianov2025locality,niedoba2024towards}: trained neural denoisers exhibit shrinking receptive fields as the noise level decreases.
\cite{kamb2025an,lukoianov2025locality,wang2025seeds} finds the optimal denoiser under locality bias and \cite{lukoianov2025locality} connects the localization to the data statistics. 
However, it was not clear what affects the spatial decay rate of the sensitivity. To study this analytically, we model the data as a Gaussian with circulant covariance---a natural idealization, since natural images exhibit approximately translation-invariant second-order statistics and stationary Gaussian models are a standard tool in their analysis~\citep{field1987relations,simoncelli2001natural}. We propose the following theorem; formal statements and proofs are in Appendix~\ref{app:sensitivity-localization}.

\begin{theorem}[Informal theorem]
For a cyclic stationary Gaussian data distribution, at any noise level $\sigma$, the sensitivity between two pixels is bounded by $\frac{T_\sigma}{m}$, where $m$ is their wrap-around distance and $T_\sigma$ is a constant depending only on the data spectrum and $\sigma$. A more concentrated spectrum leads to faster decay of the sensitivity.
\end{theorem}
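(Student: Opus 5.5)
The informal statement concerns a cyclic stationary Gaussian data distribution. We take pixels indexed by $\mathbb Z_d$ and set $\bm X\sim\mathcal N(\mu,\Sigma)$ with $\Sigma$ circulant. For such data the optimal denoiser is exactly linear,
\[
m_\sigma(x)=\mu+\Sigma(\Sigma+\sigma^2 I)^{-1}(x-\mu),
\]
so the sensitivity between pixels $j$ and $k$ is the entry $J_{jk}$ of the Jacobian $J_\sigma=\Sigma(\Sigma+\sigma^2I)^{-1}$. Circulant matrices are simultaneously diagonalized by the discrete Fourier transform. If $\lambda_0,\dots,\lambda_{d-1}$ denotes the spectrum of $\Sigma$, then $J_\sigma$ is circulant with eigenvalues $h_\ell:=\lambda_\ell/(\lambda_\ell+\sigma^2)\in[0,1)$, and
\[
J_{jk}=g(j-k),\qquad g(n)=\frac1d\sum_{\ell=0}^{d-1}h_\ell\, e^{2\pi i \ell n/d}.
\]
The claim therefore reduces to a decay estimate on the inverse DFT of the filter $h$.

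The key step is discrete summation by parts (Abel summation), the discrete analogue of the bound $|\hat f(n)|\le \mathrm{TV}(f)/|n|$. Multiply $g(n)$ by $(1-e^{2\pi i n/d})$ and shift the summation index cyclically. This gives
\[
(1-e^{2\pi i n/d})\,g(n)=\frac1d\sum_\ell (h_\ell-h_{\ell-1})e^{2\pi i \ell n/d},
\]
hence
\[
|g(n)|\le \frac{\frac1d\sum_\ell|h_\ell-h_{\ell-1}|}{|1-e^{2\pi i n/d}|}.
\]
With $m=\min(|n|,d-|n|)$ the wrap-around distance, we have $|1-e^{2\pi i n/d}|=2\sin(\pi m/d)\ge 4m/d$. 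This yields
\[
|J_{jk}|\le \frac{T_\sigma}{m},\qquad T_\sigma:=\frac14\sum_{\ell}|h_\ell-h_{\ell-1}|,
\]
the cyclic total variation of the shrinkage filter, up to constants. This $T_\sigma$ depends only on the spectrum and on $\sigma$, as the statement requires.

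Finally, the constant has to be related to spectral concentration. The map $\lambda\mapsto\lambda/(\lambda+\sigma^2)$ is monotone, with derivative $\sigma^2/(\lambda+\sigma^2)^2\le 1/(4\lambda)$ at its peak, and it is $1$-Lipschitz in $\log\lambda$ after scaling. Hence $T_\sigma$ is bounded both by the total variation of the spectrum $\lambda_\ell$ and by the number of monotone runs of $\lambda$ across the transition level $\lambda\approx\sigma^2$. This is where the assertion that ``a more concentrated spectrum leads to faster decay'' comes from. For example, a spectrum that is unimodal in frequency (as for natural images, with power concentrated at low frequencies) has $T_\sigma\le \tfrac12\cdot\max_\ell h_\ell<\tfrac12$, because $h$ rises and falls only once. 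More generally, the fewer frequencies at which $\lambda_\ell$ crosses $\sigma^2$, the smaller $T_\sigma$ is.

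I expect this last part to be the main obstacle. The formal statement must make ``concentrated'' precise, presumably via the variation of $h$ or the number of monotone pieces, and I would state it in those terms. The earlier steps are routine Fourier and Abel-summation calculations. One could also sum by parts twice to get $1/m^2$ decay under a second-difference bound, but the $1/m$ bound stated in the theorem only needs a single summation.
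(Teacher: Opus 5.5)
Your proposal is correct and follows the paper's proof: diagonalize the circulant Jacobian by the DFT, apply cyclic summation by parts, and use $|1-\omega^n|=2\sin(\pi n/d)\ge 4n/d$ to obtain the bound with $T_\sigma=\mathrm{TV}(h)/4$. The only cosmetic difference is that the paper first writes $\Sigma(\Sigma+\sigma^2I)^{-1}=I-\sigma^2(\Sigma+\sigma^2I)^{-1}$ and works with $h_k=\sigma^2/(\lambda_k+\sigma^2)$; this has the same off-diagonal entries up to sign and the same total variation as your $\lambda_k/(\lambda_k+\sigma^2)$.

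The paper itself leaves ``concentrated spectrum'' informal and expresses it only through $\mathrm{TV}(h)$, so your closing remarks go beyond what it proves. In those remarks, bounding $\mathrm{TV}(h)$ by the variation of $\lambda$ needs the factor $1/\sigma^2$, which is the Lipschitz constant of $\lambda\mapsto\lambda/(\lambda+\sigma^2)$. The crossing-count heuristic is not a bound on its own.
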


\subsubsection{Large Noise Regime}\label{early_stage}
We characterize the large noise regime by low posterior weight and high coverage. The latter implies that the model $m_\sigma^\theta$ can learn the empirical optimal on a region that covers most of the mass of $p_\sigma$, where inference takes place. In this regime, the empirical optimal denoiser does not exhibit per-noise-level memorization behavior, and hence even if the trained denoiser closely approximates the empirical optimal denoiser, per-noise-level memorization risk remains low. 

In fact, we can provide some more refined description of the empirical optimal denoiser within this regime.

It is known that when $\sigma$ is large, the denoiser $m_\sigma$ will be close to the mean $\mu$ of the data distribution $p$ \cite{wan2025elucidating}. Empirically, it has also been observed that $m_\sigma$ behaves like a linear map when $\sigma$ is large \cite{li2024understanding}. In what follows, we reconcile these two observations by establishing the limiting behavior of $m_\sigma$ when $\sigma$ is large.

Let $\Sigma$ denote the covariance of $p$, {and recall that $\mu$ is the mean of $p$.} Consider now the Gaussian distribution $\mathcal{N}(\mu,\Sigma)$ {that matches the mean and covariance of $p$. The denoiser $m_\sigma^G$ of this Gaussian distribution} is linear and has the closed form \cite{li2024understanding}:
\begin{equation}\label{eq:Gaussian closed form}
    m_\sigma^G(x)=\mu+\Sigma(\Sigma + \sigma^2 I)^{-1}(x-\mu).
\end{equation}
Note that when $\sigma$ is large, the linear term will be small and hence $m_\sigma^G\approx\mu$. 
It turns out that this optimal linear denoiser for Gaussian captures the limiting behavior of $m_\sigma$ when $\sigma$ is large (see \Cref{sec:large noise taylor} for a proof).

\begin{theorem}\label{thm:gauss-plus-excess}
Assume $\supp(p)\subset B_R(0)$, where $B_R(0)$ is the open ball of radius $R$ centered at the origin.
Then, for any $\delta\in(0,1)$ there exists a bounded continuous function
$H$ on $\overline{B_R(0)}\times[-\delta,\delta]$ such that
\begin{equation}\label{eq:gauss-plus-excess-sigma}
m_\sigma(x)
=
m_\sigma^G(x)
+H(x,\sigma)(1+\sigma)^{-2}.
\end{equation}
\end{theorem}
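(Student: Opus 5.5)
The plan is to \emph{define} $H$ by the identity \eqref{eq:gauss-plus-excess-sigma} wherever both denoisers make sense, and then check the two required properties, boundedness and continuity, on $\overline{B_R(0)}\times[-\delta,\delta]$. The key observation is that $m_\sigma$ and $m^G_\sigma$ depend on $\sigma$ only through $\sigma^2$. Indeed, $m_\sigma(x)=\int y\,e^{-\|x-y\|^2/2\sigma^2}\,dp(y)\big/\int e^{-\|x-y\|^2/2\sigma^2}\,dp(y)$, and \Cref{eq:Gaussian closed form} involves only $\sigma^2$. So for $\sigma\in[-\delta,\delta]\setminus\{0\}$ I would set
\[
H(x,\sigma):=(1+\sigma)^{2}\bigl(m_{|\sigma|}(x)-m^G_{|\sigma|}(x)\bigr).
\]
This is well defined because $\delta<1$ keeps $(1+\sigma)^2\in[(1-\delta)^2,(1+\delta)^2]$ away from $0$. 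By construction \eqref{eq:gauss-plus-excess-sigma} then holds for every admissible positive $\sigma$.

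\textbf{Boundedness.} Since $\supp(p)\subset B_R(0)$, the posterior mean $m_\sigma(x)$ is an average of points of $\supp(p)$, so $\|m_\sigma(x)\|\le R$ for all $x,\sigma$. For the Gaussian denoiser, $\Sigma\succeq 0$ gives $\|\Sigma(\Sigma+\sigma^2I)^{-1}\|_{op}\le 1$, and also $\|\mu\|\le R$. Hence for $\|x\|\le R$,
\[
\|m^G_\sigma(x)\|\le \|\mu\|+\|x-\mu\|\le 3R.
\]
Together with $(1+\sigma)^2\le 4$, this gives $\|H\|\le 16R$ uniformly.

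\textbf{Continuity away from $\sigma=0$.} The integrands $y\,e^{-\|x-y\|^2/2\sigma^2}$ and $e^{-\|x-y\|^2/2\sigma^2}$ are jointly continuous in $(x,\sigma)$ for $\sigma\neq0$. They are dominated on compact subsets of $\overline{B_R(0)}\times([-\delta,\delta]\setminus\{0\})$ by constants, because $\|x-y\|\le 2R$ on the support. The denominator is bounded below by $e^{-2R^2/\sigma^2}>0$. Dominated convergence therefore makes $m_{|\sigma|}(x)$ continuous there. The map $\sigma\mapsto(\Sigma+\sigma^2I)^{-1}$ is continuous for $\sigma\ne0$, and also at $\sigma=0$ when $\Sigma$ is invertible. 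So $H$ is continuous on $\overline{B_R(0)}\times([-\delta,\delta]\setminus\{0\})$.

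\textbf{The slice $\sigma=0$ (main obstacle).} At $\sigma=0$ one needs a value $H(x,0)$ and joint continuity there. The Gaussian part is harmless when $\Sigma\succ0$, since $m^G_\sigma(x)\to x$. The empirical part is delicate. By the Laplace principle, as $\sigma\to0$ the posterior concentrates on the nearest points of $\supp(p)$ to $x$. So $m_\sigma(x)\to\mathrm{proj}_{\supp(p)}(x)$ wherever the projection is unique and the local mass is regular.

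My first attempt would be to prove this limit uniformly on compact sets of $x$ with unique projection, and set $H(x,0):=\mathrm{proj}_{\supp(p)}(x)-x$. This yields joint continuity on that set. I expect it to fail exactly on the medial axis of $\supp(p)$, where ties make the limit depend on the direction of approach. For a finite $\pd$, these are the Voronoi boundaries, and there continuity at $\sigma=0$ cannot hold as stated. Consequently the step I would scrutinize most is whether the intended statement really has $\sigma\in[-\delta,\delta]$ in the second slot. The natural large-noise result instead puts a small \emph{inverse} noise parameter there. In that reading, the boundedness and dominated-convergence arguments above carry over verbatim, combined with a second-order Taylor expansion of the Gaussian kernel in $1/\sigma^2$. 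That expansion supplies the $(1+\sigma)^{-2}$ rate by matching the first two moments of $p$.
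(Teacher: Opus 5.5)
Your reading of the statement is correct. Taken literally, with $\sigma$ itself ranging over $[-\delta,\delta]$, the claim is false. The identity forces $H=(1+\sigma)^2(m_\sigma-m_\sigma^G)$ on $(0,\delta]$, and this has no joint limit at $\sigma=0$. For example, take $\pd$ uniform on $\{a,-a\}$, so $m_\sigma(x)=a\tanh(\langle x,a\rangle/\sigma^2)$; then $H(0,\sigma)=0$, while $H(\sigma e,\sigma)\to a$ when $\langle e,a\rangle=1$. The paper's proof confirms that the second slot is really $t=(1+\sigma)^{-1}$. It proves $m_t(x)=m_t^G(x)+t^2\tilde H(x,t)$ with $\tilde H$ continuous and bounded on $\overline{B_R(0)}\times[-\delta,\delta]$, and then sets $H(x,\sigma):=\tilde H(t(\sigma)x,t(\sigma))$, which is bounded for all large $\sigma$.

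The gap is that, having found the intended statement, you do not prove it. Your remark that the boundedness and dominated-convergence arguments ``carry over verbatim'' is wrong. The bound $\|H\|\le 16R$ rested on $(1+\sigma)^2\le 4$. For large $\sigma$, boundedness of $(1+\sigma)^2(m_\sigma-m_\sigma^G)$ \emph{is} the content of the theorem. Dominated convergence only gives continuity at finite $\sigma>0$, not uniform control as $\sigma\to\infty$.

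The paper's argument runs as follows. Write $m_t=N_t/D_t$ with weight $W_t(y;x)=\exp(-\|x-ty\|^2/2(1-t)^2)$. This quotient makes sense for all $t\in(-1,1)$, so $\sigma=\infty$ becomes the interior point $t=0$. Use $\supp(p)\subset B_R(0)$ to differentiate twice under the expectation, with $\partial_t^k m_t(x)$ jointly continuous in $(x,t)$. Verify $m_0=\mu$ and $\partial_t m_t|_{t=0}=\Sigma x$ for both $p$ and $G$; this is where matching mean and covariance enters. Take the integral remainder $\tilde H(x,t)=\int_0^1(1-s)f_x''(st)\,ds$ with $f_x(t)=m_t(x)-m_t^G(x)$, which is continuous on a compact set and hence bounded. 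Finally, transfer back via $m_\sigma(x)=m_t(tx)$.

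Your sketched mechanism, a second-order expansion in $1/\sigma^2$ matching two moments, also has the bookkeeping off. Uniformly for $x\in\overline{B_R(0)}$,
\[
m_\sigma(x)=\mu+\sigma^{-2}\Bigl(\Sigma(x-\mu)-\tfrac12\E\bigl[(\bm X-\mu)\|\bm X-\mu\|^2\bigr]\Bigr)+O(\sigma^{-4}),
\]
so $m_\sigma-m_\sigma^G$ is generically of exact order $\sigma^{-2}$, through a third-moment term that $G$ does not match. At fixed bounded $x$, the $(1+\sigma)^{-2}$ rate already follows from mean matching alone. Both denoisers are $\mu+O(\sigma^{-2})$ on the ball, e.g.\ $\|m_\sigma(x)-\mu\|\le 4R^3\sigma^{-2}e^{2R^2/\sigma^2}$, which would have given you a short complete proof. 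Covariance matching only matters in the $t$-picture, where bounded $tx$ allows $\|x\|$ of order $\sigma$.
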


This theorem indicates that for large $\sigma$, the denoiser (the one either for the true data distribution $p$ or for the empirical distribution over a dataset) is close to being a linear map determined by the mean and covariance of $p$. This validates the empirical observations in \cite{li2024understanding}.

\begin{figure}[t]
    \centering
    \includegraphics[width=0.95\linewidth]{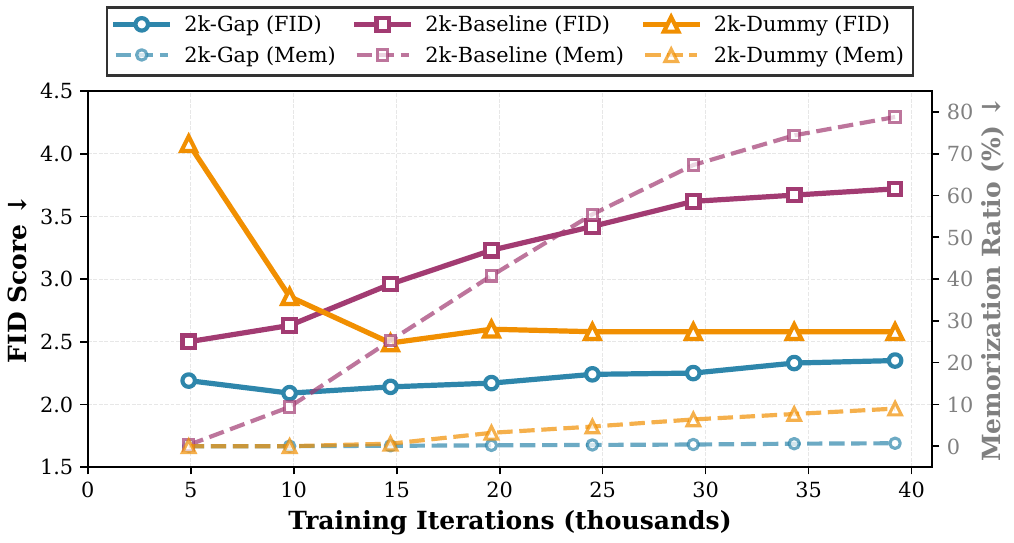}
    \caption{\textbf{Comparison of anti-memorization methods.} Training curves showing FID score (solid lines) and memorization ratio (dashed lines) for three methods: 2k-Baseline, 2k-Dummy, and 2k-Gap. Gap training achieves the lowest memorization (0.7\%) and best FID (2.35) at final checkpoint.}\vspace{-0.1in}
    \label{fig:gap-training}
\end{figure}

\subsubsection{Medium Noise Regime}

We characterize the medium noise  regime by simultaneous transitions in posterior weight concentration and data coverage: as sigma decreases from large to small, posterior weights concentrate on nearest neighbors while data coverage becomes incomplete. The interplay between these two effects creates a volatile training environment.

There are two key observations we have regarding the medium noise regime. The first is the discovery of a \emph{danger zone} for memorization: note that in \Cref{fig:true-vs-maxweight} there is a small region of $\sigma$ in the medium regime where both coverage and posterior weights are high. In this case, the denoiser $m_\sigma^\theta$ will tend to learn the empirical optimal $m_\sigma$ in a region of $\R^d$ where $p_\sigma$ is concentrated {(due to high Gaussian shell coverage)} and furthermore, since this empirical optimal $m_\sigma$ exhibits high per-noise-level memorization due to the high posterior concentration, $m_\sigma^\theta$ will likely exhibit high per-noise-level memorization. 

The other observation is that generalization appears to arise primarily in the medium sigma regime. Although we do not yet have a theoretical explanation for this behavior, our empirical results provide strong evidence: swapping denoisers in this regime flips memorization behavior (Table~\ref{tab:swap_memorization}), with qualitative examples shown in \Cref{fig:mem_sample_grid}.
 This highlights the medium region as the key regime governing both memorization and generalization in diffusion models.

More theoretical understanding of this regime is left to future work. In the section below, we provide interesting empirical findings regarding this regime: since the medium sigma regime is the most dangerous for memorization, we conjecture that avoiding it during training can mitigate memorization. We validate this idea experimentally in Section~\ref{sec:experiments}.

\begin{figure}[t]
\centering
\includegraphics[width=0.85\textwidth]{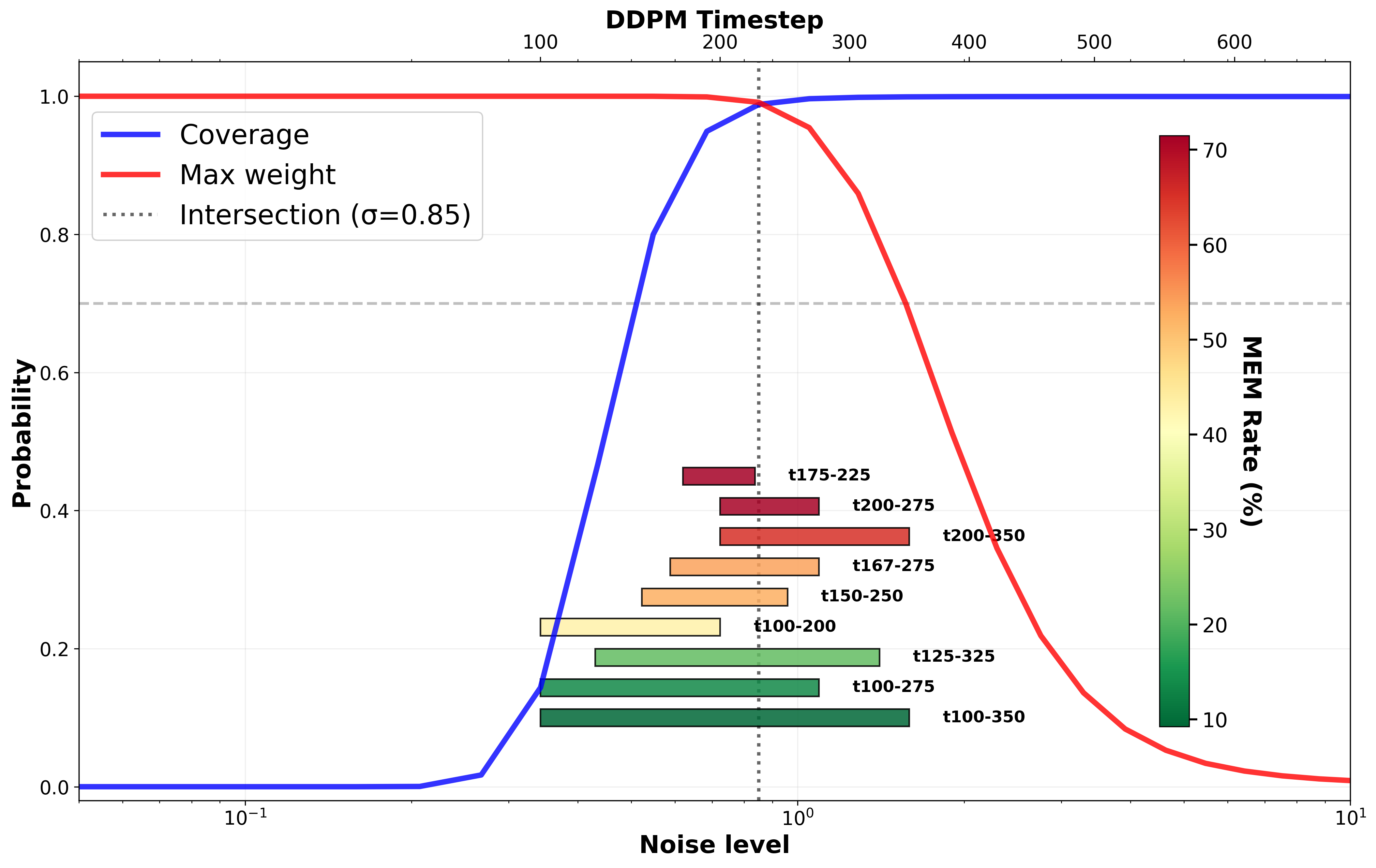}
\caption{
\textbf{Danger zone and gap configurations.} The intersection of coverage and max weight curves identifies the danger zone. Memorization rates for different gap configurations are overlaid, showing
  that gaps targeting this region minimize memorization.}\vspace{-0.1in}
\label{fig:danger-zone-combined}
\end{figure}

\section{Targeted Noise Region Undertraining}
\label{sec:experiments}

Our theoretical analysis has identified a ``danger zone'' in the medium-$\sigma$ regime where memorization risk is highest due to the confluence of high posterior concentration and significant data coverage. A natural question arises: can we mitigate memorization by targeting this specific regime during training? While the swap experiments in \Cref{tab:swap_memorization} already provides some positive answer, in this section, we further test a rejection training idea on CIFAR-10 and CelebA.

\noindent \textbf{CIFAR-10 car subset.}
We consider a CIFAR-10 car subset (2k images) where both generalization and memorization occur~\cite{yoon2023diffusion}. As a baseline, we train standard EDM~\citep{karras2022elucidating} with the full noise schedule $\sigma \in [0.002, 80]$  (\textbf{2k-Baseline}). Building on the baseline, we consider two anti-memorization methods: \textbf{2k-Dummy}~\citep{yoon2023diffusion} adds 2k random noise images as a second class to provide distributional contrast; and \textbf{2k-Gap} (ours) excludes the danger zone $\sigma \in [1.0, 5.0]$, identified empirically via coverage and posterior weight analysis (Appendix~\ref{app:danger_zone}). We train for 40k iterations, and evaluate at 8 checkpoints. At each checkpoint, we generate 1,024 samples and compute FID against 5k CIFAR-10 training cars and the memorization ratio.

Figure~\ref{fig:gap-training} shows training trajectories. The 2k-Baseline exhibits severe memorization (78.8\%) with FID 3.72. The 2k-Dummy method substantially reduces memorization to 9.0\% with FID 2.58. Our 2k-Gap achieves the best results: 0.7\% memorization and FID 2.35.

\paragraph{Grayscale CelebA training data.}
Following \cite{bonnaire2025why}, we use downscaled grayscale CelebA data with 1024 training samples, which diffusion models can memorize. We compute the coverage and max weight curves (Figure~\ref{fig:danger-zone-combined}) to identify the danger zone around $\sigma \in [0.55, 1.3]$ (approximately DDPM timesteps 125--325). We test multiple timestep gap configurations around this region to study the effect on memorization.
We utilize the same DDPM architecture as in~\cite{bonnaire2025why} and train for 100k iterations with batch size 512 and the baseline achieves 74.97\% memorization.
We then test nine timestep gap configurations, removing different ranges within [100--350] (the danger zone plus some buffer) during training.

Results are summarized in Figure~\ref{fig:danger-zone-combined}. Skipping the full range [100--350] significantly reduces memorization from 75\% to 9\%. High max weight regions are more influential: the gap [100--200] spanning only 100 timesteps reduces memorization to 43\%, while [100--275] achieves 13\%.
Visual comparisons of generated samples and their nearest training neighbors are provided in Appendix~\ref{app:pareto}, showing high sample quality and meaningful generalization, which validates the memorization mitigation effect. We also include in~Appendix~\ref{app:pareto} a Pareto frontier analysis over all gap configurations, showing the trade-off between image quality (FID) and memorization; mid-range gaps (150--250, 167--275) achieve the best balance.

\section{Discussion}

Our results indicate that the trajectory-level memorization in diffusion models arises from a specific medium range of noise levels, rather than from the small or large noise extremes. This per-noise-level analysis provides a fundamentally different perspective for analyzing memorization and generalization.

From a practical perspective, this points to targeted control strategies that act selectively on the intermediate noise regime, such as modifying training emphasis, scheduling, or supervision strength at those scales. More broadly, our framework highlights the importance of incorporating data geometry and noise-scale interactions into the design and analysis of diffusion models.

\section*{Acknowledgements}
This work is partially supported by the National Science Foundation (NSF) under grants CCF-2112665, MFAI-2502083, and MFAI-2502084, and by the Defense Advanced Research Projects Agency (DARPA) under Contract No. HR001125CE020.

\bibliographystyle{plain}
\bibliography{cite}
\newpage
\appendix

\startcontents[appendix]
\section*{Appendix Contents}
\printcontents[appendix]{}{1}{\setcounter{tocdepth}{2}}
\vspace{1em}

\section{Posterior Concentration and Cosine Similarity Concentration}

\subsection{Proof of \Cref{thm:post_weight}}\label{sec:concentration_thm}
\begin{proof}
Fix $\delta \in (0,1)$ and $q \in (\frac{1}{2},1)$. Recall that conditioning on $\bm X = x_1$ gives $\bm X_\sigma :=x_1+\sigma \bm Z$ where $\bm Z\sim N(0,I_d)$. By definition
\[
w_j(\bm X_\sigma,\sigma) = \frac{\exp\left(  -\frac{\|\bm X_\sigma - x_j\|^2}{2\sigma^2}\right)}{\sum_{k=1}^N\exp\left(-\frac{\|\bm X_\sigma - x_k\|^2}{2\sigma^2}\right)}  . 
\]

For ease of notation write $w_j = w_j(\bm X_\sigma,\sigma)$ and notice that since $\sum_{j=1}^Nw_j = 1$ we have
\begin{equation}\label{eq:weight_ratio}
    w_1 = \dfrac{1}{1+\sum_{j\neq 1}\frac{w_j}{w_1}}.
\end{equation}
Let $\hat d_j:= x_j-x_1$ for all $x_j \in \mathcal D$. By direct calculation we have 
\[
\frac{w_j}{w_1} = \exp\Big(\frac{1}{\sigma}\langle \bm Z, \hat d_j\rangle -\frac{1}{2\sigma^2}\|\hat d_j\|^2\Big).
\]
Define 
\[
\bm V_j : = \frac{1}{\sigma}\langle \bm Z, \hat d_j\rangle -\frac{1}{2\sigma^2}\|\hat d_j\|^2
\]
and without loss suppose that for all $K \in \{1,\dots,N-1\}$, we have $\|x_{K+1}-x_1\| = d_K$ where $d_K := d_{(K+1)NN}(x_1)$ is the distance from $x_1$ to its $K^{\text{th}}$ nearest neighbor in $\mathcal D$. (Under this convention $d_{1NN}(x_1) = 0$).
Fix a $K \in \{2,\dots,N\}$ and let
\[
c_{K} := \log \left(\frac{1-q}{qK}\right).
\]
Note $c_K <0$ since $q \in (\frac{1}{2},1)$ and therefore $(1-q)/q \in (0,1)$.
Define the event 
\[
\mathcal E_K := \bigcap_{j=2}^{K+1}\Big\{\bm V_j \geq c_K\Big\}
\]
so that on $\mathcal E_K$,
\[
\sum_{j\neq 1} \frac{w_j}{w_1} = \sum_{j\neq 1}e^{\bm V_j} \geq \sum_{j=2}^{K+1}e^{\bm V_j} \geq Ke^{c_K} = \frac{1-q}{q}
\]
which gives $w_1 \leq q$ via \Cref{eq:weight_ratio} . We now bound the complement event $\mathcal E_K^c$. By union bound we have 
\begin{equation}\label{eq:early_union}
    \mathbb P(\mathcal E_K^c) \leq \sum_{j=2}^{K+1}\mathbb P(\bm V_j < c_K).
\end{equation}
For any fixed $j \neq 1$, the key idea is to observe that $\langle \bm Z, \hat d_j\rangle \sim N(0,\|\hat d_j\|^2)$. Hence 
\begin{equation}\label{eq:early_Vj}
    \mathbb P(\bm V_j < c_K) = \mathbb P\Big(\langle \bm Z,\frac{\hat d_j}{\|\hat d_j\|}\rangle < \frac{\|\hat d_j\|}{2\sigma}+\frac{\sigma c_K}{\|\hat d_j\|}\Big)  = F\Big(\frac{\|\hat d_j\|}{2\sigma}+\frac{\sigma c_K}{\|\hat d_j\|}\Big).
\end{equation}
where $F$ is the CDF of the standard normal distribution. Put $z:= F^{-1}(\delta/K)$ and note that since $F$ and $F^{-1}$ are increasing functions we may write
\begin{equation}\label{eq:early_iff}
    \frac{\|\hat d_j\|}{2\sigma}+\frac{\sigma c_K}{\|\hat d_j\|} \leq z \quad \iff \quad F\Big(\frac{\|\hat d_j\|}{2\sigma}+\frac{\sigma c_K}{\|\hat d_j\|}\Big) \leq \frac{\delta}{K}.
\end{equation}
Solving the equation 
\[
\frac{a}{2}+\frac{c_K}{a} = z
\]
yields a positive solution 
\[
a_{K,\delta,q} := z+\sqrt{z^2-2c_K}
\]
which is well defined since $c_K <0$. Calculus shows that the function $g(a) = \frac{a}{2} + \frac{c_K}{a}$ is increasing for all $a>0$, hence for all $j$,
\begin{equation}\label{eq:early_imply}
    \frac{\|\hat d_j\|}{\sigma} \leq a_{K,\delta,q} \quad \implies \quad \frac{\|\hat d_j\|}{2\sigma}+\frac{\sigma c_K}{\|\hat d_j\|} \leq z.
\end{equation}
Thus by \Cref{eq:early_iff} and \Cref{eq:early_imply} we have the following implication:\\
If
\[
\sigma \geq \max_{2\leq j\leq K+1}\frac{\|\hat d_j\|}{a_{K,\delta,q}} = \frac{d_{K}}{a_{K,\delta,q}}
\]
then
\[
F\Big(\frac{\|\hat d_j\|}{2\sigma}+\frac{\sigma c_K}{\|\hat d_j\|}\Big) \leq \frac{\delta}{K}
\]
for all $j \in \{2,\dots,K+1\}$. By \Cref{eq:early_union} and \Cref{eq:early_Vj} this further implies 
\[
\mathbb P(\mathcal E_K^c) \leq K\cdot\frac{\delta}{K} = \delta.
\]
Since this construction holds for all $K$ we have 
\[
\sigma \geq \min_{K} \frac{d_{K}}{a_{K,\delta,q}} = \min_{K>1}\frac{d_{KNN}(x_1)}{a_{K-1,\delta,q}}
\]
implies for $K_* := \arg \min_K (d_{K}/a_{K,\delta,q})$,
\[
\mathbb P(\mathcal E_{K_*}) \geq 1-\delta
\]
and on $\mathcal E_{K_*}$, 
\[
w_1 \leq q.
\]
Plugging in the respective definitions for $c_K$ and $z$ into $a_{K,\delta,q}$ completes the proof of the first implication. \\
\\
To prove the second implication, let 
\[
c := \log\left(\frac{1-q}{q(N-1)}\right)
\]
and notice that $c<0$. Define the event
\[
\mathcal F := \bigcap_{j \neq 1}\Big\{\bm V_j \leq c\Big\}
\]
so that on $\mathcal F$, 
\[
\sum_{j \neq 1} \frac{w_j}{w_1} = \sum_{j \neq 1} e^{\bm V_j} \leq (N-1)e^c = \frac{1-q}{q}
\]
which gives $w_1 \geq q$ via \Cref{eq:weight_ratio}. By union bound we have 
\begin{equation} \label{eq:late_union}
    \mathbb P(\mathcal F^c) \leq \sum_{j \neq 1}\mathbb P(\bm V_j >c).
\end{equation}
For any fixed $j \neq 1$, observe since $\langle \bm Z, \hat d_j\rangle \sim N(0,\|\hat d_j\|^2)$, 
\begin{equation} \label{eq:late_prob}
    \mathbb P(\bm V_j >c) = \mathbb P\left( \left\langle\bm  Z, \frac{\hat d_j}{\|\hat d_j\|}\right\rangle > \frac{\|\hat d_j\|}{2\sigma}+\frac{\sigma c}{\|\hat d_j\|}\right) = \tilde F\left(\frac{\|\hat d_j\|}{2\sigma}+\frac{\sigma c}{\|\hat d_j\|}\right)
\end{equation}
where $\tilde F := 1-F$. Put $\tilde z := \tilde F^{-1}(\delta/(N-1))$ and note that since both $\tilde F$ and $\tilde F^{-1}$ are decreasing functions we may write
\begin{equation}\label{eq:late_iff}
    \frac{\|\hat d_j\|}{2\sigma}+\frac{\sigma c}{\|\hat d_j\|} \geq \tilde z \iff \tilde F\Big(\frac{\|\hat d_j\|}{2\sigma}+\frac{\sigma c}{\|\hat d_j\|}\Big) \leq \frac{\delta}{N-1}.
\end{equation}
Then solving the equation
\[
\frac{b}{2}+\frac{c}{b} = \tilde z
\]
yields a positive solution 
\[
b'_{\delta,q} := \tilde z+\sqrt{\tilde z^2-2c}
\]
which is well defined since $c <0$. By a similar argument to that used in the proof of the first implication, we have for all $j \neq 1$,
\begin{equation}\label{eq:late_implies}
    \frac{\|\hat d_j\|}{\sigma} \geq b'_{\delta,q} \quad \implies \quad \frac{\|\hat d_j\|}{2\sigma}+\frac{\sigma c}{\|\hat d_j\|} \geq\tilde z.
\end{equation}
Since $d_{1}\leq \|\hat d_j\|$ for all $j$, by \Cref{eq:late_iff}  and \Cref{eq:late_implies} we have the following implication:\\
If 
\[
\sigma \leq \frac{d_{1}}{b'_{\delta,q}}
\]
then 
\[
\tilde F \Big(\frac{\|\hat d_j\|}{2\sigma}+\frac{\sigma c}{\|\hat d_j\|}\Big) \leq \frac{\delta}{N-1}
\]
for all $j \neq 1$. By the above union bound in \Cref{eq:late_union} and \Cref{eq:late_prob} this further implies 
\[
\mathbb P( \mathcal F^c ) \leq \delta.
\]
Lastly notice that since $\tilde F^{-1}$ is a decreasing function we have 
\begin{align*}
    b'_{\delta,q} &= \tilde F^{-1}(\delta/(N-1)) + \sqrt{\Big(\tilde F^{-1}(\delta/(N-1))\Big)^2 + 2\log\Big(\frac{q(N-1)}{1-q}\Big)}\\
    &\leq \tilde F^{-1}(\delta/N) + \sqrt{\Big(\tilde F^{-1}(\delta/N)\Big)^2 + 2\log\Big(\frac{qN}{1-q}\Big)} = : b_{\delta,q}
\end{align*}
which implies that $\sigma \leq \frac{d_1}{b_{\delta,q}} = \frac{d_{2NN}(x_1)}{b_{\delta,q}}$  yields the same implication. This completes the proof. 
\end{proof}

\subsection{Cosine Similarity Concentration}\label{sec:cosine-similarity}
The following lemma is similar to \Cref{thm:post_weight} with an additional bound on the distance from the denoiser $m_\sigma(\bm X_\sigma)$ to $x_1$ given that $\bm X_\sigma = x_1 + \sigma \bm Z$. This demonstrates that for small $\sigma$ the empirical optimal denoiser admits nearest neighbor behavior. Furthermore this result leads to \Cref{cor:cos_sim}, where following inspiration from \cite{bertrand2025closed}, we demonstrate a conditional bound on the cosine similarity between the optimal and conditional vector fields in the flow matching setting with linear schedules. 
\begin{lemma}\label{lem:post_weight_epsilon}
Let $\mathcal D = \{x_1, \dots,x_N\} \subset \mathbb R^d$ be a finite data set with uniform distribution $\pd$. Define 
\[
D := \max_{1\leq i,j \leq N}\|x_i-x_j\|.
\]
let $ \hat d_j := x_j-x_1$.
Fix $\epsilon>0$ and condition on $\bm X= x_1$, then with probability at least $1-\delta$, 
\[
w_1(\bm X_\sigma , \sigma) \geq 1-\epsilon
\]
and consequently
\begin{equation}
    \|m_\sigma(\bm X_\sigma) - x_1\| \leq D\left(1-w_1(\bm X_\sigma,\sigma)\right) \leq D\epsilon
\end{equation}
where 
\[
\delta := \sum_{j\neq 1}\tilde F \left(\frac{\|\hat d_j\|}{2\sigma}+\frac{\sigma \kappa_\epsilon}{\|\hat d_j\|} \right)
\]
and 
\[
\kappa_\epsilon := \log\left(\frac{\epsilon}{(N-1)(1-\epsilon)}\right)
\]
and $\tilde F := 1-F$ where $F$ is the CDF of the standard normal distribution. 
\end{lemma}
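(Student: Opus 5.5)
The plan is to rerun the second half of the proof of \Cref{thm:post_weight}, replacing the target level $q$ by $1-\epsilon$ and not inverting the Gaussian tail at the end. Condition on $\bm X = x_1$ and write $\bm X_\sigma = x_1 + \sigma \bm Z$. As in \Cref{eq:weight_ratio},
\[
w_1 = \Bigl(1+\sum_{j\neq 1} e^{\bm V_j}\Bigr)^{-1},
\qquad
\bm V_j = \frac{1}{\sigma}\langle \bm Z,\hat d_j\rangle - \frac{\|\hat d_j\|^2}{2\sigma^2}.
\]
Hence $w_1 \ge 1-\epsilon$ holds exactly when $\sum_{j\neq1} e^{\bm V_j} \le \epsilon/(1-\epsilon)$.

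A sufficient condition for this is that every $\bm V_j \le \kappa_\epsilon$, because then
\[
\sum_{j\neq 1} e^{\bm V_j} \le (N-1)e^{\kappa_\epsilon} = \frac{\epsilon}{1-\epsilon}.
\]
So define the event $\mathcal F_\epsilon := \bigcap_{j\neq 1}\{\bm V_j \le \kappa_\epsilon\}$. Since $\langle \bm Z, \hat d_j\rangle/\|\hat d_j\| \sim \mathcal N(0,1)$, the computation of \Cref{eq:late_prob} gives
\[
\mathbb P(\bm V_j > \kappa_\epsilon) = \tilde F\!\left(\frac{\|\hat d_j\|}{2\sigma} + \frac{\sigma\kappa_\epsilon}{\|\hat d_j\|}\right).
\]
A union bound then yields $\mathbb P(\mathcal F_\epsilon^c)\le \delta$, with $\delta$ exactly as defined in the statement. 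This settles the first claim. If some $x_j = x_1$ with $j\neq1$, the expression is degenerate, so I assume distinct points as is implicit.

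For the distance bound, use $\sum_i w_i = 1$ to write
\[
m_\sigma(\bm X_\sigma) - x_1 = \sum_{j\neq 1} w_j (x_j - x_1).
\]
By the triangle inequality, $\|m_\sigma(\bm X_\sigma) - x_1\| \le \sum_{j\neq1} w_j \|\hat d_j\| \le D\sum_{j\neq1} w_j = D(1-w_1)$. On $\mathcal F_\epsilon$ this is at most $D\epsilon$.

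There is no real obstacle here. The only care needed is the sign bookkeeping: $\kappa_\epsilon$ may be positive or negative depending on $\epsilon$, but the argument never needs its sign, since we do not solve for $\sigma$. We also never need monotonicity of $a\mapsto a/2+\kappa_\epsilon/a$, because $\delta$ is left in unsolved form.
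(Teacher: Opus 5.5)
Your proposal is correct and follows essentially the same route as the paper's proof: the same event $\bigcap_{j\neq 1}\{\bm V_j \le \kappa_\epsilon\}$, the same Gaussian-tail union bound giving $\delta$, and the same triangle-inequality estimate $\|m_\sigma(\bm X_\sigma)-x_1\| \le D(1-w_1)$. Like the paper, you implicitly need $\epsilon\in(0,1)$, $N\ge 2$ and distinct data points so that $\kappa_\epsilon$ and $\delta$ are well defined; you flag the distinctness assumption explicitly.
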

\begin{proof}
Let $\epsilon >0$ and recall that $\bm X = x_1$ gives $\bm X_\sigma = x_1 + \sigma \bm Z$. We have 
\[
w_j(\bm X_\sigma,\sigma) = \frac{\exp\left(  -\frac{\|\bm X_\sigma - x_j\|^2}{2\sigma^2}\right)}{\sum_{k=1}^N\exp\left(-\frac{\|\bm X_\sigma - x_k\|^2}{2\sigma^2}\right)} .
\]
For ease of notation write $w_j := w_j(\bm X_\sigma,\sigma)$. Since $\sum_{j=1}^Nw_j =1$ we have 
\begin{equation}\label{eq:weight_ratio_lemma}
    w_1 = \dfrac{1}{1+\sum_{j\neq 1}\frac{w_j}{w_1}}.
\end{equation}
By direct calculation 
\[
\frac{w_j}{w_1} = \exp\Big(\frac{1}{\sigma}\langle \bm Z, \hat d_j\rangle -\frac{1}{2\sigma^2}\| \hat d_j\|^2\Big).
\]
Define 
\[
\bm V_j := \frac{1}{\sigma}\langle \bm Z, \hat d_j\rangle -\frac{1}{2\sigma^2}\|\hat d_j\|^2
\]
and put 
\[
\kappa_\epsilon := \log\left(\frac{\epsilon}{(N-1)(1-\epsilon)}\right).
\]
Consider the event 
\[
\mathcal E:= \bigcap_{j\neq 1}\left\{\bm V_j \leq \kappa_\epsilon \right\}
\]
so that on $\mathcal E$, 
\[
\sum_{j\neq 1} \frac{w_j}{w_1} = \sum_{j\neq 1} e^{\bm V_j} \le (N-1)e^{\kappa_\epsilon} = \frac{\epsilon}{1-\epsilon}
\]
which gives $w_1 \geq 1-\epsilon$ by \Cref{eq:weight_ratio_lemma}. Since $\left\langle \bm Z, \hat d_j\right\rangle \sim \mathcal N(0,\|\hat d_j\|^2)$ we have by union bound 
\[
\mathbb P(\mathcal E^c) \leq \sum_{j\neq 1}\mathbb P\left(\left\langle\bm Z, \frac{\hat d_j}{\|\hat d_j\|}\right\rangle > \frac{\|\hat d_j\|}{2\sigma} + \frac{\sigma \kappa_\epsilon}{\|\hat d_j\|}\right)= \sum_{j\neq 1}\tilde F \left(\frac{\|\hat d_j\|}{2\sigma} + \frac{\sigma \kappa_\epsilon}{\|\hat d_j\|} \right) =: \delta.
\]
Recall 
\[
m_\sigma(\bm X_\sigma) = \sum_{j=1}^N w_j(\bm X_\sigma , \sigma)x_j
\]
so 
\[
\|m_\sigma(\bm X_\sigma) -x_1\| =  \Big\| \sum_{j=1}^N w_j(x_j - x_1) \Big\| \leq \sum_{j\neq 1} w_j\|x_j - x_1\| \leq D(1-w_1). 
\]
Thus on $\mathcal E$,
\[
\|m_\sigma(\bm X_\sigma) -x_1\| \leq D\epsilon
\]
which completes the proof. 
\end{proof}

For the following setting let $\bm X_t:= (1-t)\bm Z + t\bm X$ where $\bm Z \sim \mathcal N(0, I_d)$ and $\bm X \sim \pd$. Let $u_t$ and $u_t(\cdot \mid x_1)$ be the optimal and conditional vector fields respectively in the flow matching setting \cite{lipman2023flow} with scheduling functions $\alpha_t = t$ and $\beta_t = 1-t$. Then conditional vector field is given by
\begin{equation}\label{eq:cond-vec}
    u_t\left(x \mid x_1 \right) = \frac{x_1-x}{1-t}.
\end{equation}
while the empirical optimal vector field admits a closed form 
\begin{equation}\label{eq:opt-vect}
u_t(x) = \sum_{i=1}^N w_i(x,t)\frac{x_i-x}{1-t} 
\end{equation}
where the weights $w_i(x,t)$ are given by softmax weights
\[
w_i(x,t):= \frac{\exp\left(-\frac{\|x-tx_i\|^2}{2(1-t)^2} \right)}{\sum_{j=1}^N\exp\left(-\frac{\|x-tx_j\|^2}{2(1-t)^2} \right)}.
\]
Following notation from \cite{wan2025elucidating}, the empirical optimal denoiser in $t$, is then given by 
\begin{equation}\label{eq:denoiser-t}
m_t\left(x\right):= \mathbb E\left[\bm X \mid \bm X_t = x \right]= \sum_{i=1}^Nw_i(x,t)x_i.
\end{equation}
Then with $\sigma :=(1-t)/t$, by direct calculation, the two empirical optimal denoisers have the following relationship:
for any $x\in\R^d$ and $t\in(0,1)$, one has the following straightforward observation (which was also implicitly shown in the proof of \cite[Proposition 2.2]{wan2025elucidating}):
\begin{equation}\label{eq:deterministic denoiser relation}
     m_t(tx)=m_\sigma(x)
\end{equation}
and consequently, for all $t \in (0,1)$,
\begin{equation}\label{eq:two-denoisers}
    m_t\left(\bm X_t\right) = m_\sigma(\bm X_\sigma).
\end{equation}

The following corollary is designed to help explain the empirical result found in figure 1 of \cite{bertrand2025closed}, where in high dimensions sharp concentration of the cosine similarity between the conditional and empirical optimal vector fields was observed with respect to the flow matching setting with linear scheduling. 
\begin{corollary}\label{cor:cos_sim}
Assume the setting in \Cref{lem:post_weight_epsilon} for $\mathcal D = \{x_1,\dots,x_N\} \subset \mathbb R^d$. Condition on $\bm X= x_1$ and suppose $x_1 \neq 0$. Pick constants $a,c>0$ and fix $\epsilon>0$. Then with probability at least $1-\delta_{\epsilon,a,c}$
\[
\dfrac{\langle u_t(\bm X_t),  u_t(\bm X_t \mid x_1)\rangle}{\|u_t(\bm X_t)\|\|u_t(\bm X_t \mid x_1)\|} \geq 1- \dfrac{2D\epsilon}{(1-t)\sqrt{d-2\sqrt{dc}+\|x_1\|^2-a\|x_1\|}+D\epsilon}
\]
where 
\[
\delta_{\epsilon,a,c}:= e^{-c} + \tilde F\left(\frac{a}{2}\right) + \sum_{j\neq 1}\tilde F \left(\frac{\|\hat d_j\|}{2\sigma}+\frac{\sigma \kappa_\epsilon}{\|\hat d_j\|} \right)
\]
and 
\[
\kappa_\epsilon := \log\left(\frac{\epsilon}{(N-1)(1-\epsilon)}\right)
\]
and $\tilde F := 1-F$ where $F$ is the CDF of the standard normal distribution. 
\end{corollary}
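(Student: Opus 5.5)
We use the correspondence between the flow matching and variance-exploding parametrizations, and then invoke \Cref{lem:post_weight_epsilon}. Since $\bm X_t = (1-t)\bm Z + t x_1 = t(x_1+\sigma\bm Z)$ with $\sigma=(1-t)/t$, \Cref{eq:two-denoisers} gives $m_t(\bm X_t)=m_\sigma(\bm X_\sigma)$. Writing $u:=u_t(\bm X_t\mid x_1)=(x_1-\bm X_t)/(1-t)$ and $v:=u_t(\bm X_t)=(m_t(\bm X_t)-\bm X_t)/(1-t)$, we get
\[
v-u=\frac{m_\sigma(\bm X_\sigma)-x_1}{1-t}.
\]
On the event $\mathcal E$ of \Cref{lem:post_weight_epsilon}, which has probability at least $1-\sum_{j\neq1}\tilde F(\cdot)$, we therefore have $\|v-u\|\le D\epsilon/(1-t)$.

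Next we lower bound $\|u\|$. Since $x_1-\bm X_t=(1-t)(x_1-\bm Z)$, we have $u=x_1-\bm Z$ and
\[
\|u\|^2=\|\bm Z\|^2-2\langle \bm Z,x_1\rangle+\|x_1\|^2 .
\]
We control the two random terms separately. The lower tail in \Cref{lem:one_shell_lemma_main} (Laurent--Massart) gives $\|\bm Z\|^2\ge d-2\sqrt{cd}$ with probability at least $1-e^{-c}$. Since $\langle\bm Z,x_1/\|x_1\|\rangle\sim\mathcal N(0,1)$ and $x_1\ne0$, the bound $2\langle\bm Z,x_1\rangle\le a\|x_1\|$ fails with probability $\tilde F(a/2)$. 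A union bound over these two events and $\mathcal E^c$ gives total failure probability at most $\delta_{\epsilon,a,c}$. On the good event, $\|u\|\ge \sqrt{d-2\sqrt{dc}+\|x_1\|^2-a\|x_1\|}=:L$.

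To turn these into a cosine bound, we use the elementary inequality
\[
\cos(u,v)\ge 1-\frac{2\|v-u\|}{\|u\|+\|v-u\|}
\]
for $u\neq0$, valid whenever the right-hand side is at least $-1$. This follows from $\langle u,v\rangle=\|u\|^2+\langle u,v-u\rangle\ge \|u\|(\|u\|-\|e\|)$ with $e=v-u$, and $\|v\|\le\|u\|+\|e\|$, so that $\cos\ge(\|u\|-\|e\|)/(\|u\|+\|e\|)$. If $\|u\|<\|e\|$ the claimed bound is below $-1$ only if trivial, so we check that case separately, with the numerator negative and using $\|v\|\ge\|e\|-\|u\|$ or simply $\cos\ge-1$. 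The function $(s-r)/(s+r)$ is increasing in $s$ and decreasing in $r$. Hence we may substitute $\|u\|\ge L$ and $\|e\|\le D\epsilon/(1-t)$, which gives $1-2D\epsilon/((1-t)L+D\epsilon)$ after multiplying the numerator and denominator by $(1-t)$.

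The main obstacle is bookkeeping rather than ideas. The steps needing care are: matching the exact form of $u$ to get $\|x_1-\bm Z\|$, which the stated square root suggests; checking the sign and monotonicity of the final ratio; and confirming that the one-sided Laurent--Massart tail gives $e^{-c}$ rather than $2e^{-c}$. We also need to implicitly require $d-2\sqrt{dc}+\|x_1\|^2-a\|x_1\|>0$.
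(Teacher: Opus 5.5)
Your argument is the paper's proof, step for step. You use $u_t(\bm X_t\mid x_1)=x_1-\bm Z$ and $u_t(\bm X_t)-u_t(\bm X_t\mid x_1)=(m_\sigma(\bm X_\sigma)-x_1)/(1-t)$ via \Cref{eq:two-denoisers}. You then apply the Cauchy--Schwarz/triangle bound $(\|u\|-\|e\|)/(\|u\|+\|e\|)$ and work on the events $\{\|\bm Z\|^2\ge d-2\sqrt{dc}\}$, $\{\langle\bm Z,x_1/\|x_1\|\rangle\le a/2\}$ and $\mathcal E$ from \Cref{lem:post_weight_epsilon}. A union bound with the one-sided tails $e^{-c}$ and $\tilde F(a/2)$ finishes it.

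Your remark on the case $\|u\|<\|e\|$ is wrong, though.
\begin{itemize}
\item In that case $(\|u\|-\|e\|)/(\|u\|+\|e\|)$ lies in $(-1,0)$, not below $-1$.
\item The inequality $\cos(u,v)\ge(\|u\|-\|e\|)/(\|u\|+\|e\|)$ can actually fail for general vectors. Take $e=-(\|e\|/\|u\|)u$: then $v$ is antiparallel to $u$ and $\cos(u,v)=-1$.
\item Neither $\cos\ge-1$ nor $\|v\|\ge\|e\|-\|u\|$ gives anything better than $-1$.
\end{itemize}

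The correct way to handle this case is as follows. On the good event, $\|u\|\ge L$ and $\|e\|\le D\epsilon/(1-t)$. So whenever $(1-t)L\ge D\epsilon$, which is exactly when the corollary's right-hand side is nonnegative, the case $\|u\|<\|e\|$ cannot occur. In the complementary regime the bound is not established by your argument. It is not established by the paper's either, since its second displayed line tacitly needs $\|x_1-\bm Z\|\ge\|m_t(\bm X_t)-x_1\|/(1-t)$. So add $(1-t)L\ge D\epsilon$ as a hypothesis, alongside your positivity requirement on $d-2\sqrt{dc}+\|x_1\|^2-a\|x_1\|$.
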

\begin{remark}
The strong dependence on the dimension $d$ should be noted in the bound. For empirical validation we use the CIFAR-10 dataset consisting of 50K images embedded into $[-1,1]^{3072}$ in the standard manner. At $t=0.4$ ( equivalently $\sigma = 1.5$) with $\epsilon=0.01$, $a=8$ , and $c=5$, averaging over a random sample of 500 images, \Cref{cor:cos_sim} yields an average lower bound for the cosine similarity of $0.940$ with an average probability of $0.941$. Though stronger concentration is seen in figure 1 of \cite{bertrand2025closed} for smaller values of $t$, this result conveys the effect of high dimension and dataset diameter on alignment between $u_t(\bm X_t)$ and $u_t(\bm X_t \mid x_1)$. 
\end{remark}
\begin{proof}
Let $\epsilon, a,c >0$. From $\bm X = x_1$ we have $\bm X_t = (1-t)\bm Z + tx_1$. Plugging $\bm X_t$ into \Cref{eq:cond-vec} and \Cref{eq:opt-vect} and then simplifying yields
\[
u_t(\bm X_t \mid x_1) = x_1- \bm Z. 
\]
and
\[
u_t(\bm X_t) = \frac{m_t(\bm X_t)-x_1}{1-t} + x_1 - \bm Z
\]
Hence after expanding
\begin{align}
\dfrac{\langle u_t(\bm X_t),  u_t(\bm X_t \mid x_1)\rangle}{\|u_t(\bm X_t)\|\|u_t(\bm X_t \mid x_1)\|} 
&= \dfrac{\|x_1 - \bm Z\|^2+ \frac{1}{1-t}\left\langle m_t(\bm X_t) - x_1,x_1-\bm Z\right\rangle}{\|\frac{1}{1-t}(m_t(\bm X_t) - x_1) + x_1 - \bm Z\|\|x_1-\bm Z\|}\nonumber\\ 
&\geq \dfrac{\|x_1 - \bm Z\| - \frac{1}{1-t}\|m_t(\bm X_t) - x_1\|}{\|x_1 - \bm Z\| + \frac{1}{1-t}\|m_t(\bm X_t) - x_1\|}\nonumber \\
&= 1- \dfrac{2\|m_t(\bm X_t) - x_1\|}{(1-t)\|x_1 - \bm Z\| + \|m_t(\bm X_t) - x_1\|} \label{eq:cossim_bdd}
\end{align}
Where we used Cauchy--Schwartz and the triangle inequality. 
Recall $x_1 \neq 0$ and define the events
\[
\mathcal F_{a}:= \left \{ \left\langle\bm Z, \frac{x_1}{\|x_1\|}  \right\rangle\leq \frac{a}{2}\right\}
\]
and 
\[
\mathcal G_c := \left\{\|\bm Z\| \geq \sqrt{d-2\sqrt{dc}}\right\}.
\]
So that on $\mathcal F_a \cap \mathcal G_c$, 
\begin{equation}\label{eq:cossim_norm}
    \|x_1 - \bm Z\|^2 = \|\bm Z\|^2 + \|x_1\|^2 - 2\langle x_1, \bm Z\rangle \geq d-2\sqrt{dc} + \|x_1\|^2 - a\|x_1\|.
\end{equation}
Note since $\langle \bm Z, x_1\rangle \sim \mathcal N(0,\|x_1\|^2)$ we have 
\[
\mathbb P(\mathcal F^c_a) = \tilde F \left(\frac{a}{2} \right)
\]
where $\tilde F = 1-F$ and $F$ is the CDF of the standard normal. Furthermore by the Laurent-Massart inequality \cite{laurent_massart_2000}, we have with regrettable notation
\[
\mathbb P(\mathcal G_c^c) \leq e^{-c}.
\]
Let $\sigma := (1-t)/t$ and so by \Cref{eq:two-denoisers} we have
\begin{equation}\label{eq:cossim_equal}
    \|m_t(\bm X_t) - x_1\| = \|m_\sigma(\bm X_\sigma) - x_1\|
\end{equation}
where $\bm X_\sigma = x_1 + \sigma \bm Z$. By the proof of \Cref{lem:post_weight_epsilon} we have 
\begin{equation}\label{eq:cossim_den}
    \|m_\sigma(\bm X_\sigma) - x_1\| \leq D\epsilon
\end{equation}
on an event $\mathcal E$ with 
\[
\mathbb P(\mathcal E^c) \leq \sum_{j\neq 1}\tilde F \left(\frac{\|\hat d_j\|}{2\sigma} + \frac{\sigma \kappa_\epsilon}{\|\hat d_j\|} \right)
\]
where 
\[
\kappa_\epsilon := \log\left(\frac{\epsilon}{(N-1)(1-\epsilon)}\right).
\]
Define the event 
\[
\mathcal H:= \mathcal E \cap \mathcal F_a \cap \mathcal G_c
\]
so that on $\mathcal H$, using the equality in \Cref{eq:cossim_equal} we have by \Cref{eq:cossim_bdd}, \Cref{eq:cossim_norm} and \Cref{eq:cossim_den},
\[
\dfrac{\langle u_t(\bm X_t),  u_t(\bm X_t \mid x_1)\rangle}{\|u_t(\bm X_t)\|\|u_t(\bm X_t \mid x_1)\|} \geq 1- \dfrac{2D\epsilon}{(1-t)\sqrt{d-2\sqrt{dc}+\|x_1\|^2-a\|x_1\|}+D\epsilon}
\]
while using union bound and previous complement bounds yields,
\[
\mathbb P(\mathcal H^c)\leq  e^{-c} + \tilde F\left(\frac{a}{2}\right) + \sum_{j\neq 1}\tilde F \left(\frac{\|\hat d_j\|}{2\sigma}+\frac{\sigma \kappa_\epsilon}{\|\hat d_j\|} \right) := \delta_{\epsilon,a,c}
\]
which completes the proof. 
\end{proof}

\section{Gaussian Shell and Coverage}\label{sec:Gaussian shell theory}
\begin{proof}[Proof of \Cref{lem:one_shell_lemma_main}]
    We first recall a standard concentration inequality for chi-square random variables due to Laurent and Massart \cite{laurent_massart_2000}. 
\begin{lemma}
\label{lem:chi2-LM}
Let $\bm R\sim\chi^2_{d}$.
Then for every $c>0$,
\begin{align}
\mathbb P\!\left(\bm R-d \ge 2\sqrt{dc}+2c\right)
&\le e^{-c}, \label{eq:LM-upper}\\
\mathbb P\!\left(d-\bm R \ge 2\sqrt{dc}\right)
&\le e^{-c}. \label{eq:LM-lower}
\end{align}
\end{lemma}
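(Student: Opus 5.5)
The statement to address is the Laurent--Massart chi-square tail bound (\Cref{lem:chi2-LM}). I will prove both tails with the Chernoff method applied to the centered variable $\bm R-d=\sum_{k=1}^d(\bm Z_k^2-1)$, where $\bm Z_k$ are i.i.d.\ standard normals. The log-moment generating function is explicit: for $\lambda<1/2$,
\[
\psi(\lambda):=\log\E e^{\lambda(\bm Z_k^2-1)}=-\lambda-\tfrac12\log(1-2\lambda).
\]
The whole argument rests on two elementary sub-gamma type bounds for $\psi$.

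\textbf{Bound for $\lambda\in(0,1/2)$.} I expand $-\tfrac12\log(1-2\lambda)=\sum_{m\ge1}(2\lambda)^m/(2m)$. After the $m=1$ term cancels the $-\lambda$, this gives
\[
\psi(\lambda)=\sum_{m\ge2}\frac{(2\lambda)^m}{2m}\le \frac{\lambda^2}{1-2\lambda},
\]
using $1/(2m)\le 1/4$ for $m\ge2$ and summing the geometric series.

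\textbf{Bound for $\lambda<0$.} Writing $\lambda=-\mu$ with $\mu>0$, I will use $\log(1+u)\ge u-u^2/2$ for $u\ge0$. This yields
\[
\psi(-\mu)=\mu-\tfrac12\log(1+2\mu)\le \mu^2.
\]

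\textbf{Upper tail.} By Markov's inequality and independence,
\[
\mathbb P(\bm R-d\ge x)\le \exp\!\big(-\lambda x+d\lambda^2/(1-2\lambda)\big).
\]
Optimizing over $\lambda$ is the only mildly delicate step. Rather than computing the exact minimizer, I will plug in $x=2\sqrt{dc}+2c$ and the explicit choice
\[
\lambda=\frac{\sqrt c}{\sqrt d+2\sqrt c}\in(0,\tfrac12).
\]
Then $1-2\lambda=\sqrt d/(\sqrt d+2\sqrt c)$ and $\lambda^2/(1-2\lambda)=c/\big(\sqrt d(\sqrt d+2\sqrt c)\big)$. The exponent becomes
\[
\frac{-2\sqrt c(\sqrt{dc}+c)+c}{\sqrt d+2\sqrt c}.
\]
I will then check that this is at most $-c$, which reduces to an elementary polynomial inequality in $\sqrt c$ and $\sqrt d$. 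If the verification runs the wrong way, I will instead use the standard route: the sub-gamma bound gives $\mathbb P(\bm R-d\ge 2\sqrt{vc}+2c)\le e^{-c}$ with variance factor $v=d$ and scale $1$, obtained by optimizing with the Legendre transform $h(u)=1+u-\sqrt{1+2u}$.

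\textbf{Lower tail.} This case is cleaner. I have
\[
\mathbb P(d-\bm R\ge x)\le \exp(-\mu x+d\mu^2),
\]
and the choice $\mu=x/(2d)$ gives $\exp(-x^2/(4d))$. With $x=2\sqrt{dc}$ this equals $e^{-c}$.

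\textbf{Conclusion for \Cref{lem:one_shell_lemma_main}.} Since $\|\bm Z\|^2\sim\chi^2_d$, the two tails say that $\|\bm Z\|^2$ lies outside $[d-2\sqrt{cd},\,d+2\sqrt{cd}+2c]$ with probability at most $2e^{-c}$. Taking square roots gives exactly the shell $[r^{\mathrm{in}}_{c,d},r^{\mathrm{out}}_{c,d}]$, where $r^{\mathrm{in}}_{c,d}$ is understood as $0$ if $d<2\sqrt{cd}$.
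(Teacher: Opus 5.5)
The paper does not prove this lemma. It quotes it from Laurent and Massart and uses it as a black box in the Gaussian shell lemma. Your Chernoff argument is therefore a genuinely self-contained alternative, and it is in substance the original Laurent--Massart proof. It uses the same two log-MGF bounds, $\psi(\lambda)\le\lambda^2/(1-2\lambda)$ for $0<\lambda<1/2$ and $\psi(-\mu)\le\mu^2$ for $\mu>0$. Your explicit choice of $\lambda$ replaces their general inversion step for sub-gamma tails, which is the Legendre-transform computation with $h(u)=1+u-\sqrt{1+2u}$ that you list as a fallback. Your route shows directly where $2\sqrt{dc}+2c$ and $2\sqrt{dc}$ come from; the citation gives the more general weighted statement for $\sum_k a_k(\bm Z_k^2-1)$. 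Your lower-tail computation and the passage to the shell $[r^{\mathrm{in}}_{c,d},r^{\mathrm{out}}_{c,d}]$ are correct. Your convention $r^{\mathrm{in}}_{c,d}:=0$ when $d<4c$ also covers a case the paper leaves undefined.

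There is one arithmetic slip in the upper tail, and it matters because the check you propose would not prove anything as written. With $\lambda=\sqrt c/(\sqrt d+2\sqrt c)$ you get $d\lambda^2/(1-2\lambda)=c\sqrt d/(\sqrt d+2\sqrt c)$, not $c/(\sqrt d+2\sqrt c)$. You lost a factor $\sqrt d$ when multiplying $\lambda^2/(1-2\lambda)$ by $d$. Your expression is smaller than the true exponent by $c(\sqrt d-1)/(\sqrt d+2\sqrt c)\ge 0$. So verifying that it is at most $-c$, which would reduce to $d\ge 1$, says nothing about the true exponent. With the correct term the numerator is $-2\sqrt c(\sqrt{dc}+c)+c\sqrt d=-c(\sqrt d+2\sqrt c)$, so the exponent is exactly $-c$. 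In fact your $\lambda$ is the exact minimizer of $-\lambda x+d\lambda^2/(1-2\lambda)$ at $x=2\sqrt{dc}+2c$, so neither a polynomial inequality nor the fallback is needed.
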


Now we let $\bm S:=\|\bm Z\|^2\sim\chi^2_d$. By \Cref{lem:chi2-LM} above, for every $c\ge 0$,
\[
\mathbb P\big(\bm S\ge d+2\sqrt{cd}+2c\big)\le e^{-c},
\qquad
\mathbb P\big(\bm S\le d-2\sqrt{cd}\big)\le e^{-c}.
\]
Therefore,
\[
\mathbb P\Big(d-2\sqrt{cd}\le \bm S\le d+2\sqrt{cd}+2c\Big)\ge 1-2e^{-c}.
\]
Since $\big(r^{\mathrm{in}}_{c,d}\big)^2=d-2\sqrt{cd}$ and $\big(r^{\mathrm{out}}_{c,d}\big)^2=d+2\sqrt{cd}+2c$, taking square roots yields
\[
\mathbb P\big(\|\bm Z\|\in[r^{\mathrm{in}}_{c,d},r^{\mathrm{out}}_{c,d}]\big)\ge 1-2e^{-c},
\]
which proves the lemma.
\end{proof}

\subsection{Proof of \Cref{thm:shell-coverage}}
We first introduce some terminology to assist with the proof.
Let $(\Omega,\mathcal F,\mathbb P)$ be a probability space supporting random variables
\[
\bm X,\bm{X}',\bm X_1,\dots,\bm X_N:\Omega\to\R^d,\qquad \bm Z:\Omega\to\R^d,
\]
such that $\bm X,\bm{X}',\bm X_1,\dots,\bm X_N$ are i.i.d.\ with law $p$, and $\bm Z\sim\mathcal N(0,I_d)$ is independent.

We first recall the following result:
\begin{lemma}[Example 4.1.7 in \cite{durrett2019probability}]
\label{lem:durrett}
Let $\bm A$ and $\bm B$ be independent random variables and let
$\varphi$ be an integrable function such that
$\mathbb E[|\varphi(\bm A,\bm B)|]<\infty$.
Define
\[
g(a):=\mathbb E[\varphi(a,\bm B)].
\]
Then
\[
\mathbb E[\varphi(\bm A,\bm B)\mid \bm A]=g(\bm A)
\qquad\text{a.s.}
\]
\end{lemma}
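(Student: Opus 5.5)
The plan is to verify the two defining properties of conditional expectation directly, with Fubini's theorem on the product of the laws of $\bm A$ and $\bm B$ as the main tool. Let $\bm A$ take values in a measurable space $(S_1,\mathcal S_1)$ and $\bm B$ in $(S_2,\mathcal S_2)$, with laws $\mu$ and $\nu$. I assume throughout that $\varphi:S_1\times S_2\to\R$ is jointly measurable. Independence means the joint law of $(\bm A,\bm B)$ is the product measure $\mu\otimes\nu$. The change-of-variables formula then turns every expectation of a function of $(\bm A,\bm B)$ into an integral against $\mu\otimes\nu$.

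\emph{Step 1: $g$ is well defined and measurable.} By the change-of-variables formula,
\[
\int_{S_1\times S_2}|\varphi|\,d(\mu\otimes\nu)=\mathbb E|\varphi(\bm A,\bm B)|<\infty .
\]
Fubini--Tonelli therefore gives three facts. The map $a\mapsto\int|\varphi(a,b)|\,\nu(db)$ is measurable and finite outside a $\mu$-null set $N_0$. On $N_0^c$ the map $a\mapsto\int\varphi(a,b)\,\nu(db)=g(a)$ is measurable. Finally, $\int|g|\,d\mu<\infty$. On $N_0$ I redefine $g:=0$; this does not change $g(\bm A)$ outside a null event, since $\mathbb P(\bm A\in N_0)=\mu(N_0)=0$. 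Consequently $g(\bm A)$ is $\sigma(\bm A)$-measurable and integrable.

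\emph{Step 2: the averaging property.} Every set in $\sigma(\bm A)$ has the form $\{\bm A\in E\}$ with $E\in\mathcal S_1$. The function $(a,b)\mapsto\varphi(a,b)\mathbf 1_E(a)$ is $\mu\otimes\nu$-integrable, so Fubini applies again and gives
\[
\mathbb E\big[\varphi(\bm A,\bm B)\mathbf 1_E(\bm A)\big]
=\int_{S_1}\mathbf 1_E(a)\Big(\int_{S_2}\varphi(a,b)\,\nu(db)\Big)\mu(da)
=\int_E g\,d\mu=\mathbb E\big[g(\bm A)\mathbf 1_E(\bm A)\big].
\]
By the uniqueness of conditional expectation up to a.s.\ equality, this proves $\mathbb E[\varphi(\bm A,\bm B)\mid\bm A]=g(\bm A)$ a.s.

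The main obstacle is Step 1. The function $g(a)=\mathbb E[\varphi(a,\bm B)]$ need not be finite, or even defined, for every $a$, and its measurability is not automatic. Both issues are resolved by Tonelli applied to $|\varphi|$ together with the modification of $g$ on the null set $N_0$. After that, Step 2 is a routine application of Fubini.
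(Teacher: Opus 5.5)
Your argument is correct and follows the standard proof behind the cited Durrett example; the paper itself only cites Durrett and gives no proof. That proof identifies the law of $(\bm A,\bm B)$ with $\mu\otimes\nu$ via independence and applies Fubini on test sets $\{\bm A\in E\}$. Your Step 1 (Tonelli on $|\varphi|$, plus redefining $g$ on the $\mu$-null set where $\int|\varphi(a,b)|\,\nu(db)=\infty$) spells out the measurability and integrability of $g(\bm A)$, which the textbook argument treats as evident.
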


\begin{lemma}
\label{lem:conditional-iid-shell}
For each $i=1,\dots,N$, define
\[
I_i:=\mathbf 1_{\{\bm X_\sigma\in \sh{\sigma}{\bm X_i}\}}.
\]
Then:
\begin{enumerate}
\item[(i)]
\[
\mathbb E[I_i\mid \bm X_\sigma]
=
q_\sigma(\bm X_\sigma)
\qquad\text{a.s.,}
\]
where
\[
q_\sigma(y)
=
\mathbb P\bigl(y\in \sh{\sigma}{\bm X'}\bigr),
\]
and $\bm X'$ is an independent copy of $\bm X_1$.

\item[(ii)]
The family $(I_1,\dots,I_N)$ is conditionally independent given
$\bm X_\sigma$.
\end{enumerate}
Consequently, conditional on $\bm X_\sigma$, the random variables
$I_1,\dots,I_N$ are i.i.d.\ Bernoulli with parameter
$q_\sigma(\bm X_\sigma)$.
\end{lemma}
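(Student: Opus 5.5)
Both parts follow from the structure $\bm X_\sigma=\bm X+\sigma\bm Z$ with $\bm X,\bm Z$ independent of $(\bm X_1,\dots,\bm X_N)$. Write $I_i=\varphi(\bm X_i,\bm X_\sigma)$, where $\varphi(x',y):=\mathbf 1_{\{y\in \sh{\sigma}{x'}\}}=\mathbf 1_{\{\sigma r^{\mathrm{in}}_{c,d}\le\|y-x'\|\le\sigma r^{\mathrm{out}}_{c,d}\}}$. This $\varphi$ is a bounded Borel function on $\R^d\times\R^d$, since it is the indicator of a closed set, so all integrability hypotheses are automatic.

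For (i), the random vector $\bm X_\sigma$ is a measurable function of $(\bm X,\bm Z)$, which is independent of $\bm X_i$; hence $\bm X_\sigma$ and $\bm X_i$ are independent. We apply \Cref{lem:durrett} with $\bm A=\bm X_\sigma$, $\bm B=\bm X_i$ and the function $(a,b)\mapsto\varphi(b,a)$. This gives $\mathbb E[I_i\mid\bm X_\sigma]=g(\bm X_\sigma)$ a.s., where
\[
g(y)=\mathbb E[\varphi(\bm X_i,y)]=\mathbb P(y\in\sh{\sigma}{\bm X_i}).
\]
Because $\bm X_i$ has law $p$, this equals $\mathbb P(y\in\sh{\sigma}{\bm X'})=q_\sigma(y)$.

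For (ii), the plan is to show that for every $(b_1,\dots,b_N)\in\{0,1\}^N$,
\[
\mathbb P(I_1=b_1,\dots,I_N=b_N\mid\bm X_\sigma)=\prod_{i}\mathbb P(I_i=b_i\mid\bm X_\sigma)\quad\text{a.s.}
\]
We apply \Cref{lem:durrett} again, now with $\bm A=\bm X_\sigma$ and $\bm B=(\bm X_1,\dots,\bm X_N)$. These are independent because $(\bm X,\bm Z)$ is independent of $(\bm X_1,\dots,\bm X_N)$. Take the function $\prod_i\mathbf 1_{\{\varphi(b_i',a)=b_i\}}$ with $b'=(b_1',\dots,b_N')$. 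Then the conditional probability equals $h(\bm X_\sigma)$, where
\[
h(y)=\mathbb P\Big(\bigcap_i\{\varphi(\bm X_i,y)=b_i\}\Big).
\]
For fixed $y$, the events $\{\varphi(\bm X_i,y)=b_i\}$ involve the independent variables $\bm X_i$ separately, so $h(y)=\prod_i\mathbb P(\varphi(\bm X_i,y)=b_i)$. By part (i), each factor is $q_\sigma(y)^{b_i}(1-q_\sigma(y))^{1-b_i}$. This proves conditional independence, and it also shows that the conditional law is i.i.d.\ Bernoulli$(q_\sigma(\bm X_\sigma))$.

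The only point needing care is the independence bookkeeping. We must check that $\bm X_\sigma$ is independent of the whole vector $(\bm X_1,\dots,\bm X_N)$, not just of each $\bm X_i$, so that \Cref{lem:durrett} applies with $\bm B$ vector-valued. This follows from the mutual independence of $\bm X,\bm X_1,\dots,\bm X_N,\bm Z$ assumed in the setup, via the grouping property of independence. Everything else is routine measurability.
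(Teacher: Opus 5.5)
Your proposal is correct and follows the paper's proof essentially step for step: both apply \Cref{lem:durrett} with $\bm A=\bm X_\sigma$, first with $\bm B=\bm X_i$ for (i), then with $\bm B$ the vector of centers and a product of indicators for (ii), factorizing through the independence of the $\bm X_i$. The only cosmetic difference is that the paper checks the factorization over arbitrary subsets $J\subset\{1,\dots,N\}$, while you check all atoms $(b_1,\dots,b_N)\in\{0,1\}^N$; for $\{0,1\}$-valued variables these are equivalent.
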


\begin{proof}
Since $(\bm X_1,\dots,\bm X_N)$ is independent of $(\bm X,\bm Z)$ and
$\bm X_\sigma=\bm X+\sigma\bm Z$ is measurable with respect to
$(\bm X,\bm Z)$, it follows that $(\bm X_1,\dots,\bm X_N)$ is independent
of $\bm X_\sigma$.

\medskip

\noindent\textit{(i)}
Define $\varphi(y,x):=\mathbf 1_{\{y\in \sh{\sigma}{x}\}}$,
$I_i=\varphi(\bm X_\sigma,\bm X_i)$.

Since $\bm X_\sigma$ and $\bm X_i$ are independent and $\varphi$ is bounded,
Lemma~\ref{lem:durrett} yields
\[
\mathbb E[I_i\mid \bm X_\sigma]
=
\mathbb E[\varphi(\bm X_\sigma,\bm X_i)\mid \bm X_\sigma]
=
g(\bm X_\sigma),
\]
where
$g(y)=\mathbb E[\varphi(y,\bm X_i)]
=\mathbb P\bigl(y\in \sh{\sigma}{\bm X_i}\bigr)$.

Since $\bm X_i\stackrel d=\bm X'$, we have $g(y)=q_\sigma(y)$ for all $y$.
Hence
\[
\mathbb E[I_i\mid \bm X_\sigma]=q_\sigma(\bm X_\sigma)
\qquad\text{a.s.}
\]

\noindent\textit{(ii)}
Let $J\subset\{1,\dots,N\}$ be finite and nonempty, and fix
$\varepsilon_j\in\{0,1\}$ for each $j\in J$.
Define the measurable function
\[
\Phi:\ \mathbb R^d\times(\mathbb R^d)^J\to\{0,1\},
\qquad
\Phi\bigl(y,(x_j)_{j\in J}\bigr)
:=
\prod_{j\in J}\mathbf 1_{\{\mathbf 1_{\{y\in \sh{\sigma}{x_j}\}}=\varepsilon_j\}}.
\]
Then
\[
\mathbf 1_{\cap_{j\in J}\{I_j=\varepsilon_j\}}
=
\Phi\bigl(\bm X_\sigma,(\bm X_j)_{j\in J}\bigr).
\]

Since $\bm X_\sigma$ is independent of $(\bm X_j)_{j\in J}$ and $\Phi$ is bounded,
Lemma~\ref{lem:durrett} yields
\[
\mathbb P\!\left(
\bigcap_{j\in J}\{I_j=\varepsilon_j\}
\;\middle|\;
\bm X_\sigma
\right)
=
\mathbb E\!\left[\Phi\bigl(\bm X_\sigma,(\bm X_j)_{j\in J}\bigr)\,\middle|\,\bm X_\sigma\right]
=
G(\bm X_\sigma),
\]
where
\[
G(y)
=
\mathbb E\!\left[
\Phi\bigl(y,(\bm X_j)_{j\in J}\bigr)
\right].
\]
Using independence of $(\bm X_j)_{j\in J}$, we factorize
\begin{align*}
G(y)
&=
\mathbb E\!\left[
\prod_{j\in J}\mathbf 1_{\{\mathbf 1_{\{y\in \sh{\sigma}{\bm X_j}\}}=\varepsilon_j\}}
\right]
=
\prod_{j\in J}
\mathbb E\!\left[
\mathbf 1_{\{\mathbf 1_{\{y\in \sh{\sigma}{\bm X_j}\}}=\varepsilon_j\}}
\right] \\
&=
\prod_{j\in J}
\mathbb P\!\left(\mathbf 1_{\{y\in \sh{\sigma}{\bm X_j}\}}=\varepsilon_j\right)
=
\prod_{j\in J}
\mathbb P\!\left(I_j=\varepsilon_j \,\middle|\, \bm X_\sigma=y\right).
\end{align*}
Hence
\[
\mathbb P\!\left(
\bigcap_{j\in J}\{I_j=\varepsilon_j\}
\;\middle|\;
\bm X_\sigma
\right)
=
\prod_{j\in J}\mathbb P\!\left(I_j=\varepsilon_j \,\middle|\, \bm X_\sigma\right),
\]
which proves that $(I_1,\dots,I_N)$ is conditionally independent given $\bm X_\sigma$.

Finally, by part~(i),
\[
\mathbb P(I_i=1\mid \bm X_\sigma)
=
\mathbb E[I_i\mid \bm X_\sigma]
=
q_\sigma(\bm X_\sigma),
\]
so $(I_1,\dots,I_N)$ are conditionally i.i.d.\ Bernoulli with parameter
$q_\sigma(\bm X_\sigma)$.
\end{proof}

\begin{lemma}
\label{lem:two-shell-distance}
Fix $\sigma>0$. For any deterministic $x,x'\in\R^d$, define $\bm Y_x=x+\sigma \bm Z$ with $\bm Z\sim\mathcal N(0,I_d)$.
Then
\[
\mathbb P\big(\bm Y_x\in \sh{\sigma}{x}\cap \sh{\sigma}{x'}\big)
=
\Phi_{d,c}\!\left(\frac{\|x-x'\|}{\sigma}\right).
\]
\end{lemma}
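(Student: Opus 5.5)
The plan is to reduce the event to a statement about the single Gaussian vector $\bm Z$ and then exploit the rotation invariance of the standard Gaussian. First I will unpack the definitions. Since $\bm Y_x - x = \sigma\bm Z$, the shell definition \Cref{eq:Gaussian shell} gives
\[
\bm Y_x\in \sh{\sigma}{x} \iff \sigma r^{\mathrm{in}}_{c,d}\le \sigma\|\bm Z\|\le \sigma r^{\mathrm{out}}_{c,d} \iff \|\bm Z\|\in[r^{\mathrm{in}}_{c,d},r^{\mathrm{out}}_{c,d}].
\]
Likewise, $\bm Y_x - x' = \sigma\bigl(\bm Z + (x-x')/\sigma\bigr)$, so
\[
\bm Y_x\in\sh{\sigma}{x'} \iff \bigl\|\bm Z + v\bigr\|\in[r^{\mathrm{in}}_{c,d},r^{\mathrm{out}}_{c,d}], \qquad v:=(x-x')/\sigma .
\]
After dividing by $\sigma>0$, the probability in question therefore equals $\mathbb P\bigl(\|\bm Z\|,\|\bm Z+v\|\in[r^{\mathrm{in}}_{c,d},r^{\mathrm{out}}_{c,d}]\bigr)$.

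Next I will rotate $v$ onto the first coordinate axis. Let $t:=\|v\|=\|x-x'\|/\sigma$. If $t=0$ there is nothing to prove. Otherwise, choose an orthogonal matrix $Q$ with $Qv=t e_1$; one exists because orthogonal maps act transitively on spheres, and a Householder reflection will do. Since $Q\bm Z\stackrel{d}{=}\bm Z$, orthogonal invariance of the standard Gaussian gives
\[
\bigl(\|\bm Z\|,\|\bm Z+v\|\bigr)=\bigl(\|Q\bm Z\|,\|Q\bm Z+te_1\|\bigr)\stackrel{d}{=}\bigl(\|\bm Z\|,\|\bm Z+te_1\|\bigr).
\]
Consequently the probability equals $\Phi_{d,c}(t)$ by definition, which is the claim.

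There is no real obstacle here. The only step needing care is stating the joint distributional equality: we must apply the same $Q$ to both norms simultaneously, rather than arguing about each norm separately.
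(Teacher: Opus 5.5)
Your proposal is correct and follows the paper's proof essentially step for step. Both reduce the two shell memberships to $\|\bm Z\|$ and $\|\bm Z+(x-x')/\sigma\|$ lying in $[r^{\mathrm{in}}_{c,d},r^{\mathrm{out}}_{c,d}]$, then apply a single orthogonal $Q$ sending $(x-x')/\sigma$ to $te_1$ and use $Q\bm Z\stackrel{d}{=}\bm Z$ to land on the definition of $\Phi_{d,c}(t)$.
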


\begin{proof}
Let $\Delta:=x-x'$ and $t:=\|\Delta\|/\sigma$. Note that
\[
\bm Y_x\in \sh{\sigma}{x}\iff \|\bm Y_x-x\|\in[\sigma r^{\mathrm{in}}_{c,d},\sigma r^{\mathrm{out}}_{c,d}]\iff \|\bm Z\|\in[r^{\mathrm{in}}_{c,d},r^{\mathrm{out}}_{c,d}],
\]
and
\[
\bm Y_x\in \sh{\sigma}{x'}\iff \|\bm Y_x-x'\|\in[\sigma r^{\mathrm{in}}_{c,d},\sigma r^{\mathrm{out}}_{c,d}]\iff \|\bm Z+\Delta/\sigma\|\in[r^{\mathrm{in}}_{c,d},r^{\mathrm{out}}_{c,d}].
\]
Choose an orthogonal matrix $Q\in O(d)$ such that $Q(\Delta/\sigma)=t e_1$.
Since $\bm Z\sim\mathcal N(0,I_d)$ is rotation invariant, $Q\bm Z\stackrel{d}{=}\bm Z$.
Therefore,
\begin{align*}
&\mathbb P\Big(\|\bm Z\|\in[r^{\mathrm{in}}_{c,d},r^{\mathrm{out}}_{c,d}],\ \|\bm Z+\Delta/\sigma\|\in[r^{\mathrm{in}}_{c,d},r^{\mathrm{out}}_{c,d}]\Big)
\\
&=
\mathbb P\Big(\|Q\bm Z\|\in[r^{\mathrm{in}}_{c,d},r^{\mathrm{out}}_{c,d}],\ \|Q\bm Z+Q(\Delta/\sigma)\|\in[r^{\mathrm{in}}_{c,d},r^{\mathrm{out}}_{c,d}]\Big)\\
&=
\mathbb P\Big(\|\bm Z\|\in[r^{\mathrm{in}}_{c,d},r^{\mathrm{out}}_{c,d}],\ \|\bm Z+t e_1\|\in[r^{\mathrm{in}}_{c,d},r^{\mathrm{out}}_{c,d}]\Big)
=
\Phi_{d,c}(t).
\end{align*}
\end{proof}

\begin{lemma}
\label{lem:random-centers-to-Phi}
Let $(\Omega,\mathcal F,\mathbb P)$ be a probability space supporting random vectors
$\bm X,\bm X':\Omega\to\R^d$ and $\bm Z:\Omega\to\R^d$.
Assume $\bm Z\sim\mathcal N(0,I_d)$ and that $\bm Z$ is independent of $(\bm X,\bm X')$.
Define the noisy point $\bm X_\sigma:=\bm X+\sigma \bm Z$ and the indicator
\[
U:=\mathbf 1_{\{\bm X_\sigma\in \sh{\sigma}{\bm X}\cap \sh{\sigma}{\bm X'}\}}
=\mathbf 1_{\{\bm X+\sigma \bm Z\in \sh{\sigma}{\bm X}\cap \sh{\sigma}{\bm X'}\}}.
\]

Then
\[
\mathbb E\big[U\mid \bm X,\bm X'\big]
=
\Phi_{d,c}\!\left(\frac{\|\bm X-\bm X'\|}{\sigma}\right)
\quad\text{a.s.}
\]
\end{lemma}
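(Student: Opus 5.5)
The plan is to reduce \Cref{lem:random-centers-to-Phi} to the deterministic computation in \Cref{lem:two-shell-distance} by conditioning on the pair of centers, with \Cref{lem:durrett} doing the measure-theoretic work. Set $\bm A:=(\bm X,\bm X')$, which takes values in $\R^d\times\R^d$, and $\bm B:=\bm Z$. By assumption $\bm A$ and $\bm B$ are independent. We then write $U=\varphi(\bm A,\bm B)$ with
\[
\varphi\bigl((x,x'),z\bigr):=\mathbf 1_{\{x+\sigma z\in \sh{\sigma}{x}\cap \sh{\sigma}{x'}\}}.
\]

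We first check that $\varphi$ is Borel measurable and bounded. The condition $x+\sigma z\in\sh{\sigma}{x}$ is equivalent to $\|z\|\in[r^{\mathrm{in}}_{c,d},r^{\mathrm{out}}_{c,d}]$. The condition $x+\sigma z\in\sh{\sigma}{x'}$ is equivalent to $\|x-x'+\sigma z\|\in[\sigma r^{\mathrm{in}}_{c,d},\sigma r^{\mathrm{out}}_{c,d}]$. Both are preimages of closed intervals under continuous maps, so the defining set is closed and hence Borel. Boundedness by $1$ gives integrability, so \Cref{lem:durrett} applies and yields $\mathbb E[U\mid \bm X,\bm X']=g(\bm X,\bm X')$ a.s., where
\[
g(x,x'):=\mathbb E[\varphi((x,x'),\bm Z)]=\mathbb P\bigl(x+\sigma\bm Z\in\sh{\sigma}{x}\cap\sh{\sigma}{x'}\bigr).
\]

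For fixed deterministic $x,x'$, the quantity $g(x,x')$ is exactly the probability computed in \Cref{lem:two-shell-distance} with $\bm Y_x=x+\sigma\bm Z$. That lemma gives $g(x,x')=\Phi_{d,c}(\|x-x'\|/\sigma)$ for every $(x,x')$, including the degenerate case $x=x'$, where $t=0$ and the rotation argument is trivial. Substituting $(x,x')=(\bm X,\bm X')$ then gives the claim.

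There is no serious obstacle here. The only point needing care is that \Cref{lem:durrett} is used with the vector-valued variable $\bm A=(\bm X,\bm X')$. This requires independence of $\bm Z$ from the joint pair, not merely from each coordinate separately, and this joint independence is exactly what the hypothesis provides. Note that no independence between $\bm X$ and $\bm X'$ is needed, which is why the lemma can later be applied with $\bm X'$ equal to a training point or to the nearest neighbor of $\bm X$.
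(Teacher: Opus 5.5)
Your proposal is correct and matches the paper's proof: the paper also writes $U$ as an indicator $\Psi(\bm X,\bm X',\bm Z)$, evaluates $\mathbb E_Z[\Psi(x,x',\bm Z)]=\Phi_{d,c}(\|x-x'\|/\sigma)$ for fixed centers via \Cref{lem:two-shell-distance}, and then applies \Cref{lem:durrett} using the independence of $\bm Z$ from the pair $(\bm X,\bm X')$. Your remarks on Borel measurability, the degenerate case $x=x'$, and the need for joint rather than coordinatewise independence are details the paper leaves implicit.
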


\begin{proof}
Define
\[
\Psi(x,x',z)
:=
\mathbf 1_{\{x+\sigma z\in \sh{\sigma}{x}\cap \sh{\sigma}{x'}\}},
\qquad (x,x',z)\in\mathbb R^d\times\mathbb R^d\times\mathbb R^d.
\]
Then $U=\Psi( \bm X,\bm X',\bm Z)$.

For fixed $(x,x')$, by independence and the definition of $\sh{\sigma}{x}$,
\[
\mathbb E_Z[\Psi(x,x',\bm Z)]
=
\mathbb P\!\left(
\|\bm Z\|\in[r^{\mathrm{in}}_{c,d},r^{\mathrm{out}}_{c,d}],\,
\left\|\bm Z+\tfrac{x-x'}{\sigma}\right\|\in[r^{\mathrm{in}}_{c,d},r^{\mathrm{out}}_{c,d}]
\right).
\]

By \Cref{lem:two-shell-distance}, this probability depends
only on $\|x-x'\|/\sigma$, and equals
\[
\Phi_{d,c}\!\left(\frac{\|x-x'\|}{\sigma}\right).
\]

Since $\bm Z$ is independent of $(\bm X,\bm X')$, we conclude by \Cref{lem:durrett} again that
\[
\mathbb E[U\mid \bm X,\bm X']
=
\Phi_{d,c}\!\left(\frac{\|\bm X-\bm X'\|}{\sigma}\right)
\quad\text{a.s.}
\]
\end{proof}

Now we are ready to prove our theorem.

\begin{proof}[Proof of {\Cref{thm:shell-coverage}}]
By definition,
\[
\mathbf 1_{\{\bm X_\sigma\notin\mathcal U_\sigma\}}
=
\prod_{i=1}^N \mathbf 1_{\{\bm X_\sigma\notin \sh{\sigma}{\bm X_i}\}}
=
\prod_{i=1}^N (1-I_i),
\qquad
I_i:=\mathbf 1_{\{\bm X_\sigma\in \sh{\sigma}{\bm X_i}\}}.
\]
Using
Lemma~\ref{lem:conditional-iid-shell}, we obtain almost surely
\[
\mathbb E\!\left[\mathbf 1_{\{\bm X_\sigma\notin\mathcal U_\sigma\}}\mid \bm X_\sigma\right]
=
\prod_{i=1}^N
\mathbb E\!\left[1-I_i\mid \bm X_\sigma\right]
=
\prod_{i=1}^N (1-q_\sigma(\bm X_\sigma))
=
(1-q_\sigma(\bm X_\sigma))^N.
\]
Therefore,
\[
\mathbb E\!\left[\mathbf 1_{\{\bm X_\sigma\in\mathcal U_\sigma\}}\mid \bm X_\sigma\right]
=
1-(1-q_\sigma(\bm X_\sigma))^N.
\]
Applying the tower property of conditional expectation yields
\[
\mathbb P(\bm X_\sigma\in\mathcal U_\sigma)
=
\mathbb E\!\left[\mathbf 1_{\{\bm X_\sigma\in\mathcal U_\sigma\}}\right]
=
\mathbb E\!\left[
\mathbb E\!\left[\mathbf 1_{\{\bm X_\sigma\in\mathcal U_\sigma\}}\mid \bm X_\sigma\right]
\right]
=
\mathbb E\!\left[1-(1-q_\sigma(\bm X_\sigma))^N\right].
\]

Now, we define the nearest-neighbor index
\[
i^\star(\omega)
:=
\min\Big\{i:\ \|\bm X(\omega)-\bm X_i(\omega)\|=d_{1\mathrm{NN}}(\bm X(\omega))\Big\},
\]
which is measurable by construction, and note that
$\sh{\sigma}{\bm X_{i^\star}}\subseteq\mathcal U_\sigma$ pointwise.
Hence,
\[
\mathbf 1_{\{\bm X_\sigma\in\mathcal U_\sigma\}}
\ge
\mathbf 1_{\{\bm X_\sigma\in \sh{\sigma}{\bm X}\cap \sh{\sigma}{\bm X_{i^\star}}}\}.
\]
Taking expectations and using the definition of probability as expectation, we obtain
\[
\mathbb P(\bm X_\sigma\in\mathcal U_\sigma)
\ge
\mathbb E\!\left[
\mathbf 1_{\{\bm X_\sigma\in \sh{\sigma}{\bm X}\cap \sh{\sigma}{\bm X_{i^\star}}}\}
\right].
\]

Now consider the conditional expectation with respect to
$\bm X,\bm X_{i^\star}$.
By \Cref{lem:random-centers-to-Phi},
\[
\mathbb E\!\left[
\mathbf 1_{\{\bm X_\sigma\in \sh{\sigma}{\bm X}\cap \sh{\sigma}{\bm X_{i^\star}}}\}
\;\middle|\;
\bm X,\bm X_{i^\star}
\right]
=
\Phi_{d,c}\!\left(\frac{\|\bm X-\bm X_{i^\star}\|}{\sigma}\right)
=
\Phi_{d,c}\!\left(\frac{d_{1\mathrm{NN}}(\bm X)}{\sigma}\right).
\]
Applying the tower property once more gives
\[
\mathbb P(\bm X_\sigma\in\mathcal U_\sigma)
\ge
\mathbb E\!\left[
\Phi_{d,c}\!\left(\frac{d_{1\mathrm{NN}}(\bm X)}{\sigma}\right)
\right],
\]
which establishes the lower bound.

Write
\[
\mathbf 1_{\{\bm X_\sigma\in\mathcal U_\sigma\}}
\le
\mathbf 1_{E^c}
+
\mathbf 1_{E}\mathbf 1_{\{\bm X_\sigma\in\mathcal U_\sigma\}},
\qquad
E:=\{\bm X_\sigma\in \sh{\sigma}{\bm X}\}.
\]
Taking expectations yields
\[
\mathbb P(\bm X_\sigma\in\mathcal U_\sigma)
\le
\mathbb P(E^c)+\mathbb P(E\cap\{\bm X_\sigma\in\mathcal U_\sigma\}).
\]
By Lemma~\ref{lem:one_shell_lemma_main},
\(
\mathbb P(E^c)=\mathbb P(\|\bm Z\|\notin[r^{\mathrm{in}}_{c,d},r^{\mathrm{out}}_{c,d}])\le 2e^{-c}.
\)

On the event $E$, if $\bm X_\sigma\in\mathcal U_\sigma$ then $\bm X_\sigma\in \sh{\sigma}{\bm X_i}$ for at least one $i$.
Thus,
\[
\mathbf 1_E\mathbf 1_{\{\bm X_\sigma\in\mathcal U_\sigma\}}
\le
\sum_{i=1}^N \mathbf 1_{E}\mathbf 1_{\{\bm X_\sigma\in \sh{\sigma}{\bm X_i}}\}.
\]
Taking expectations and using linearity, we have that
\[
\mathbb P(E\cap\{\bm X_\sigma\in\mathcal U_\sigma\})
\le
\sum_{i=1}^N \mathbb P(E\cap\{\bm X_\sigma\in \sh{\sigma}{\bm X_i}\}).
\]

Fix $i$.
By \Cref{lem:random-centers-to-Phi} and the tower property,
\[
\mathbb P(E\cap\{\bm{X}_\sigma\in \sh{\sigma}{\bm X_i}\})
=
\mathbb E\!\left[
\mathbb E\!\left[
\mathbf 1_{\{\bm{X}_\sigma\in \sh{\sigma}{\bm X}\cap \sh{\sigma}{\bm X_i}}\}
\;\middle|\;
\bm{X},\bm{X}_i
\right]
\right]
=
\mathbb E\!\left[
\Phi_{d,c}\!\left(\frac{\|\bm{X}-\bm{X}_i\|}{\sigma}\right)
\right].
\]
Since $\bm{X}_i$ is independent of $\bm{X}$ and $\bm{X}_i\stackrel{d}{=}\bm{X}'$, this equals
\(\mathbb E[\Phi_{d,c}(\|\bm{X}-\bm{X}'\|/\sigma)]\).
Summing over $i$ and adding the bound on $\mathbb P(E^c)$ yields
\[
\mathbb P(\bm{X}_\sigma\in\mathcal U_\sigma)
\le
2e^{-c}
+
N\,\mathbb E\!\left[
\Phi_{d,c}\!\left(\frac{\|\bm{X}-\bm{X}'\|}{\sigma}\right)
\right],
\]
which completes the proof.
\end{proof}

\subsection{Proof of \Cref{thm:inf-many-minimizers-disjoint-shell-informal}}

We formalize that, when training is effectively restricted to disjoint Gaussian shells, the denoising objective is highly non-identifiable and admits infinitely many global optima.
To isolate what the objective does (and does not) constrain on each shell, we idealize the Gaussian corruption by replacing the radially concentrated law of $Z$ with a \emph{uniform distribution} on the corresponding unit shell $\mathcal S:=\{z\in\mathbb R^d:\ r^{\mathrm{in}}_{c,d}\le \|z\|\le r^{\mathrm{out}}_{c,d}\}$.

\begin{theorem}
\label{thm:inf-many-minimizers-disjoint-shell}
Fix $\sigma>0$ and a finite dataset $\mathcal D=\{x_1,\dots,x_N\}\subset\mathbb R^d$.
Let $\sh{\sigma}{x_i}$ be the Gaussian shells from Section~\ref{Gaussianshells}, and
assume they are pairwise disjoint:
\[
\sh{\sigma}{x_i}\cap \sh{\sigma}{x_j}=\emptyset
\qquad \text{for all  } i\neq j.
\]
Let $\bm Z\sim\mathrm{Unif}(\mathcal S)$, where $\mathcal S:=\{z\in\mathbb R^d:\ r^{\mathrm{in}}_{c,d}\le \|z\|\le r^{\mathrm{out}}_{c,d}\}$,
and consider the shell-only objective
\[
\mathcal L_\sigma(m)
:= \mathbb E_{\substack{i\sim \mathrm{Unif}([N])\\ \bm Z\sim \mathrm{Unif}(\mathcal S)}}
\Bigl[\bigl\|m(x_i+\sigma \bm Z)-x_i\bigr\|_2^2\Bigr]
\]
over measurable $m:\mathbb R^d\to\mathbb R^d$. Then, there are infinitely many global minimizers: any $m$ such that
\[
m(y)=x_i \quad \text{for all } y\in \sh{\sigma}{x_i},\ \ i=1,\dots,N
\]
is a minimizer, and $m$ can be defined arbitrarily on
$\mathbb R^d\setminus\bigcup_{i=1}^N \sh{\sigma}{x_i}$.
\end{theorem}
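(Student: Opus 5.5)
The plan is to show that the shell-only objective is bounded below by $0$, that every function described in the statement attains $0$, and that there are infinitely many such functions.

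\emph{Step 1 (nonnegativity).} The integrand $\|m(x_i+\sigma\bm Z)-x_i\|_2^2$ is nonnegative, so $\mathcal L_\sigma(m)\ge 0$ for every measurable $m$. The expectation is well-defined in $[0,\infty]$ even without integrability assumptions.

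\emph{Step 2 (the candidate functions attain zero).} Since $\bm Z\sim\mathrm{Unif}(\mathcal S)$, we have $r^{\mathrm{in}}_{c,d}\le\|\bm Z\|\le r^{\mathrm{out}}_{c,d}$ almost surely (here $d\ge2$ is assumed implicitly so that $\mathcal S$ has positive volume). Hence $x_i+\sigma\bm Z\in\sh{\sigma}{x_i}$ almost surely, directly from the definition \eqref{eq:Gaussian shell}. If $m\equiv x_i$ on $\sh{\sigma}{x_i}$ for each $i$, then conditional on the index $i$ the integrand vanishes almost surely. Averaging over $i\sim\mathrm{Unif}([N])$ gives $\mathcal L_\sigma(m)=0$, so every such $m$ is a global minimizer.

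\emph{Step 3 (existence and infinitude).} Pairwise disjointness of the shells is what makes the prescription ``$m=x_i$ on $\sh{\sigma}{x_i}$'' consistent: no point receives two conflicting values. Measurability is also clear, since each shell is closed (hence Borel) and $m=\sum_i x_i\mathbf 1_{\sh{\sigma}{x_i}}+g\,\mathbf 1_{\mathbb R^d\setminus\bigcup_i\sh{\sigma}{x_i}}$ is measurable for any measurable $g$. It then suffices that the complement $\mathbb R^d\setminus\bigcup_i\sh{\sigma}{x_i}$ is nonempty. It is in fact unbounded, since the union of finitely many bounded shells is bounded, and hence it has positive measure. Taking $g\equiv v$ for the different constant vectors $v\in\mathbb R^d$ therefore yields infinitely many distinct minimizers, even after identifying functions that agree almost everywhere.

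There is no real obstacle here. The only point needing care is Step 2: the support of $\mathrm{Unif}(\mathcal S)$ must lie inside $\mathcal S$, so that the noisy input never leaves its own shell. This is exactly where the idealization of replacing the Gaussian by the uniform law on $\mathcal S$ is used. With a true Gaussian, the noisy point leaves the shell with probability up to $2e^{-c}$ (Lemma~\ref{lem:one_shell_lemma_main}), and the minimizer would instead be pinned down everywhere as the posterior mean. Optionally, I would also remark that any minimizer must equal $x_i$ almost everywhere on each shell, which shows that the freedom lies exactly off the shells.
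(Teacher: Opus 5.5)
Your proof is correct and follows the paper's argument: $\mathcal L_\sigma\ge 0$, and since $x_i+\sigma\bm Z\in \sh{\sigma}{x_i}$ for every $\bm Z$ in the support of $\mathrm{Unif}(\mathcal S)$, any $m$ equal to $x_i$ on each shell has zero loss. Your Step 3 (consistency of the prescription via disjointness, measurability, and the unbounded complement of positive measure giving infinitely many a.e.-distinct minimizers) spells out points the paper leaves implicit, and it is a useful addition.
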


\begin{proof}
We first note that $\mathcal L_\sigma(m)\ge 0$ for every measurable $m$.

Let $m$ be any measurable map satisfying
\[
m(y)=x_i\quad\text{for all }y\in \sh{\sigma}{x_i},\ \ i=1,\dots,N.
\]
Then for every $i\in[N]$ and every $\bm Z$ with $\|\bm Z\|\in[r^{\mathrm{in}}_{c,d},r^{\mathrm{out}}_{c,d}]$, we again have
$x_i+\sigma \bm Z\in \sh{\sigma}{x_i}$ and therefore
\[
m(x_i+\sigma \bm Z)=x_i,
\qquad
\|m(x_i+\sigma \bm Z)-x_i\|_2^2=0.
\]
Taking expectation shows $\mathcal L_\sigma(m)=0$, hence every such $m$ is a global minimizer.
\end{proof}

\section{Denoiser Behavior at Large Noise Levels}\label{sec:large noise taylor}
In this section, we prove Theorem~\ref{thm:gauss-plus-excess}. Instead of working with the $\sigma$ parameter directly, we first work with a parameter $t\in[0,1]$ from flow matching as already mentioned in \Cref{sec:cosine-similarity}.

We use $\bm X\sim p$ to denote a random variable. Since we have assumed $p$ to have bounded support, we further assume that there is $R>0$ such that
$\mathrm{supp}(p)\subset B_R(0)$. We next rewrite the denoiser $m_t$ (cf. \Cref{eq:denoiser-t}) as follows.

For $x\in\R^d$ and $t\in(-1,1)$, define
\begin{equation}\label{eq:def-wDNm}
W_t(y;x)\;:=\;\exp\!\Big(-\frac{\|x-ty\|^2}{2(1-t)^2}\Big),\qquad
D_t(x)\;:=\;\E[W_t(\bm X;x)],\qquad
N_t(x)\;:=\;\E[\bm X\,W_t(\bm X;x)].
\end{equation}
Then, we have that for any $t\in[0,1)$
\begin{equation}\label{eq:def-mt}
m_t(x)\;=\mathbb{E}[\bm X\mid\bm X_t=x]=\;\frac{N_t(x)}{D_t(x)}.
\end{equation}
Note that although $\mathbb{E}[\bm X\mid\bm X_t=x]$ is only defined for $t\in[0,1)$, the formula $N_t(x)/D_t(x)$ is well-defined for all $t\in(-1,1)$, and we will work with this extension of $m_t$ in the proof.
Note that $0<W_t(y;x)\le 1$ for all $t\in(-1,1)$, so $D_t(x)\in(0,1]$.

We begin by establishing a collection of lemmas that will be used in the proof of \Cref{thm:gauss-plus-excess}.

\begin{lemma}
\label{lem:DN-C2}
Fix $\delta\in(0,1)$.
Then for every $x\in\R^d$ the functions
\[
t\longmapsto D_t(x),\qquad t\longmapsto N_t(x)
\]
are $C^2$ on $[-\delta,\delta]$.
Moreover, for $k=0,1,2$,
\[
(x,t)\longmapsto \partial_t^k D_t(x),
\qquad
(x,t)\longmapsto \partial_t^k N_t(x)
\]
are continuous on $\R^d\times[-\delta,\delta]$.
Moreover,
\[
(x,t)\longmapsto \partial_t^2 m_t(x)
\]
is continuous on $\R^d\times[-\delta,\delta]$.
\end{lemma}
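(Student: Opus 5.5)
The plan is to differentiate under the expectation sign. The integrands are smooth and uniformly bounded, with bounded derivatives, because $\bm X$ takes values in the bounded set $B_R(0)$ and $|t|\le\delta<1$ keeps $1-t$ away from zero.

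Write $W_t(y;x)=\exp(-\phi(t,x,y))$ with
\[
\phi(t,x,y)=\frac{\|x-ty\|^2}{2(1-t)^2}.
\]
For $t\in[-\delta,\delta]$ we have $(1-t)\ge 1-\delta>0$, so $\phi$ is a rational function of $t$ with nonvanishing denominator. Hence $\phi$ is $C^\infty$ jointly in $(t,x,y)$ on $[-\delta,\delta]\times\R^d\times\R^d$, and so is $W_t$. Computing,
\[
\partial_t\phi=\frac{-\langle x-ty,y\rangle}{(1-t)^2}+\frac{\|x-ty\|^2}{(1-t)^3}
\]
and $\partial_t^2\phi$ are polynomials in $(x,y,t)$ divided by powers of $(1-t)$. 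The chain rule gives
\[
\partial_tW=-\partial_t\phi\,W,\qquad \partial_t^2W=\bigl((\partial_t\phi)^2-\partial_t^2\phi\bigr)W .
\]
The same expressions, multiplied by $y$, give the derivatives of $yW_t(y;x)$.

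The main step is a dominating bound. Fix a compact set $K\subset\R^d$ of $x$-values. For $x\in K$, $\|y\|\le R$ and $|t|\le\delta$, each of $|\partial_t^kW|$ and $\|y\,\partial_t^kW\|$ for $k=0,1,2$ is bounded by a constant $C_{K,R,\delta}$. This holds because we have continuous functions on a compact set, and $0<W\le 1$. Since $\bm X\in B_R(0)$ almost surely, the standard theorem on differentiation under the integral applies twice. It gives $t\mapsto D_t(x),N_t(x)\in C^2([-\delta,\delta])$ with $\partial_t^kD_t(x)=\E[\partial_t^kW_t(\bm X;x)]$, and similarly for $N_t$. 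Joint continuity in $(x,t)$ of these expectations then follows from dominated convergence, since the integrands are jointly continuous in $(x,t)$ and uniformly bounded on $K\times[-\delta,\delta]$. Continuity is a local property, so letting $K$ range over closed balls yields continuity on all of $\R^d\times[-\delta,\delta]$. (One could alternatively note that $W$ decays in $x$, but locality suffices.)

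For $m_t=N_t/D_t$, I use $D_t(x)>0$: $W_t>0$ pointwise, so its expectation is positive. Moreover, by the uniform bound $\phi\le C_K$ on $K\times[-\delta,\delta]$ with $\|y\|\le R$, we get $D_t(x)\ge e^{-C_K}$ there. The quotient rule then expresses
\[
\partial_t^2m_t=\frac{\partial_t^2N_t}{D_t}-\frac{2\,\partial_tN_t\,\partial_tD_t+N_t\,\partial_t^2D_t}{D_t^2}+\frac{2N_t(\partial_tD_t)^2}{D_t^3},
\]
a combination of functions already shown to be jointly continuous, divided by powers of a continuous, strictly positive function. So $\partial_t^2m_t(x)$ is jointly continuous.

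The only real obstacle is bookkeeping. The uniform bound must hold on a neighborhood in $x$, not just pointwise, to obtain joint continuity, and the lower bound on $D_t$ must be uniform there. Both follow from compactness of $\overline{B_R(0)}\times K\times[-\delta,\delta]$ together with $1-t\ge 1-\delta$.
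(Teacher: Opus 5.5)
Your proposal is correct and is essentially the paper's own proof: the paper likewise writes $\partial_t^k W_t(y;x)=P_k(x,y,t)\,W_t(y;x)$ with $P_k$ polynomial in $(x,y)$ and rational in $t$ with powers of $(1-t)$ in the denominator, and bounds these uniformly on $\{\|x\|\le a\}\times[-\delta,\delta]\times\{\|y\|\le R\}$. It then applies dominated convergence to differentiate under $\E$ and get joint continuity, and finishes with the quotient rule using $D_t(x)>0$; your explicit derivative formulas and the uniform lower bound $D_t(x)\ge e^{-C_K}$ are just more detailed bookkeeping of the same argument.
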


\begin{proof}
Write
\[
W_t(y;x)=e^{F(t,x,y)},
\qquad
F(t,x,y)=-\frac{\|x-ty\|^2}{2(1-t)^2}.
\]
The function $F$ is smooth on $(-1,1)\times\R^d\times\R^d$.
For $k=0,1,2$,
\[
\partial_t^k W_t(y;x)=P_k(x,y,t)\,W_t(y;x),
\]
where $P_k$ is rational in $t$ with the denominators of the form $(1-t)^{-m}$, and polynomial in $(x,y)$.

Fix $a>0$.
Since $|t|\le\delta<1$, the factors $(1-t)^{-m}$ are uniformly bounded.
Hence there exists $C>0$ such that
\[
\sup_{\substack{\|x\|\le a\\ |t|\le\delta\\ \|y\|\le R}}
|\partial_t^k W_t(y;x)|\le C.
\]
Because $\|\bm X\|\le R$ a.s., we obtain
\[
|\partial_t^k W_t(\bm X;x)|\le C,
\qquad
\|\bm X\,\partial_t^k W_t(\bm X;x)\|\le RC
\quad\text{a.s.}
\]
By dominated convergence,
\[
\partial_t^k D_t(x)=\E[\partial_t^k W_t(\bm X;x)],
\qquad
\partial_t^k N_t(x)=\E[\bm X\,\partial_t^k W_t(\bm X;x)],
\]
and these depend continuously on $(x,t)$.
We have $m_t(x)=N_t(x)/D_t(x)$ and $D_t(x)>0$.
Since $N_t$ and $D_t$ are $C^2$ in $t$ with continuous derivatives.
The quotient rule gives the claim that $\partial_t^2 m_t(x)$ is continuous on $\R^d\times[-\delta,\delta]$.
\end{proof}

Now let $\mu:=\E[\bm X]$ and $\Sigma:=\mathrm{Cov}(\bm X)$. Let $G=\mathcal N(\mu,\Sigma)$ and define
$m_t^G(x)$ by the same formula \Cref{eq:def-wDNm}--\Cref{eq:def-mt} but with expectation taken under $G$.

Recall from \Cref{eq:Gaussian closed form} that in the Gaussian case
\(Y\sim\mathcal N(\mu,\Sigma)\) with \(\Sigma\succeq 0\), we have a closed-form
expression for \(m_\sigma^G(x)\).
Using \Cref{eq:deterministic denoiser relation}, this yields the following
closed form for \(m_t^G\) for every \(t\in(-1,1)\):
\begin{equation}\label{eq:mtG-closed-form}
m_t^G(x)
=
\mu
+
t\,\Sigma\bigl(t^2\Sigma+(1-t)^2 I\bigr)^{-1}(x-t\,\mu).
\end{equation}
This formula follows from the conditional mean formula for multivariate
Gaussian distributions.

Then it is not hard to prove  the following lemma.
\begin{lemma}[Gaussian smoothness of $m_t^G$]\label{lem:gaussian-mt-closed-form}
 For each fixed $x\in\mathbb R^d$, the map
$t\mapsto m_t^G(x)$ is $C^\infty$ on $(-1,1)$. Moreover,
for any $a>0$ and any $\delta\in(0,1)$, the map $(x,t)\mapsto m_t^G(x)$ is $C^\infty$
on $B_a(0)\times[-\delta,\delta]$.
\end{lemma}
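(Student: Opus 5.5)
The plan is to read off smoothness directly from the closed form \Cref{eq:mtG-closed-form},
\[
m_t^G(x)=\mu+t\,\Sigma\,A(t)^{-1}(x-t\mu),\qquad A(t):=t^2\Sigma+(1-t)^2 I .
\]
Every ingredient except the inverse is a polynomial in $(x,t)$. The whole argument therefore reduces to showing that $A(t)$ is invertible for $t\in(-1,1)$ and that $t\mapsto A(t)^{-1}$ is $C^\infty$ there.

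\textbf{Step 1 (invertibility).} Since $\Sigma\succeq 0$ and $(1-t)^2>0$ for $t<1$, we have
\[
A(t)\succeq (1-t)^2 I\succ 0 .
\]
So $A(t)$ is symmetric positive definite, with smallest eigenvalue at least $(1-t)^2$. On $[-\delta,\delta]$ this is bounded below by $(1-\delta)^2>0$.

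\textbf{Step 2 (smoothness of the inverse).} The entries of $A(t)$ are polynomials in $t$. By Cramer's rule,
\[
A(t)^{-1}=\operatorname{adj}(A(t))/\det A(t).
\]
The numerator is polynomial in $t$, and the denominator is a polynomial that never vanishes on $(-1,1)$, by Step 1. Hence every entry of $A(t)^{-1}$ is a rational function with nonvanishing denominator, and so is $C^\infty$ on $(-1,1)$. Equivalently, one can note that matrix inversion is real-analytic on $GL_d(\mathbb R)$ and compose it with the polynomial path $t\mapsto A(t)$.

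\textbf{Step 3 (assemble).} Write $m_t^G(x)=\mu+t\,\Sigma A(t)^{-1}x-t^2\,\Sigma A(t)^{-1}\mu$. This is a finite sum of products of $C^\infty$ functions of $t$ with functions that are affine in $x$. It is therefore $C^\infty$ jointly in $(x,t)$ on $\mathbb R^d\times(-1,1)$, and in particular on $B_a(0)\times[-\delta,\delta]$. Fixing $x$ gives the first claim that $t\mapsto m_t^G(x)$ is $C^\infty$ on $(-1,1)$.

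One minor point needs care: that \Cref{eq:mtG-closed-form} coincides with the extension $N_t/D_t$ computed under $G$ for negative $t$ as well. If this were not already taken as the definition, I would check it by completing the square in the Gaussian integrals defining $D_t$ and $N_t$. Both are explicit Gaussian integrals for any $t\in(-1,1)$, and the computation yields the same formula, since only $t^2$ and $(1-t)^2$ enter the quadratic form.

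I expect no real obstacle. The only substantive point is the uniform positive-definiteness in Step 1, and it is immediate.
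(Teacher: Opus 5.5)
Your argument is correct and matches what the paper intends. The paper gives no proof: it states the lemma right after the closed form \Cref{eq:mtG-closed-form} and calls it ``not hard to prove,'' and the content is your observation that $t^2\Sigma+(1-t)^2I\succeq(1-t)^2I\succ 0$ for $t<1$, so $m_t^G(x)$ is affine in $x$ with coefficients rational in $t$ and nonvanishing denominator.

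One small correction to your side remark on negative $t$. It is not true that only $t^2$ and $(1-t)^2$ enter, since $t$ appears linearly in the exponent through $-2t\langle x,y\rangle$. The correct reason is that for every $t\neq 1$ the weight $W_t(\cdot\,;x)$ is, up to a constant, the Gaussian likelihood of $t\bm Y+(1-t)\bm Z=x$ with $\bm Y\sim G$. Hence $N_t/D_t$ under $G$ is the jointly Gaussian conditional mean $\mu+t\Sigma\bigl(t^2\Sigma+(1-t)^2I\bigr)^{-1}(x-t\mu)$, whatever the sign of $t$.
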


\begin{lemma}\label{lem:first-two-coeff}
For every fixed $x\in\R^d$,
\[
m_0(x)=\mu,\qquad \left.\frac{d}{dt}m_t(x)\right|_{t=0}=\Sigma x.
\]
The same identities hold for $m_t^G(x)$ (with the same $\mu,\Sigma$).
\end{lemma}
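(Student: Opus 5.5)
The plan is to compute the value and first $t$-derivative of $m_t(x)=N_t(x)/D_t(x)$ at $t=0$ directly from the definitions in \Cref{eq:def-wDNm}, using the dominated-convergence differentiation already justified in \Cref{lem:DN-C2}. At $t=0$ we have $W_0(y;x)=\exp(-\|x\|^2/2)$, which does not depend on $y$. Hence $D_0(x)=e^{-\|x\|^2/2}$ and $N_0(x)=\mu\,e^{-\|x\|^2/2}$, so $m_0(x)=\mu$.

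For the derivative, write $W_t=e^{F}$ with $F(t,x,y)=-\|x-ty\|^2/(2(1-t)^2)$ and differentiate at $t=0$. We have
\[
\partial_t F=\frac{\langle x-ty,y\rangle}{(1-t)^2}-\frac{\|x-ty\|^2}{(1-t)^3},
\]
so $\partial_t F|_{t=0}=\langle x,y\rangle-\|x\|^2$. Therefore
\[
\partial_t W_t(y;x)|_{t=0}=(\langle x,y\rangle-\|x\|^2)e^{-\|x\|^2/2}.
\]
This gives
\begin{align*}
\partial_t D_t(x)|_{0}&=(\langle x,\mu\rangle-\|x\|^2)\,e^{-\|x\|^2/2},\\
\partial_t N_t(x)|_{0}&=\bigl(\E[\bm X\langle x,\bm X\rangle]-\|x\|^2\mu\bigr)\,e^{-\|x\|^2/2}.
\end{align*}
By the quotient rule, $\partial_t m_t|_0=(N_0'D_0-N_0D_0')/D_0^2$, and the common factor $e^{-\|x\|^2/2}$ cancels. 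The result is
\[
\E[\bm X\bm X^T]x-\|x\|^2\mu-\mu(\langle x,\mu\rangle-\|x\|^2)=\bigl(\E[\bm X\bm X^T]-\mu\mu^T\bigr)x=\Sigma x.
\]
Interchanging derivative and expectation is legitimate by the uniform bound on $\partial_tW$ from \Cref{lem:DN-C2}, since $p$ has support in $B_R(0)$.

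For the Gaussian $G$ the support is not bounded, so I would instead use the closed form \Cref{eq:mtG-closed-form}. At $t=0$ it gives $m_0^G(x)=\mu$. Differentiating $t\,\Sigma(t^2\Sigma+(1-t)^2I)^{-1}(x-t\mu)$ at $t=0$, only the derivative of the leading factor $t$ survives, which yields $\Sigma\cdot I^{-1}\cdot x=\Sigma x$. Smoothness here comes from \Cref{lem:gaussian-mt-closed-form}.

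Alternatively, the computation above can be repeated verbatim under $G$, since all Gaussian moments needed are finite and the dominated-convergence step goes through with a Gaussian integrable envelope. Either way, $G$ has the same $\mu$ and $\Sigma$, so the identities agree. The only delicate point is justifying differentiation under the expectation, and that is already handled.
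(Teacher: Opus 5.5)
Your proof is correct and follows the paper's argument: $W_0$ is constant in $y$, which gives $m_0=\mu$. Differentiating $N_t/D_t$ at $t=0$, using that the $t$-derivative of $\log W_t(y;x)$ at $t=0$ is $\langle x,y\rangle-\|x\|^2$, then yields $\E[\bm X\bm X^T]x-\mu\mu^Tx=\Sigma x$.

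The only difference is in the Gaussian case. You primarily use the closed form \Cref{eq:mtG-closed-form}, whereas the paper just says the same computation applies to $G$. Your route sidesteps the fact that the domination argument in \Cref{lem:DN-C2} relies on bounded support, which $G$ lacks.
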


\begin{proof}
At $t=0$, the weight is constant in $y$:
\[
W_0(y;x)=\exp(-\|x\|^2/2)=:W_0(x).
\]
Therefore $N_0(x)=W_0(x)\E[\bm X]=W_0(x)\mu$ and $D_0(x)=W_0(x)$, giving $m_0(x)=\mu$.

For the first derivative, differentiate $m_t=N_t/D_t$ 
\begin{equation}\label{eq:added}
m_t'=\frac{N_t'}{D_t}-\frac{N_t}{D_t}\frac{D_t'}{D_t}.   
\end{equation}
It suffices to compute $N_0',D_0'$. A direct calculation yields
\[
\left.\partial_t \log W_t(y;x)\right|_{t=0}=-\|x\|^2+\langle x,y\rangle,
\qquad
\left.\partial_t W_t(y;x)\right|_{t=0}=W_0(x)\,(-\|x\|^2+\langle x,y\rangle).
\]
Hence
\[
D_0'(x)=W_0(x)\E[-\|x\|^2+\langle x,\bm X\rangle],\qquad
N_0'(x)=W_0(x)\E\big[\bm X(-\|x\|^2+\langle x,\bm X\rangle)\big].
\]
Substituting into the \cref{eq:added} at $t=0$ and using $m_0(x)=\mu$ gives
\[
m_0'(x)
=\E\big[\bm X\langle x,\bm X\rangle\big]-\mu\,\E[\langle x,\bm X\rangle]
=\E\big[(\bm X-\mu)\langle x,(\bm X-\mu)\rangle\big]
=\Sigma x.
\]
The same computation applies to $G=\mathcal N(\mu,\Sigma)$, so the identities hold for $m_t^G$ as well.
\end{proof}

\begin{theorem}
\label{thm:gauss-plus-excess-assist}
Assume that $\supp(p)\subset B_R(0)$.  
Then for all $\delta\in(0,1)$, there exists a bounded function
\[
\tilde H:\overline{B_R(0)}\times[-\delta,\delta]\to\mathbb{R}^d
\]
such that for all $x\in\overline{B_R(0)}$ and all $|t|\le\delta$,
\[
m_t(x)=m_t^G(x)+t^2 \tilde H(x,t).
\]
\end{theorem}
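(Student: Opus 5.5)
The plan is a second-order Taylor expansion in $t$ about $t=0$, carried out separately for $m_t$ and $m_t^G$, followed by subtraction. Set $f(x,t):=m_t(x)-m_t^G(x)$ on $\overline{B_R(0)}\times[-\delta,\delta]$.

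\textbf{Step 1: regularity.} By \Cref{lem:DN-C2}, for each fixed $x$ the map $t\mapsto m_t(x)$ is $C^2$ on $[-\delta,\delta]$. Moreover, $(x,t)\mapsto\partial_t^2 m_t(x)$ is jointly continuous. By \Cref{lem:gaussian-mt-closed-form}, $m_t^G$ is $C^\infty$ jointly on $B_a(0)\times[-\delta,\delta]$ for any $a>R$. Hence $f(x,\cdot)$ is $C^2$ and $\partial_t^2 f$ is continuous on the compact set $K:=\overline{B_R(0)}\times[-\delta,\delta]$.

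\textbf{Step 2: cancellation of low-order terms.} \Cref{lem:first-two-coeff} gives $f(x,0)=\mu-\mu=0$ and $\partial_t f(x,0)=\Sigma x-\Sigma x=0$ for every $x$. Taylor's theorem with integral remainder then yields
\[
f(x,t)=t^2\int_0^1(1-s)\,\partial_t^2 f(x,st)\,ds .
\]
We therefore define
\[
\tilde H(x,t):=\int_0^1(1-s)\,\partial_t^2 f(x,st)\,ds.
\]
This is valid for all $|t|\le\delta$, including $t=0$. There it simply gives $\tilde H(x,0)=\tfrac12\partial_t^2 f(x,0)$, so no separate case $t\neq0$ is needed.

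\textbf{Step 3: boundedness.} Since $\partial_t^2 f$ is continuous on the compact set $K$, it is bounded there by some $M$. It follows that $\|\tilde H\|\le M/2$. Continuity of $\tilde H$ also follows by dominated convergence, which is useful later for converting to \Cref{thm:gauss-plus-excess}.

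\textbf{Main obstacle.} The only delicate point is ensuring that the second derivative of the non-Gaussian denoiser is uniformly controlled near $t=0$, jointly in $x$. This requires the bounded-support assumption, so that $\partial_t^k W_t$ is dominated. It also requires $D_t(x)$ to be bounded away from $0$ on $K$, so that the quotient rule preserves continuity. Both are supplied by \Cref{lem:DN-C2}: $D_t>0$ is continuous on a compact set, hence bounded below there.

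One further check is needed: \Cref{lem:first-two-coeff} applies to the extension $N_t/D_t$ at $t=0$. Its proof computes exactly with this formula, so it does.
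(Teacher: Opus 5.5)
Your proposal is correct and follows the paper's proof essentially step for step. Like the paper, you define $f_x(t)=m_t(x)-m_t^G(x)$, use \Cref{lem:first-two-coeff} to cancel the zeroth- and first-order terms, set $\tilde H(x,t)=\int_0^1(1-s)f_x''(st)\,ds$ via Taylor's theorem with integral remainder, and get boundedness from joint continuity of $f_x''$ on the compact set $\overline{B_R(0)}\times[-\delta,\delta]$. The one thing you add is citing \Cref{lem:gaussian-mt-closed-form} with $a>R$ for the regularity of the Gaussian term; the paper attributes all the regularity to \Cref{lem:DN-C2}, so yours is a slightly tighter version of the same argument.
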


\begin{proof}
For $x\in\overline{B_R(0)}$, define
\[
f_x(t):=m_t(x)-m_t^G(x).
\]
By Lemma~\ref{lem:DN-C2}, the function $f_x$ is of class $C^2$ in $t$.  
Moreover, Lemma~\ref{lem:first-two-coeff} implies that
\[
f_x(0)=f_x'(0)=0.
\]
Again by Lemma~\ref{lem:DN-C2}, the map $(x,t)\mapsto f_x''(t)$ is continuous on
$\overline{B_R(0)}\times[-\delta,\delta]$.

Define
\[
\tilde H(x,t):=\int_0^1 (1-s)\,f_x''(st)\,ds.
\]
Then the map $(x,t,s)\mapsto (1-s)f_x''(st)$ is continuous on the compact set
\[
\overline{B_R(0)}\times[-\delta,\delta]\times[0,1],
\]
and therefore $\tilde H$ is continuous on
$\overline{B_R(0)}\times[-\delta,\delta]$.
By compactness, $\tilde H$ is uniformly bounded on this set.

Applying Taylor's theorem with integral remainder, we obtain for all
$|t|\le\delta$,
\[
f_x(t)=t^2\int_0^1 (1-s)f_x''(st)\,ds
= t^2 \tilde H(x,t).
\]
Consequently,
\[
m_t(x)=m_t^G(x)+t^2 \tilde H(x,t),
\]
which completes the proof.
\end{proof}

Now, we are going to translate the second-order expansion in the small parameter $t$ into an asymptotic expansion in terms of the scale parameter $\sigma$.

\begin{proof}[Proof of Theorem~\ref{thm:gauss-plus-excess}]
    Let $\sigma=\frac{1-t}{t}$, then, by \Cref{eq:deterministic denoiser relation} we have that
\[
m_\sigma(x)=m_t(t x),
\qquad
m_\sigma^G(x)=m_t^G(t x).
\] 
By \Cref{thm:gauss-plus-excess-assist}, for any $|t|\le\delta$ and $x\in\overline{B_R(0)}$,
\[
m_t(x)=m_t^G(x)+t^2 \tilde H(x,t).
\]
Replacing $x$ by $t x$, we obtain
\[
m_t(t x)=m_t^G(t x)+t^2 \tilde H(t x,t).
\]
By the relationship between $m_t$ and $m_\sigma$, we have
\[
m_\sigma(x)=m_\sigma^G(x)+t(\sigma)^2 \tilde H(t(\sigma)x,t(\sigma)).
\]

Define 
\[
H(x,\sigma):=\tilde H(t(\sigma)x,t(\sigma)).
\]
Since $t=\frac{1}{1+\sigma}\to 0$ as $\sigma\to\infty$, there exists $\sigma_0>0$ such that
$|t(\sigma)|\le 1$  for all $\sigma\ge\sigma_0$.

\[
\|tx\|\le \|x\|\le R,
\]
so $tx\in\overline{B_R(0)}$.

Therefore, for $\sigma\ge\sigma_0$,
$(tx,t)$ lies in
$\overline{B_R(0)}\times[-\delta,\delta]$, where $\tilde H$ is bounded.
Hence $H$ is bounded on
$\overline{B_R(0)}\times[\sigma_0,\infty)$.

Finally, since $t=(1+\sigma)^{-1}$, then $t^2=(1+\sigma)^{-2}$,
which yields the stated form.
\end{proof}

\section{Sensitivity Localization}
\label{app:sensitivity-localization}

We analyze the case when the data distribution $p$ is a Gaussian distribution $p=\mathcal{N}(\mu,\Sigma)$ on $\R^d$ to  establish a spatial sensitivity decay control. Recall from \Cref{eq:Gaussian closed form} that the optimal denoiser for $p$ is given by the closed-form
\[m_\sigma(x)=\mu+\Sigma(\Sigma+\sigma^2 I)^{-1}(x-\mu).\]

In the following we only consider the case of a single channel.
The multivariate case can be handled by the same arguments by treating each channel separately.
We think of $d=r\times r$ where $r$ is the resolution and then each dimension represents a single-valued pixel.
To quantify how much the denoiser at pixel $i$ depends on observation $x_j$, we
study the Jacobian $\nabla_x m_\sigma(x)$. 
By Tweedie's formula~\citep{efron2011tweedie}, we have that
\[\nabla_x m_\sigma(x)=\frac{1}{\sigma^2}\,\mathrm{Cov}(\bm X \mid \bm{X}_\sigma=x)= \Sigma(\Sigma+\sigma^2 I)^{-1}.\]
The expression for $\nabla_x m_\sigma(y)$ was derived in~\cite{lukoianov2025locality}, who also observed that $\nabla_x m_\sigma(y)\to I$ as $\sigma\to 0$.
However, convergence to the identity does not by itself characterize how the off-diagonal entries decay with spatial distance.

We now make the spatial decay claim precise under cyclic stationary.

\begin{lemma}\label{lem:ij formula}
    For any $i\neq j$, we have that
    \begin{equation}
    \label{eq:offdiag-id-simple}
    \left|\frac{\partial [m_\sigma]_i}{\partial x_j}\right|
    =\sigma^2\left|[(\Sigma+\sigma^2 I)^{-1}]_{ij}\right|.
    \end{equation}
\end{lemma}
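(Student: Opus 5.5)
The plan is to rewrite the Jacobian $\Sigma(\Sigma+\sigma^2 I)^{-1}$ as the identity minus a resolvent term, and then read off the off-diagonal entries. We use the closed form already recorded above, $\nabla_x m_\sigma(x)=\Sigma(\Sigma+\sigma^2 I)^{-1}$. Its $(i,j)$ entry is exactly $\partial [m_\sigma]_i/\partial x_j$; with the convention that row $i$ of the Jacobian corresponds to output coordinate $i$, this matrix is the Jacobian of the affine map $x\mapsto \mu+\Sigma(\Sigma+\sigma^2I)^{-1}(x-\mu)$.

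The key algebraic step is the identity
\[
\Sigma(\Sigma+\sigma^2 I)^{-1} = (\Sigma+\sigma^2 I - \sigma^2 I)(\Sigma+\sigma^2 I)^{-1} = I - \sigma^2(\Sigma+\sigma^2 I)^{-1}.
\]
This is valid because $\Sigma\succeq 0$ and $\sigma>0$ make $\Sigma+\sigma^2 I$ positive definite, hence invertible.

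Taking the $(i,j)$ entry with $i\neq j$, the identity contributes $I_{ij}=0$. Therefore
\[
\frac{\partial [m_\sigma]_i}{\partial x_j} = -\sigma^2\,[(\Sigma+\sigma^2 I)^{-1}]_{ij},
\]
and taking absolute values gives the claim.

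There is no real obstacle. The only point needing care is the index convention of the Jacobian, and even that is harmless: $\Sigma$ and $(\Sigma+\sigma^2I)^{-1}$ commute and both are symmetric, so the matrix $\Sigma(\Sigma+\sigma^2I)^{-1}$ is symmetric and the $(i,j)$ and $(j,i)$ entries coincide.
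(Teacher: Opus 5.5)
Your proposal is correct and matches the paper's proof: rewrite $\Sigma(\Sigma+\sigma^2 I)^{-1}=I-\sigma^2(\Sigma+\sigma^2 I)^{-1}$ and use $I_{ij}=0$ for $i\neq j$. Your remarks on the invertibility of $\Sigma+\sigma^2 I$ and the symmetry of the Jacobian are also correct; they spell out details the paper leaves implicit.
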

\begin{proof}
     The identity \eqref{eq:offdiag-id-simple} follows from the simple identity
    $\Sigma(\Sigma+\sigma^2 I)^{-1}=I-\sigma^2(\Sigma+\sigma^2 I)^{-1}$ and the fact that $I_{ij}=0$ for $i\neq j$.
\end{proof}

\begin{definition}[Circulant covariance]
A matrix $\Sigma\in\R^{d\times d}$ is called \emph{circulant} if there exists a vector
$c=[c_0,\dots,c_{d-1}]^T$ such that
\[
\Sigma_{ij}=c_{(i-j)\bmod d}, \qquad i,j\in\{0,\dots,d-1\}.
\]
Equivalently, each row of $\Sigma$ is a cyclic shift of its first row. 
\end{definition}

\begin{lemma}
\label{lem:circulant-spectrum}
Let $\Sigma\in\mathbb R^{d\times d}$ be circulant and let
$\omega=e^{2\pi i/d}$. Then $\Sigma$ is diagonalized by the discrete Fourier
basis $(\omega^{jk})_{j,k=0}^{d-1}$, with eigenvalues
$(\lambda_k)_{k=0}^{d-1}$. In particular, for any function $f$ applied
spectrally to $\Sigma$, the matrix $f(\Sigma)$ is circulant and its entries depend only on the wrap-around distance
$\mathrm{dist}(i,j):=\min\{|i-j|,\,d-|i-j|\}$.
\end{lemma}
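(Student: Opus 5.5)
The plan is to use the unitary discrete Fourier matrix $U$ with entries $U_{jk}=d^{-1/2}\omega^{jk}$, $j,k=0,\dots,d-1$, and show directly that it diagonalizes $\Sigma$; the two remaining claims then follow from the definition of spectral functional calculus. This covers the one-dimensional cyclic indexing written in the statement. For an $r\times r$ image one applies the same argument on $\mathbb Z_r^2$ with the tensor-product Fourier basis.

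\textbf{Step 1 (eigenvectors).} Let $v^{(k)}\in\mathbb C^d$ have entries $v^{(k)}_j=\omega^{jk}$. Using $\Sigma_{ij}=c_{(i-j)\bmod d}$ and substituting $\ell=(i-j)\bmod d$, I will compute
\[
(\Sigma v^{(k)})_i=\sum_{j=0}^{d-1}c_{(i-j)\bmod d}\,\omega^{jk}=\omega^{ik}\sum_{\ell=0}^{d-1}c_\ell\,\omega^{-\ell k}.
\]
The substitution is legitimate because $\omega^{d}=1$ makes $\omega^{jk}$ depend on $j$ only modulo $d$, and $j\mapsto\ell$ is a bijection of $\mathbb Z_d$. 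This gives eigenvalues $\lambda_k=\sum_\ell c_\ell\omega^{-\ell k}$. The vectors $v^{(k)}$ are orthogonal by the geometric-series identity $\sum_j\omega^{j(k-k')}=d\,\mathbf 1_{k=k'}$. Hence $U$ is unitary and $\Sigma=U\,\mathrm{diag}(\lambda_k)\,U^*$. Since $\Sigma$ is a real symmetric covariance matrix, the $\lambda_k$ are real and nonnegative; moreover $c_\ell=c_{d-\ell}$, which gives $\lambda_k=\lambda_{d-k}$.

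\textbf{Step 2 (functional calculus).} Define $f(\Sigma):=U\,\mathrm{diag}(f(\lambda_k))\,U^*$. This agrees with any other orthonormal diagonalization, because on each eigenspace $f(\Sigma)$ acts as a scalar. Expanding the product entrywise gives
\[
f(\Sigma)_{ij}=\frac1d\sum_{k=0}^{d-1}f(\lambda_k)\,\omega^{(i-j)k}.
\]
The right-hand side depends only on $(i-j)\bmod d$, so $f(\Sigma)$ is circulant. This is the main case of interest: $(\Sigma+\sigma^2I)^{-1}$, i.e.\ $f(\lambda)=(\lambda+\sigma^2)^{-1}$, which is well defined since $\lambda_k\ge0$.

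\textbf{Step 3 (wrap-around distance).} Writing $g(m)$ for the sum above with $i-j=m$, it remains to show $g(m)=g(d-m)$. Reindex $k\mapsto d-k$ and use $\lambda_{d-k}=\lambda_k$ to get $g(d-m)=g(m)$. Since $g(m)$ depends on $m$ only modulo $d$, we also have $g(-m)=g(m)$. Therefore the entries depend only on $\min\{|i-j|,d-|i-j|\}$.

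The main obstacle is Step 3. It needs the symmetry $c_\ell=c_{d-\ell}$, and this does not follow from circulance alone; it comes from $\Sigma$ being symmetric, which holds for any covariance matrix. I will state this explicitly. I will also take $f$ to be real-valued on the spectrum, which makes the entries of $f(\Sigma)$ real; this holds for the resolvent used later. Everything else is routine bookkeeping with roots of unity.
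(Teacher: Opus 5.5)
Your proposal is correct and follows the same route as the paper's proof sketch: diagonalize $\Sigma$ by the DFT and use the fact that spectral functional calculus preserves circulant structure. The difference is that you write out the computations the paper delegates to a citation of Gray, namely the eigenvalues $\lambda_k=\sum_\ell c_\ell\omega^{-\ell k}$, the entry formula $f(\Sigma)_{ij}=\frac1d\sum_k f(\lambda_k)\omega^{(i-j)k}$, and the reindexing $k\mapsto d-k$. You are also right that the wrap-around-distance claim needs $c_\ell=c_{d-\ell}$, i.e.\ $\Sigma$ symmetric; the paper's statement omits this (for $d\ge 3$ the cyclic shift with $f=\mathrm{id}$ is a counterexample), but it holds for the covariance matrices to which the lemma is applied.
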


\begin{proof}[Proof sketch]
This is a classical property of circulant matrices: they are simultaneously
diagonalized by the discrete Fourier transform, and functional calculus
preserves circulant structure. See \cite{Gray2005Toeplitz} for a detailed
treatment of the spectral theory of circulant and Toeplitz matrices.
\end{proof}

\begin{theorem}
\label{thm:Gaussian-sensitivity-decay}
Assume $\Sigma$ is circulant. Let $\omega=e^{2\pi i/d}$ and let $(\lambda_k)_{k=0}^{d-1}$ be the
eigenvalues of $\Sigma$ in the discrete Fourier basis (Lemma~\ref{lem:circulant-spectrum}). Define
\[
h_k:=\frac{\sigma^2}{\lambda_k+\sigma^2},\qquad
q_r:=\frac{1}{d}\sum_{k=0}^{d-1} h_k\,\omega^{rk},
\qquad
r\in\{0,\dots,d-1\}.
\]
For all $i\neq j$, let $n:=\min\{|i-j|,\,d-|i-j|\}$ be the wrap-around distance. Then,
\[
\left|\frac{\partial [m_\sigma]_i}{\partial x_j}\right|=|q_n|.
\]
Moreover, for all $n\in\{1,\dots,\lfloor d/2\rfloor\}$,
\begin{equation}
\label{eq:1overm-TV-simple}
\left|\frac{\partial [m_\sigma]_i}{\partial x_j}\right|
\le \frac{\mathrm{TV}(h)}{4n},
\qquad
\mathrm{TV}(h):=\sum_{k=0}^{d-1}|h_{k+1}-h_k|
\ \ (\text{indices mod } d).
\end{equation}
\end{theorem}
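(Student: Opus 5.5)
The plan has two parts: an exact formula for the off-diagonal entries, then a summation-by-parts bound on that formula.

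\emph{The exact formula.} By Lemma~\ref{lem:ij formula}, for $i\neq j$ the sensitivity equals $\sigma^2|[(\Sigma+\sigma^2 I)^{-1}]_{ij}|$. Apply Lemma~\ref{lem:circulant-spectrum} with $f(\lambda)=\sigma^2/(\lambda+\sigma^2)$. Then $\sigma^2(\Sigma+\sigma^2 I)^{-1}=f(\Sigma)$ is circulant with eigenvalues $h_k$ in the Fourier basis. Writing $\Sigma=U\,\mathrm{diag}(\lambda_k)\,U^*$ with $U_{jk}=\omega^{jk}/\sqrt d$, we get
\[
[f(\Sigma)]_{ij}=\frac1d\sum_k h_k\,\omega^{(i-j)k}=q_{(i-j)\bmod d}.
\]
Since $\Sigma$ is real symmetric circulant, $\lambda_k=\lambda_{d-k}$, so $h_k=h_{d-k}$. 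Hence $q_r=q_{d-r}$ and $q_r$ is real. This reduces the index $(i-j)\bmod d$ to the wrap-around distance $n$, giving $|\partial_j[m_\sigma]_i|=|q_n|$.

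\emph{The decay bound.} Fix $1\le n\le\lfloor d/2\rfloor$ and set $\zeta=\omega^n\neq1$. Abel summation on the cyclic group does the work. Since $\zeta^{k+1}-\zeta^k=\zeta^k(\zeta-1)$, define the partial geometric sums $S_k=\sum_{l<k}\zeta^l=(\zeta^k-1)/(\zeta-1)$. Because $\zeta^d=1$, we have $S_d=0$, and summation by parts gives
\[
\sum_k h_k\zeta^k=-\sum_{k=0}^{d-1}(h_{k+1}-h_k)\,S_{k+1}
\]
with indices mod $d$. Only differences of $h$ appear, and the $S_k$ are not sensitive to shifting by a constant. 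Two further refinements come next. First, shift $S$ by the constant $-1/(\zeta-1)$, which is allowed because $\sum_k(h_{k+1}-h_k)=0$ cyclically. The effective weight then becomes $\zeta^{k+1}/(\zeta-1)$, of modulus $1/|\zeta-1|$. This yields
\[
|q_n|\le \frac{\mathrm{TV}(h)}{d\,|\omega^n-1|}=\frac{\mathrm{TV}(h)}{2d\sin(\pi n/d)}.
\]
Second, Jordan's inequality $\sin(\pi n/d)\ge 2n/d$ for $n\le d/2$ gives $|q_n|\le \mathrm{TV}(h)/(4n)$, which is \eqref{eq:1overm-TV-simple}.

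The main obstacle is keeping the constant in the Abel summation sharp. A naive bound $|S_k|\le 2/|\zeta-1|$ loses a factor of $2$ and gives only $\mathrm{TV}(h)/(2n)$. I would first try the constant-shift trick above. If that alone still misses the constant, I would additionally use the symmetry $h_k=h_{d-k}$ to pair the terms $k$ and $d-k$, which halves the total variation counted and recovers $1/(4n)$. The remaining routine checks are the reality and symmetry of $q_r$ and the cyclic boundary terms in the summation by parts.
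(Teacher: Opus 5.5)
Your proof is correct and follows the paper's route: reduce to the circulant kernel of $\sigma^2(\Sigma+\sigma^2 I)^{-1}$, use cyclic summation by parts to get $q_n=\frac{\omega^n}{d(1-\omega^n)}\sum_k (h_{k+1}-h_k)\,\omega^{nk}$, and finish with $|1-\omega^n|=2\sin(\pi n/d)\ge 4n/d$. Your constant shift of $S_{k+1}$ already gives exactly this identity with the sharp constant (the paper gets it in one line from $\sum_k (h_{k+1}-h_k)\omega^{nk}=(\omega^{-n}-1)\sum_k h_k\omega^{nk}$), so the hedged fallback of pairing $k$ with $d-k$ is unnecessary; your use of $h_k=h_{d-k}$ to pass from $(i-j)\bmod d$ to $n$ spells out a step the paper leaves implicit.
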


\begin{proof}
Define the matrix
\[
Q_\sigma:=\sigma^2(\Sigma+\sigma^2 I)^{-1}.
\]
Since $\Sigma$ is circulant, so is $\Sigma+\sigma^2 I$, and hence $(\Sigma+\sigma^2 I)^{-1}$
and $Q_\sigma$ are circulant as well (Lemma~\ref{lem:circulant-spectrum} applied to the spectral
function $f(t)=1/(t+\sigma^2)$). In the DFT basis, the eigenvalues of $Q_\sigma$ are
\[
h_k=\frac{\sigma^2}{\lambda_k+\sigma^2},
\qquad k=0,\dots,d-1.
\]
Therefore $Q_\sigma$ has circulant kernel $(q_r)_{r=0}^{d-1}$ given by the inverse DFT:
\[
[Q_\sigma]_{ij}=q_{(i-j)\bmod d},\qquad
q_r=\frac{1}{d}\sum_{k=0}^{d-1} h_k\,\omega^{rk}.
\]

Now fix $i\neq j$ and let $n:=\min\{|i-j|,\,d-|i-j|\}$. By Lemma~\ref{lem:ij formula},
\[
\left|\frac{\partial [m_\sigma]_i}{\partial x_j}\right|
=\sigma^2\left|[(\Sigma+\sigma^2 I)^{-1}]_{ij}\right|
=\left|[Q_\sigma]_{ij}\right|
=|q_n|,
\]
which proves the first claim.

For the decay bound, let $(\Delta h)_k:=h_{k+1}-h_k$ (indices modulo $d$). For $n\not\equiv 0\pmod d$,
a cyclic summation-by-parts identity (discrete integration by parts) yields
\[
q_n=\frac{\omega^n}{d(1-\omega^n)}\sum_{k=0}^{d-1}(\Delta h)_k\,\omega^{nk}.
\]
Taking absolute values and using the triangle inequality gives
\[
|q_n|
\le \frac{\sum_{k=0}^{d-1}|\Delta h_k|}{d\,|1-\omega^n|}
=\frac{\mathrm{TV}(h)}{d\,|1-\omega^n|}.
\]
Finally, for $1\le n\le d/2$ we have
\[
|1-\omega^n|=2|\sin(\pi n/d)|\ge \frac{4n}{d},
\]
and substituting into the previous display yields \eqref{eq:1overm-TV-simple}.
\end{proof}

\section{Experiment Details}

\subsection{Sampling Schedule}
\label{app:sampling_schedule}
Unless otherwise specified, the sampling schedule used in our experiments is the polynomial sampling schedule from~\cite{karras2022elucidating}:
\[
\sigma_i = \Big(\sigma_{\max}^{1/\rho} + \frac{i}{N-1}\big(\sigma_{\min}^{1/\rho}-\sigma_{\max}^{1/\rho}\big)\Big)^{\rho},
\]
for $i=0,\ldots,N-1$, where $\sigma_{\max} =80$, $\sigma_{\min}=0.002$ and $N=18$ is a popular choice for sampling on CIFAR-10.

\subsection{Empirical Shell Coverage on CIFAR-10}\label{exp:coverage_details}

\begin{figure}[t]
    \centering
    \includegraphics[width=0.7\linewidth]{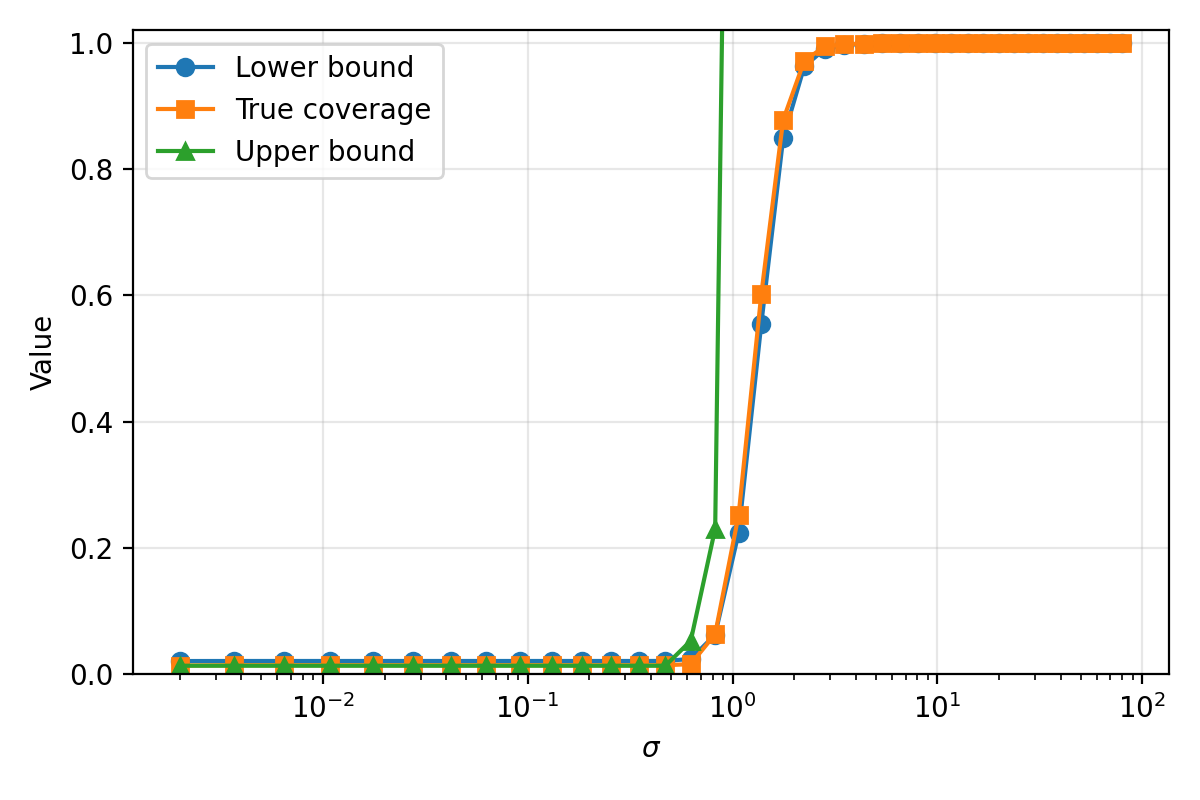}
   \caption{
\textbf{Empirical shell coverage on CIFAR-10.} We plot the lower bound, coverage, and upper bound from Theorem~\ref{thm:shell-coverage}. 
The lower bound curve (blue) is nearly indistinguishable from the coverage curve across the entire range of $\sigma$, as it almost perfectly overlaps with it. 
}
    \label{fig:shell-coverage-cifar10}
\end{figure}

We empirically evaluate the lower and upper bounds in \Cref{thm:shell-coverage} and compare them with the Gaussian shell coverage probability on the CIFAR-10 dataset; see \Cref{fig:shell-coverage-cifar10}.
All images are flattened to vectors in $\mathbb{R}^d$ with $d=3072$ and linearly rescaled to lie in $[-1,1]^d$.
For a fixed shell width parameter $c=5$, we estimate the function
\[
\Phi_{d,c}(t)
=
\mathbb{P}\!\left(
\|\bm Z\|\in[a_d,b_d],\;
\|\bm Z+t e_1\|\in[a_d,b_d]
\right),
\qquad \bm Z\sim\mathcal{N}(0,I_d),
\]
using Monte Carlo with $5000$ i.i.d.\ Gaussian samples.
We reuse the same Gaussian samples to evaluate $\Phi_{d,c}(t)$ for all values of $t$ encountered in the experiment.

\paragraph{Noise schedule.} We use the EDM polynomial noise schedule stated in \Cref{app:sampling_schedule} with $\sigma_{\max}=80$, $\sigma_{\min}=0.002$, and a 40-step discretization ($N=40$).

\paragraph{Upper bound.}
To estimate the upper bound term
$
N\,\mathbb{E}[\Phi_{d,c}(\|\bm X-\bm X'\|/\sigma)]
$
with $N=1000$, we independently sample $12{,}000$ pairs $(\bm X,\bm X')$ from the CIFAR-10 training set and compute the empirical average of $\Phi_{d,c}(\|\bm X-\bm X'\|/\sigma)$ for each noise level $\sigma$.

\paragraph{Lower bound.}
To estimate the lower bound term
$
\mathbb{E}[\Phi_{d,c}(d_{1\mathrm{NN}}(\bm X)/\sigma)]
$,
we perform $250$ independent trials.
In each trial, we sample a fresh subset $\{\bm X_1,\dots,\bm X_N\}$ of $N=1000$ training images and independently sample $8$ test images $\bm X$.
For each test image, we compute the nearest-neighbor distance
$
d_{1\mathrm{NN}}(\bm X)=\min_{1\le i\le N}\|\bm X-\bm X_i\|
$
and evaluate $\Phi_{d,c}(d_{1\mathrm{NN}}(\bm X)/\sigma)$.
The results are averaged over test images and trials.

\paragraph{Gaussian shell coverage.}
We estimate the true coverage probability
$
\mathbb{P}(\bm X_\sigma\in\mathcal{U}_\sigma)
$
using $350$ independent trials.
In each trial, we sample a fresh subset $\{\bm X_1,\dots,\bm X_N\}$ of size $N=1000$ and an independent test image $\bm X$.
For each noise level $\sigma$, we draw $10$ independent Gaussian noises $\bm Z$ and check whether
$
\bm X+\sigma \bm Z
$
lies in at least one shell centered at the subset points.
The coverage probability is estimated by averaging over noises and trials.

\subsection{Max vs Self Posterior Weight}\label{sec:max-vs-w1-weights}
\begin{figure}[H]
    \centering
    \includegraphics[width=0.7\linewidth]{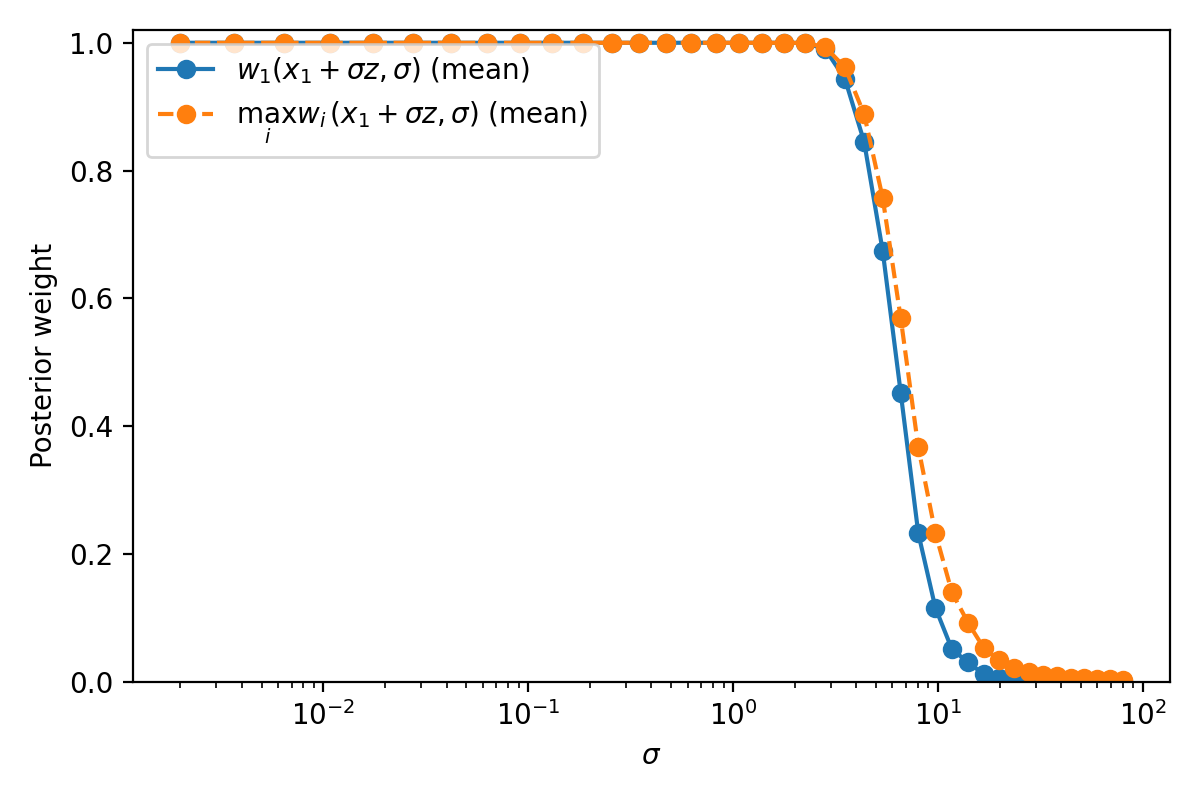}
   \caption{
\textbf{Max vs. Sample Weight on 1K CIFAR-10.}
We plot average values of $\max_{1\leq i\leq N}w_i(x_1 + \sigma \bm Z, \sigma)$ and $w_1(x_1+\sigma \bm Z)$ across a range of $\sigma$. The two curves show that the max posterior weight can usually be well estimated by the sample weight, particularly at smaller noise levels where they coincide.}
\label{fig:max-vs-w1-weights}
\end{figure}
\paragraph{Max vs Self weight estimation.}
For each noise level $\sigma$,
we fix a subset of $N=1000$ images sampled uniformly without replacement from the CIFAR-10 dataset. We then draw 100 base points without replacement from the subset and for each base point $x_1$ we compute for each $i\in \{1,\dots,N\}$, $w_i(x_1+\sigma z,\sigma)$ over 400 independent Gaussian noises $z$. We then compute the averages for both  $\max_iw_i(x_1+\sigma z, \sigma)$ and $w_1(x_1+\sigma z, \sigma)$ over all base points $x_1$ and subsequent samples $z$. The resulting curves show the averages for each value of $\sigma$.

\subsection{Danger Zone Identification: 2k Car Subset}
\label{app:danger_zone}
In Figure~\ref{fig:danger-zone}, we overlay the coverage and max posterior weight curves from Figure~\ref{fig:true-vs-maxweight} for the 2k car training set and 1k car test set. The danger zone is identified as the region where both metrics are high, which in the case of $2k$-car is a narrow interval containing the intersection point $\sigma \approx 1.9$. 
\begin{figure}[htbp!]
\centering
\includegraphics[width=0.7\textwidth]{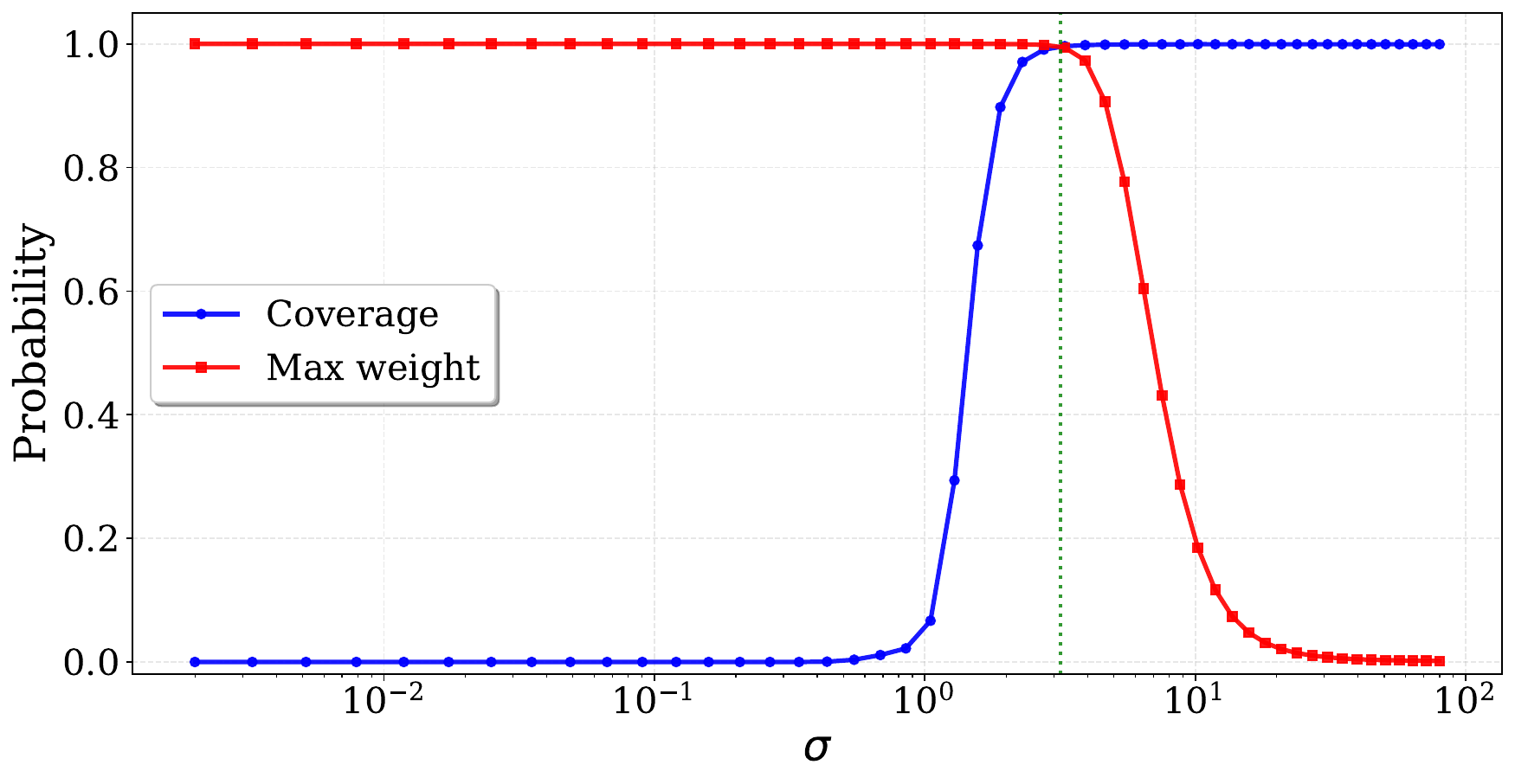}
\caption{Coverage and max posterior weight vs noise level for 2k car training set and 1k car test set. The danger zone is where both metrics are high, spanning $\sigma \in [1, 5]$ with intersection near $\sigma \approx 1.9$.}
\label{fig:danger-zone}
\end{figure}

\subsection{Pareto Frontier of Training Configurations on 1024 Grayscale CelebA Images}
\label{app:pareto}

We compute the FID score of 10k samples against 10k held out test images. We visualize the Pareto frontier of different timestep gap configurations on the 1024 grayscale CelebA training set in Figure~\ref{fig:pareto}. Each point represents a different gap configuration, plotting FID score against memorization rate (FMEM\%). The dashed line connects non-dominated solutions forming the Pareto frontier.
\begin{figure}[h]
\centering
\includegraphics[width=0.85\textwidth]{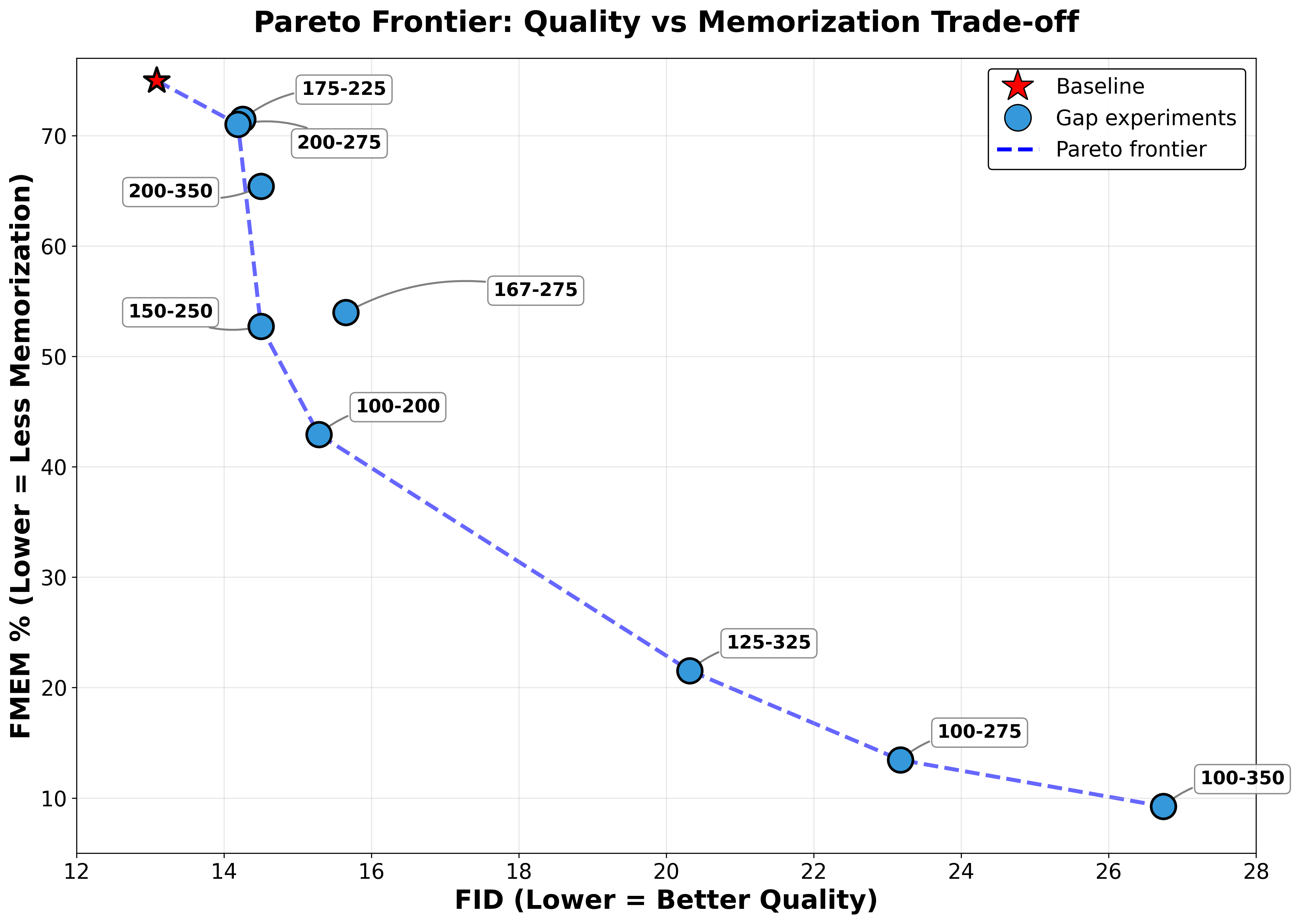}
\caption{Pareto frontier showing the trade-off between image quality (FID) and memorization (FMEM\%). Each point represents a different timestep gap configuration. The dashed line connects non-dominated solutions forming the Pareto frontier.}
\label{fig:pareto}
\end{figure}

\subsection{Denoiser Swapping Experiments}
\label{app:denoiser_swap}

To understand which noise levels are critical for sample quality, we perform denoiser swapping experiments. We generate samples using hybrid schedules that switch between \textbf{EDM-1K} and \textbf{EDM-50K} at different $\sigma$ regions (large: $\sigma > 5.3$, medium: $\sigma \in [0.06, 5.3]$, small: $\sigma < 0.06$).

Figure~\ref{fig:mem_sample_grid} shows representative samples from each condition. Key findings:
\begin{itemize}
    \item Swapping to \textbf{EDM-50K} in the \textbf{medium region} (steps 7-13) produces the largest quality improvement when starting from \textbf{EDM-1K}.
    \item Swapping to \textbf{EDM-1K} in the \textbf{medium region} causes the largest quality degradation when starting from \textbf{EDM-50K}.
    \item large and small region swaps have minimal impact on final sample quality.
\end{itemize}

This confirms that the mid-$\sigma$ region is where memorization most impacts generation quality, aligning with our one-step memorization analysis. Table~\ref{tab:swap_memorization} in the main text quantifies the memorization rates across all conditions.

\begin{figure}[h]
    \centering
    \includegraphics[width=\linewidth]{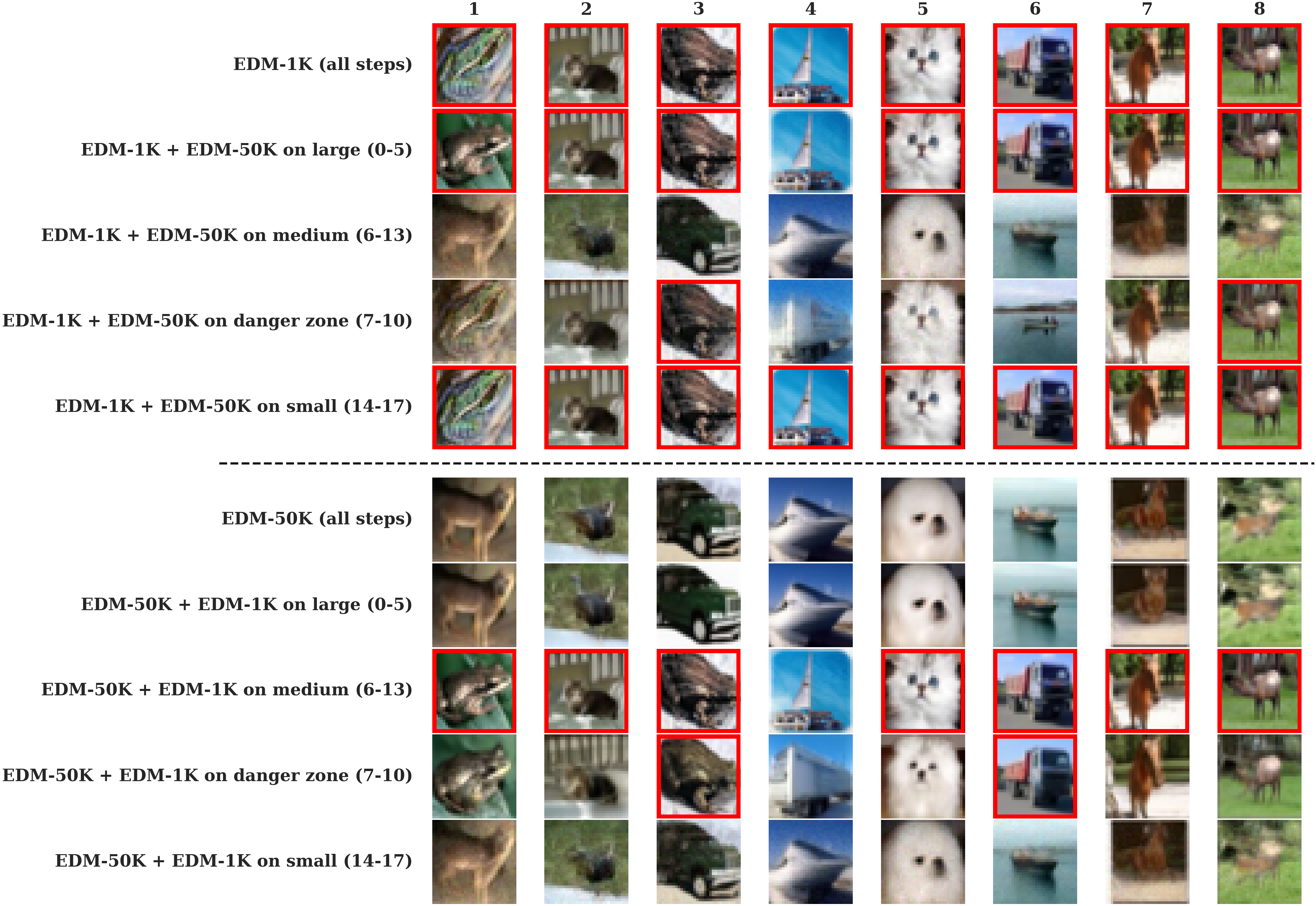}
    \caption{\textbf{Denoiser Swapping Samples.} Top half: \textbf{EDM-1K} base with \textbf{EDM-50K} swapped in different regions. Bottom half: \textbf{EDM-50K} base with \textbf{EDM-1K} swapped in. Red boxes indicate memorized samples (where $d_{1\mathrm{NN}} < d_{2\mathrm{NN}}/3$). Swapping in the medium region (steps 6-13) completely flips memorization behavior, while swapping only in the danger zone (steps 7-10) shows partial effects.}
    \label{fig:mem_sample_grid}
\end{figure}

\subsection{One-Step Memorization Showcase}
\label{app:onestep_showcase}

\begin{table*}[htbp!]
    \centering
    \caption{\textbf{Time Parameterization Conventions for EDM Sampling.} EDM uses 18 sampling steps for CIFAR-10 with noise scales $\sigma$ (top row) and EDM time coordinate  Red denotes initial (high noise) steps, blue denotes middle steps, green denotes end (low noise) steps.}
    \small
    \setlength{\tabcolsep}{4pt}
    \begin{tabular}{l|*{6}{>{\columncolor{red!20}}r}*{7}{>{\columncolor{blue!20}}r}*{5}{>{\columncolor{green!20}}r}}
    \toprule
    $\sigma$ & 80.00 & 57.59 & 40.79 & 28.37 & 19.35 & 12.91 & 8.40 & 5.32 & 3.26 & 1.92 & 1.09 & 0.59 & 0.30 & 0.14 & 0.06 & 0.02 & 0.01 & 0.00 \\
    \bottomrule
    \end{tabular}
    \label{tab:time_conventions}
    \end{table*}
We use the standard 18 steps of EDM for sampling with Euler method, the noise level $\sigma$ and step translation can be seen in~\Cref{app:onestep_showcase}. 
Figures~\ref{fig:mem_onestep_showcase_8.4}--\ref{fig:mem_onestep_showcase_0.14} visualize one-step denoising output at six different noise levels spanning the full diffusion schedule. Each figure shows, from top to bottom: the initial clean test image, noisy input, \textbf{EDM-1K} output followed by its 1-NN in the training set, \textbf{EDM-50K} output followed by its 1-NN, and \textbf{EMP-1K} output followed by its 1-NN.

\begin{figure}[htbp]
    \centering
    \includegraphics[width=\linewidth]{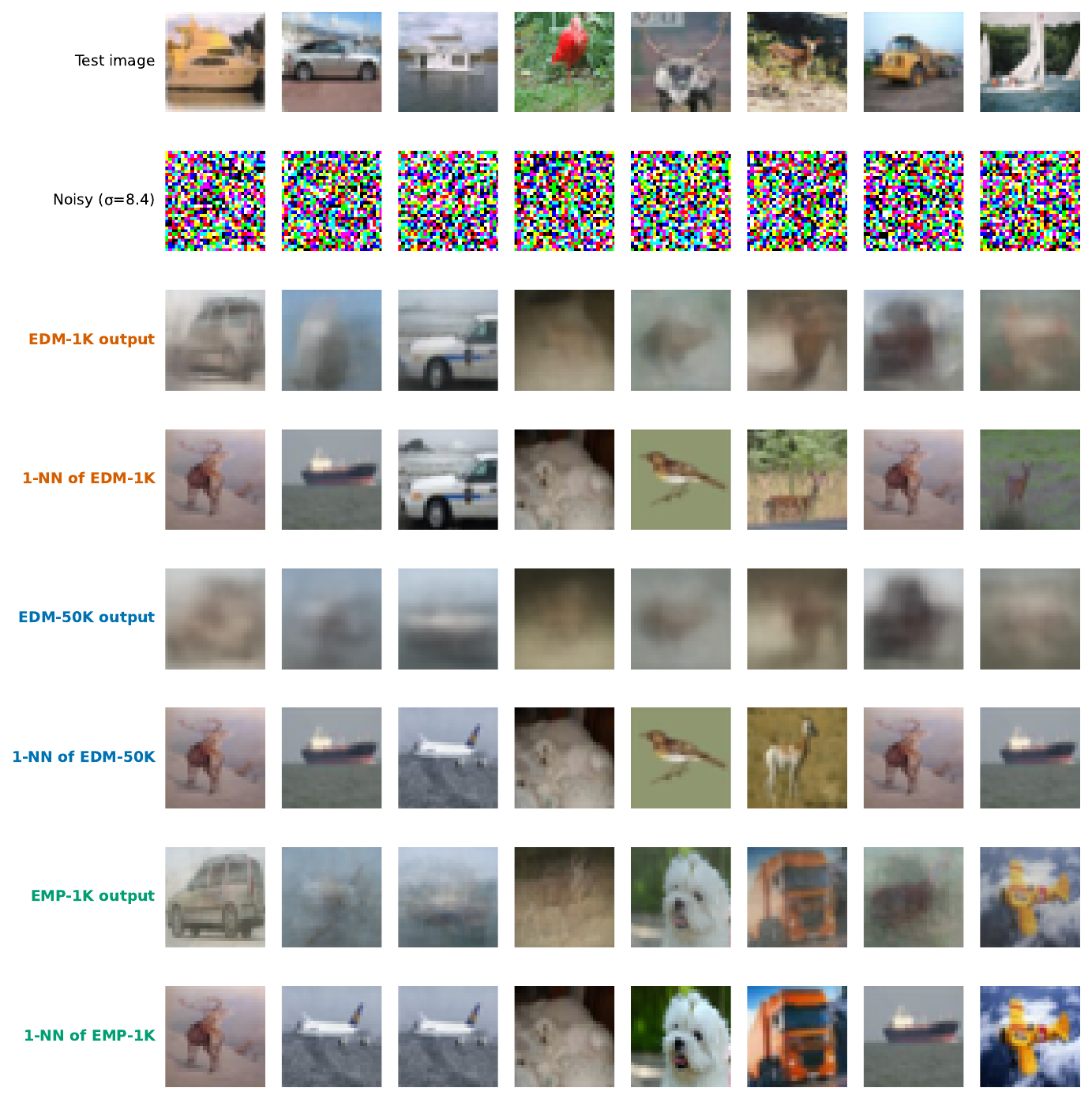}
    \caption{\textbf{One-Step Denoising at $\sigma = 8.4$.} From top: initial test image, noisy input, \textbf{EDM-1K} output, 1-NN of \textbf{EDM-1K}, \textbf{EDM-50K} output, 1-NN of \textbf{EDM-50K}, \textbf{EMP-1K} output, 1-NN of \textbf{EMP-1K}. At this high noise level, all denoisers perform similarly with no memorization.}
    \label{fig:mem_onestep_showcase_8.4}
\end{figure}

\begin{figure}[htbp]
    \centering
    \includegraphics[width=\linewidth]{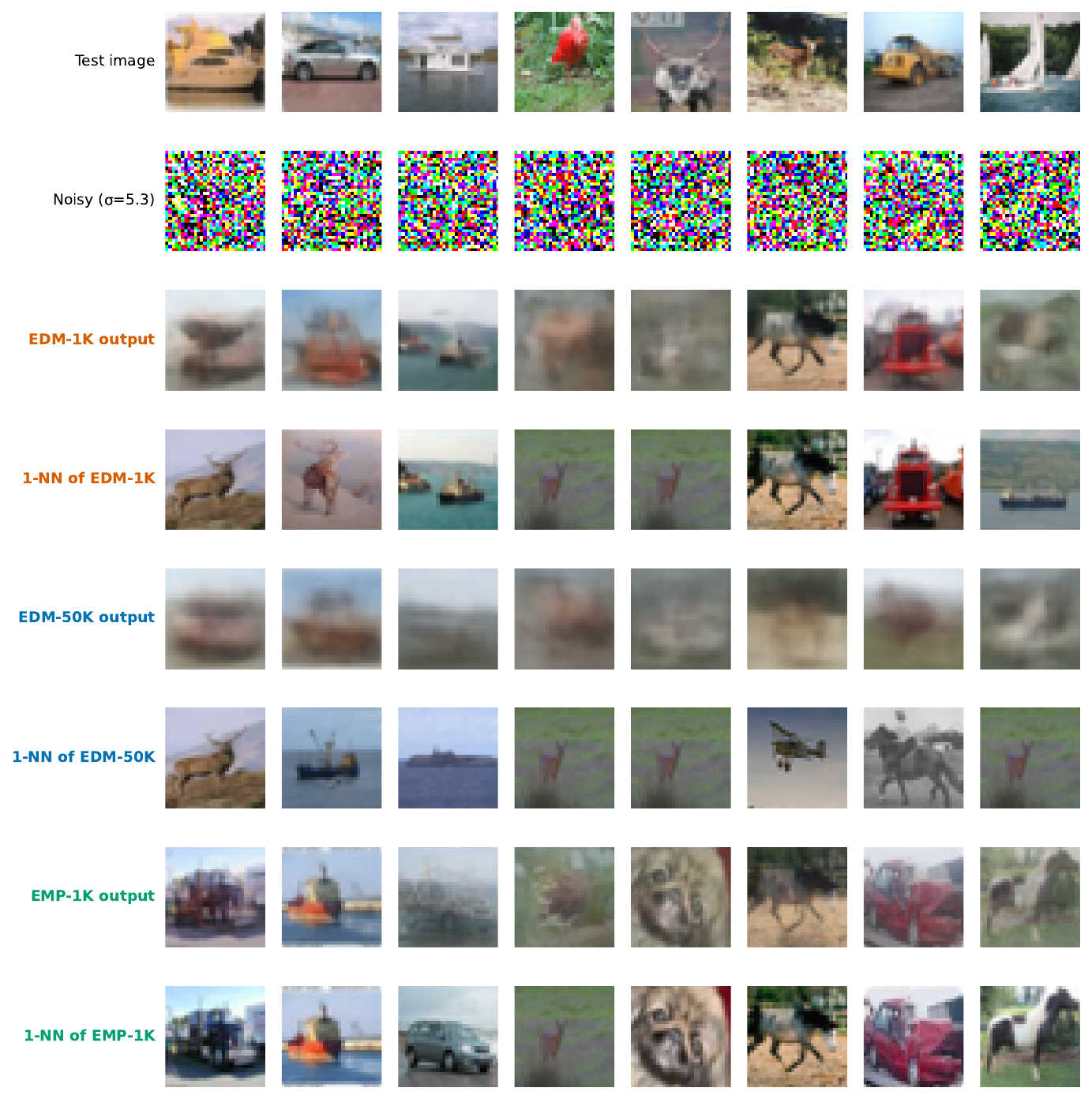}
    \caption{\textbf{One-Step Denoising at $\sigma = 5.3$.} From top: initial test image, noisy input, \textbf{EDM-1K} output, 1-NN of \textbf{EDM-1K}, \textbf{EDM-50K} output, 1-NN of \textbf{EDM-50K}, \textbf{EMP-1K} output, 1-NN of \textbf{EMP-1K}. Denoising begins to show model differences as \textbf{EDM-1K} starts exhibiting some preference for training images.}
    \label{fig:mem_onestep_showcase_5.3}
\end{figure}

\begin{figure}[htbp]
    \centering
    \includegraphics[width=\linewidth]{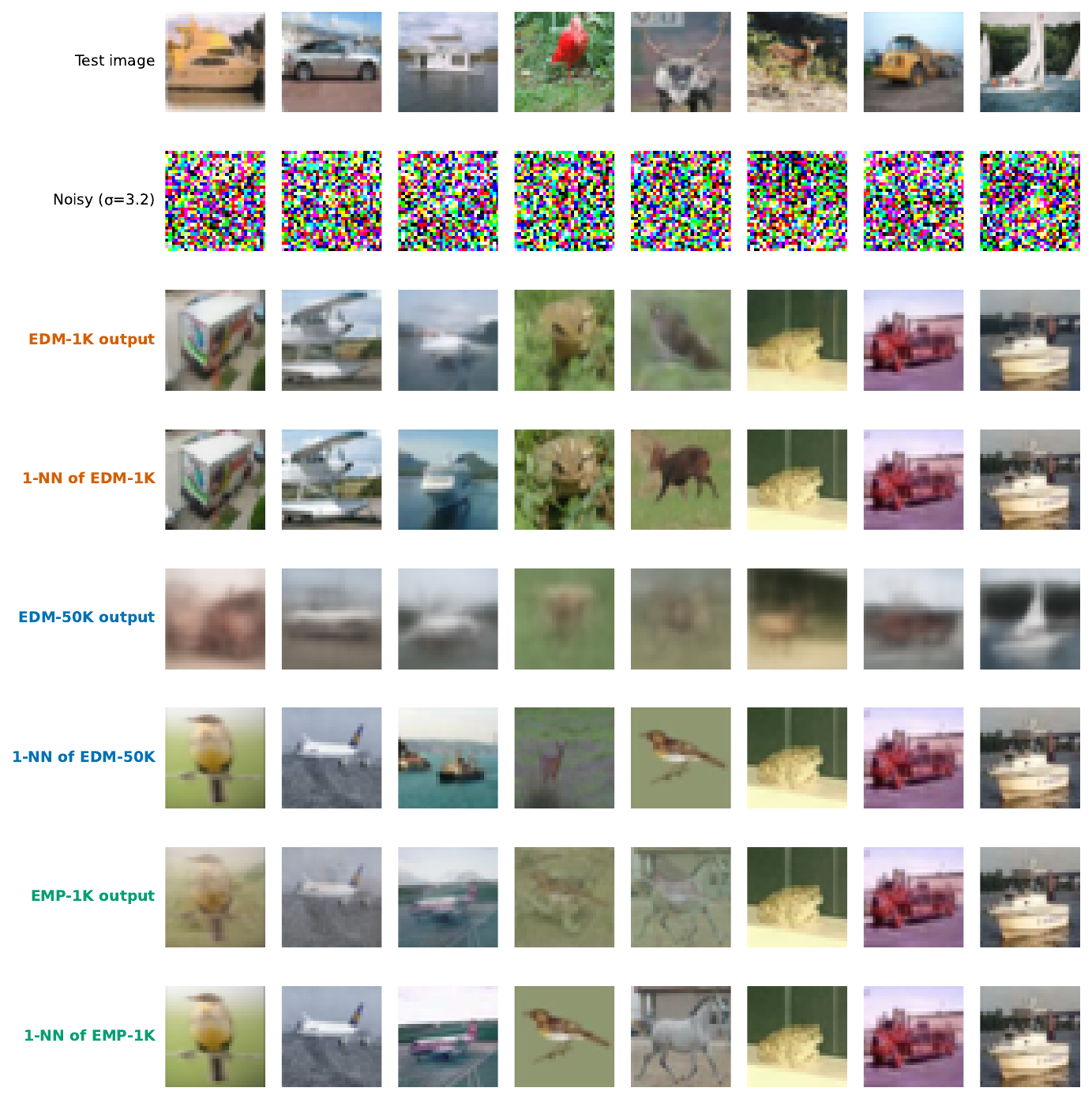}
    \caption{\textbf{One-Step Denoising at $\sigma = 3.2$ .} From top: initial test image, noisy input, \textbf{EDM-1K} output, 1-NN of \textbf{EDM-1K}, \textbf{EDM-50K} output, 1-NN of \textbf{EDM-50K}, \textbf{EMP-1K} output, 1-NN of \textbf{EMP-1K}. \textbf{EDM-1K} memorization becomes pronounced, with outputs matching training nearest neighbors. This is near the peak memorization region.}
    \label{fig:mem_onestep_showcase_3.2}
\end{figure}

\begin{figure}[htbp]
    \centering
    \includegraphics[width=\linewidth]{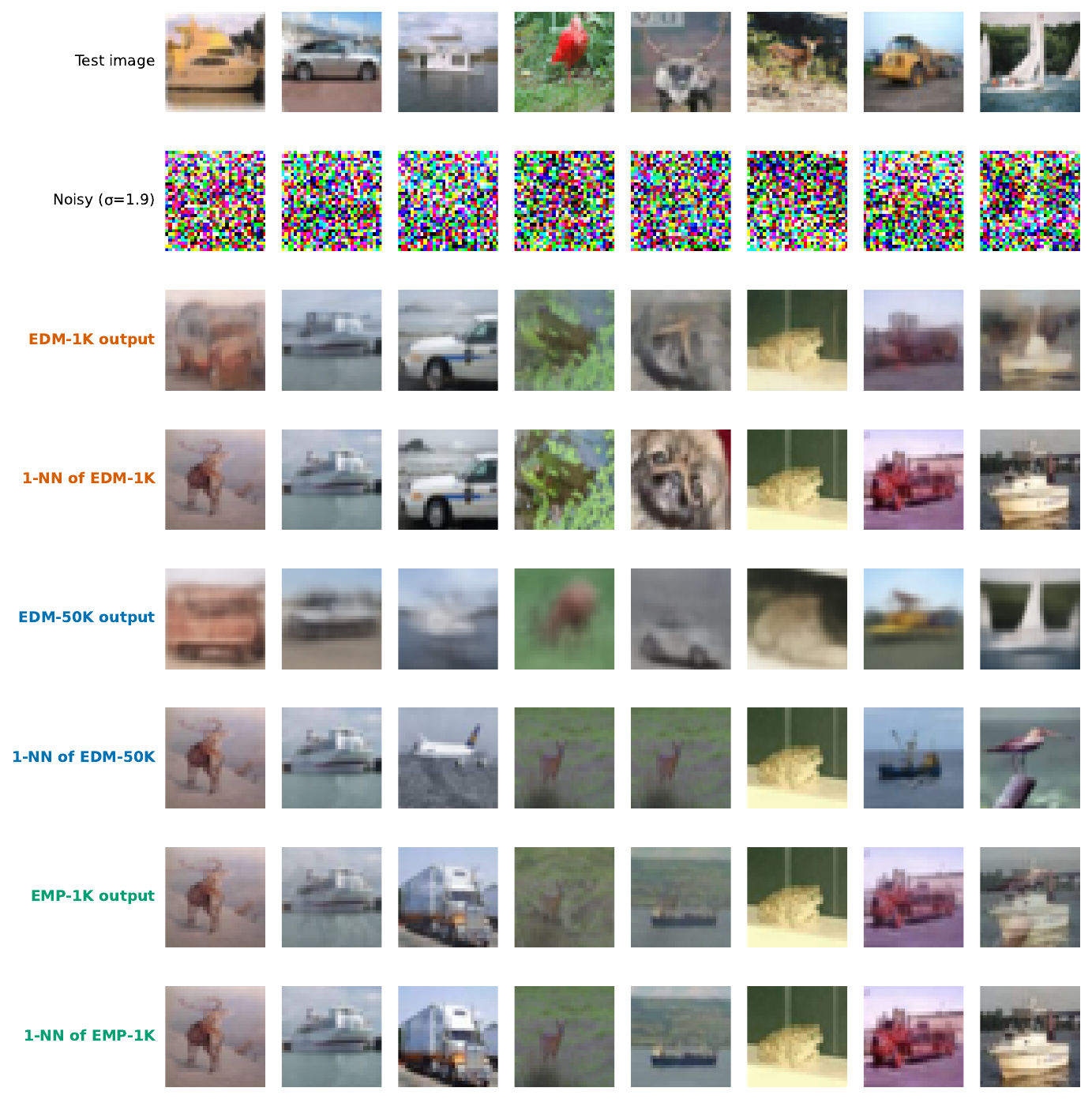}
    \caption{\textbf{One-Step Denoising at $\sigma = 1.9$.} From top: initial test image, noisy input, \textbf{EDM-1K} output, 1-NN of \textbf{EDM-1K}, \textbf{EDM-50K} output, 1-NN of \textbf{EDM-50K}, \textbf{EMP-1K} output, 1-NN of \textbf{EMP-1K}. \textbf{EDM-1K} outputs are much closer to training images than \textbf{EDM-50K} at this critical mid-$\sigma$ level.}
    \label{fig:mem_onestep_showcase_1.9}
\end{figure}

\begin{figure}[htbp]
    \centering
    \includegraphics[width=\linewidth]{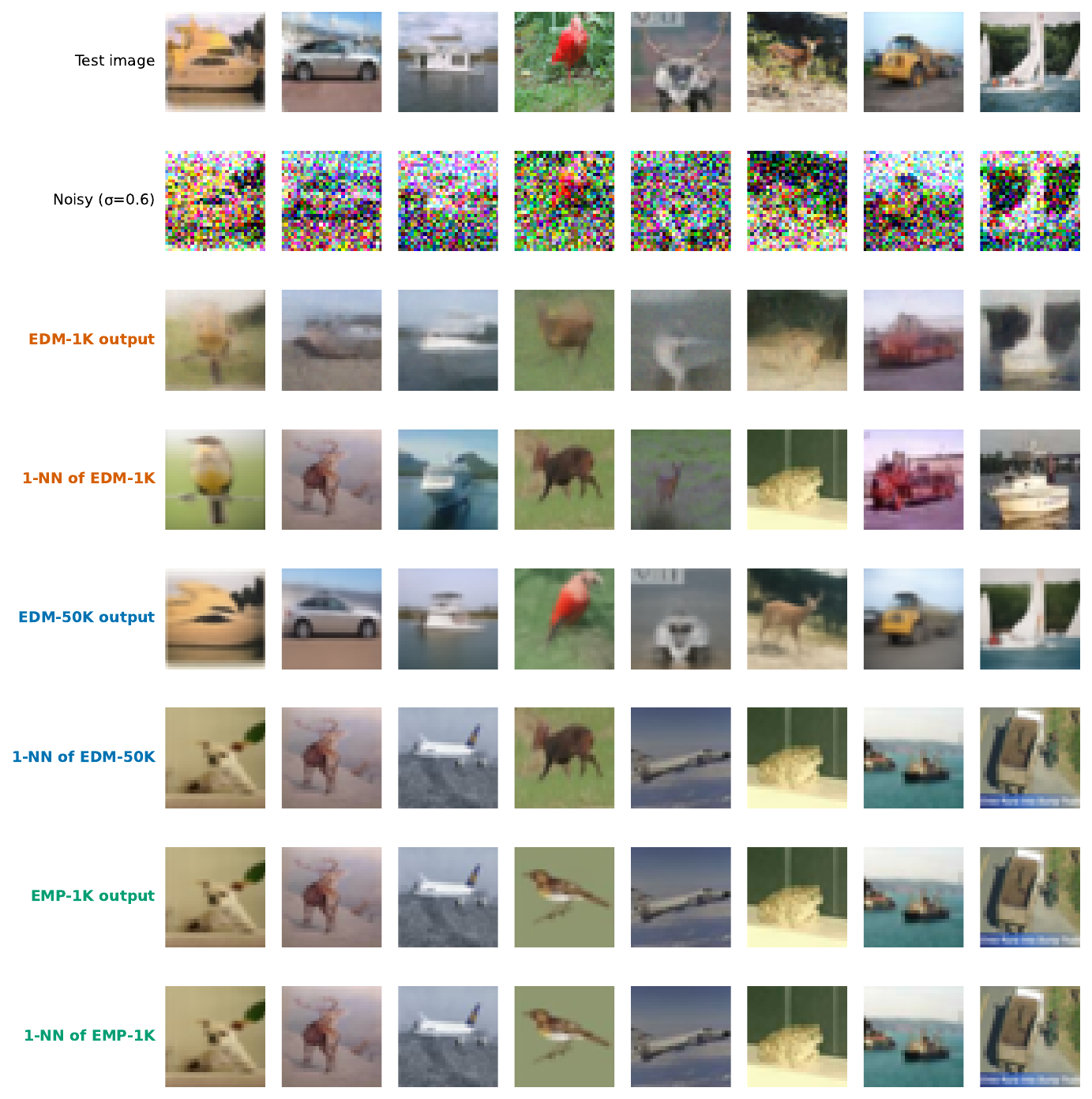}
    \caption{\textbf{One-Step Denoising at $\sigma = 0.6$.} From top: initial test image, noisy input, \textbf{EDM-1K} output, 1-NN of \textbf{EDM-1K}, \textbf{EDM-50K} output, 1-NN of \textbf{EDM-50K}, \textbf{EMP-1K} output, 1-NN of \textbf{EMP-1K}. At this lower noise level, \textbf{EDM-1K} memorization weakens while \textbf{EMP-1K} shows stronger nearest-neighbor retrieval.}
    \label{fig:mem_onestep_showcase_0.6}
\end{figure}

\begin{figure}[htbp]
    \centering
    \includegraphics[width=\linewidth]{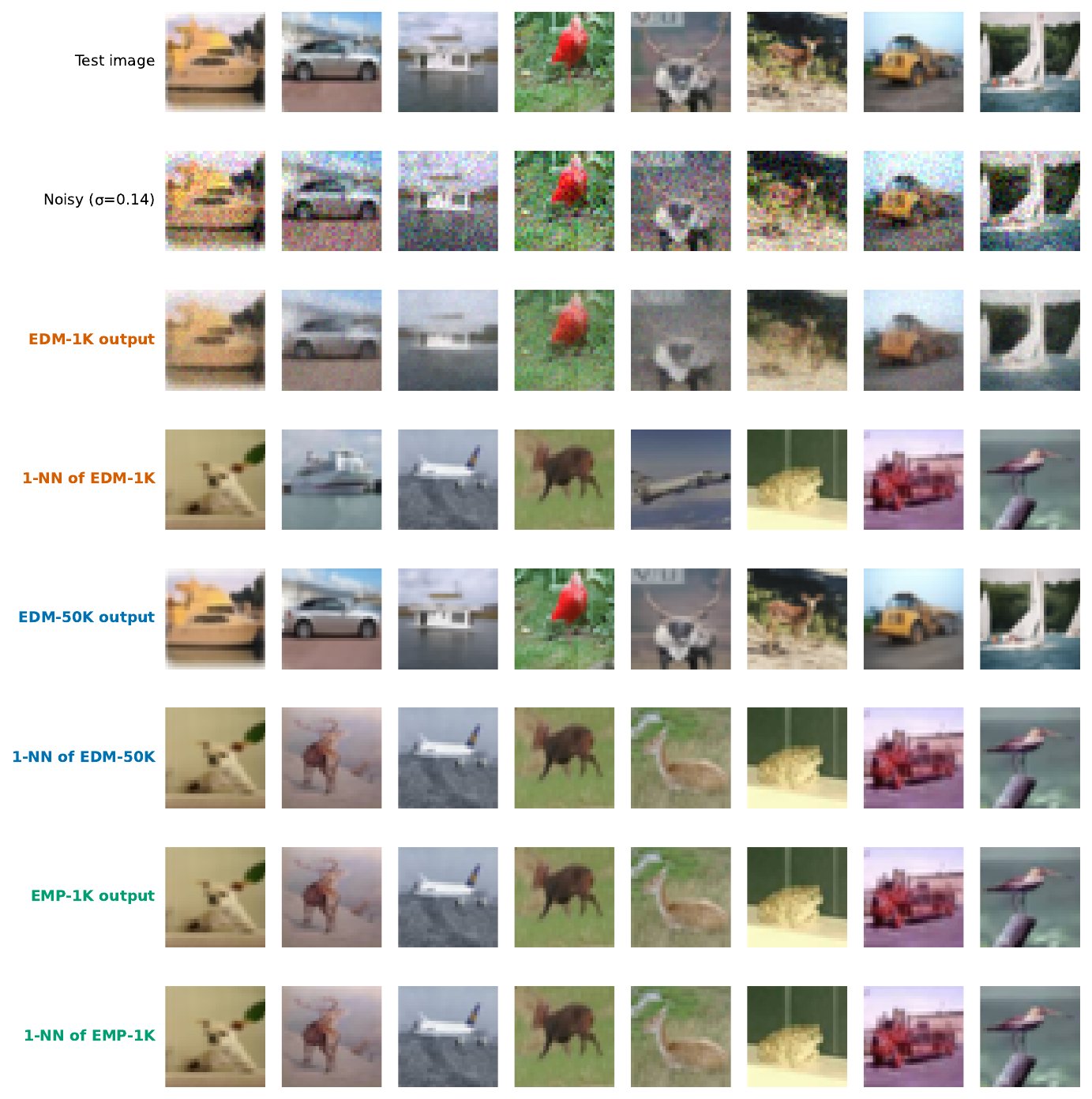}
    \caption{\textbf{One-Step Denoising at $\sigma = 0.14$.} From top: initial test image, noisy input, \textbf{EDM-1K} output, 1-NN of \textbf{EDM-1K}, \textbf{EDM-50K} output, 1-NN of \textbf{EDM-50K}, \textbf{EMP-1K} output, 1-NN of \textbf{EMP-1K}. At this low noise level, \textbf{EDM-1K} shows minimal memorization while \textbf{EMP-1K} exhibits nearly perfect nearest-neighbor behavior.}
    \label{fig:mem_onestep_showcase_0.14}
\end{figure}

\subsection{Detailed Memorization Visualizations for Gap Training}
\label{app:gap-training-viz}

We provide detailed visualizations comparing the memorization behavior of the three anti-memorization methods evaluated in Section~\ref{sec:experiments}.

Each image pair is arranged as [Generated sample | 1-NN from training set], where the left image shows a generated sample and the right image shows its nearest neighbor from the 2,000-image training set. A yellow border around both images in a pair indicates that the sample is classified as memorized according to the criterion: $\|g - x_{\text{1-NN}}\|_2 < \|g - x_{\text{2-NN}}\|_2 / 3$.
Figures~\ref{fig:gap-training-baseline}, \ref{fig:gap-training-dummy}, and \ref{fig:gap-training-gap} show complete 8$\times$8 grids with 64 generated samples for each method, enabling detailed assessment of memorization patterns.

\begin{figure}[h]
    \centering
    \includegraphics[width=0.9\linewidth]{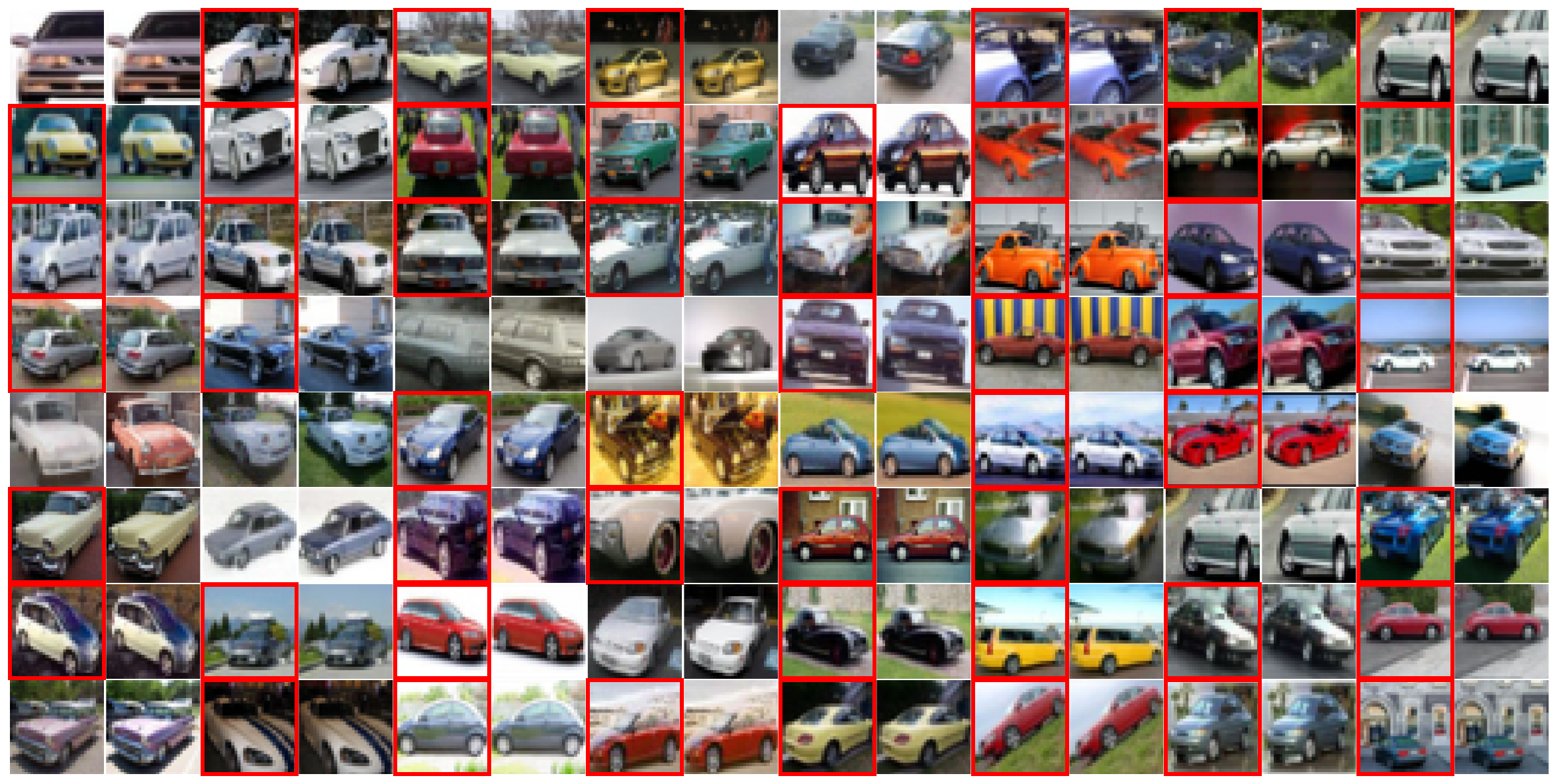}
    \caption{\textbf{2k-Baseline detailed visualization.} This 8$\times$8 grid shows 64 image pairs, each arranged as [Generated | 1-NN from training]. In each pair, the generated sample appears on the \emph{left} and its nearest neighbor from the 2,000-image training set appears on the \emph{right}.
        Red borders indicate memorized pairs. The high prevalence of red borders confirms severe memorization in the baseline case.}
    \label{fig:gap-training-baseline}
\end{figure}

\begin{figure}[h]
    \centering
    \includegraphics[width=0.9\linewidth]{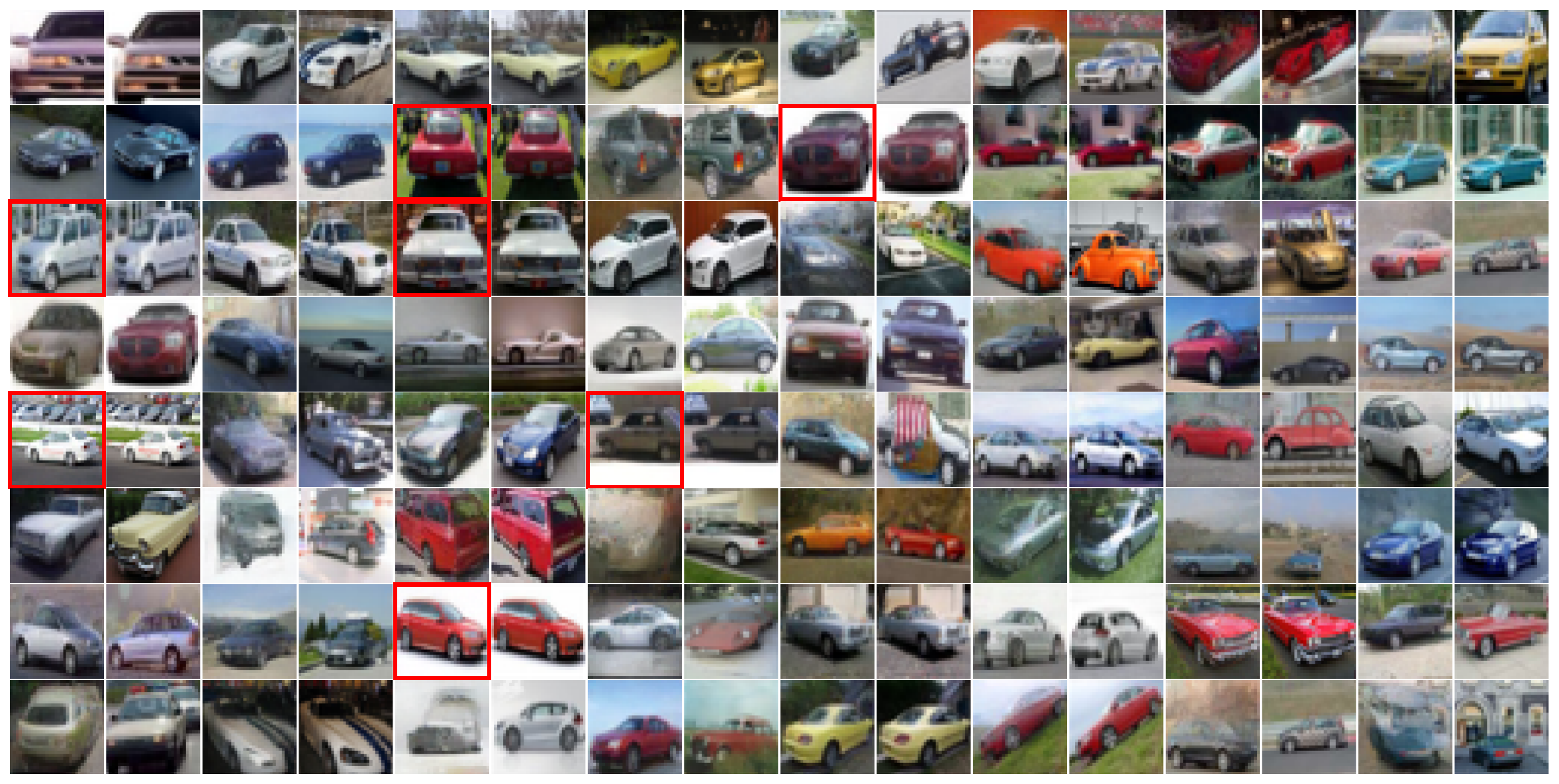}
    \caption{\textbf{2k-Dummy detailed visualization.} Same layout as Figure~\ref{fig:gap-training-baseline}: each of the 64 pairs shows [Generated | 1-NN]. Red borders mark memorized pairs. Conditional training with an auxiliary noise class substantially reduces memorization. Most generated samples now deviate meaningfully from their nearest neighbors, though several memorized samples remain (red borders).}
    \label{fig:gap-training-dummy}
\end{figure}

\begin{figure}[h]
    \centering
    \includegraphics[width=0.9\linewidth]{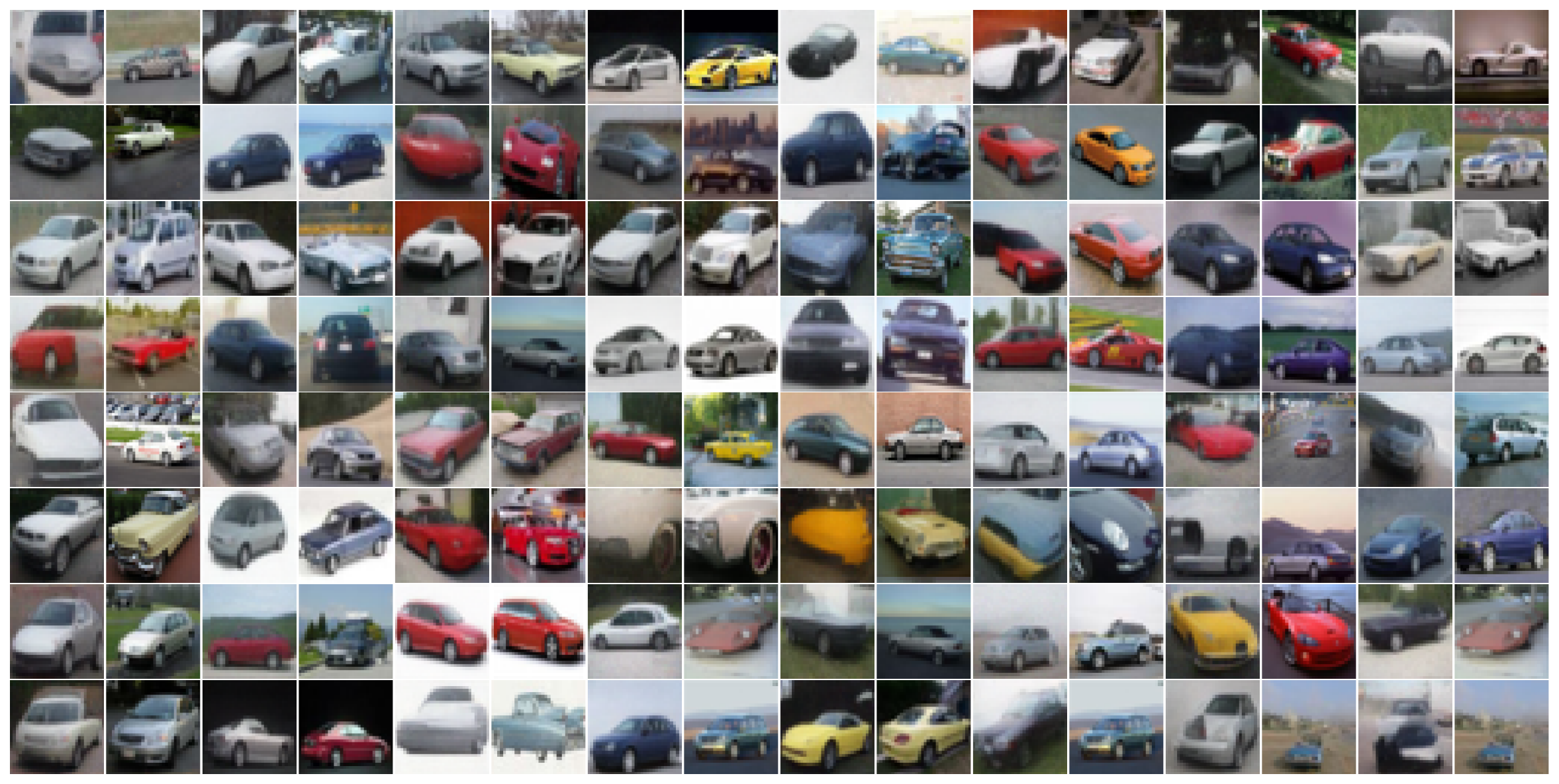}
    \caption{\textbf{2k-Gap detailed visualization.} Same layout: 64 pairs of [Generated | 1-NN]. Gap training (excluding $\sigma \in [1.0, 5.0]$ from training) achieves minimal memorization. Generated samples maintain semantic similarity to cars while introducing novel variations in color, viewpoint, and detail. This demonstrates that targeted undertraining in the medium-$\sigma$ danger zone enables robust generalization.}
    \label{fig:gap-training-gap}
\end{figure}

\subsection{Timestep Gap Configurations: Visual Comparisons}
\label{app:timestep-gap-viz}

Visual comparison of memorization behavior for different timestep gap configurations on 1024 grayscale CelebA samples. Each 4$\times$4 grid shows 16 image pairs arranged as [Generated | 1-NN from training]. Red borders indicate memorized samples according to the criterion: $\|g - x_{\text{1-NN}}\|_2 < \|g - x_{\text{2-NN}}\|_2 / 3$.

\begin{figure}[htbp]
    \centering
    \includegraphics[width=0.9\linewidth]{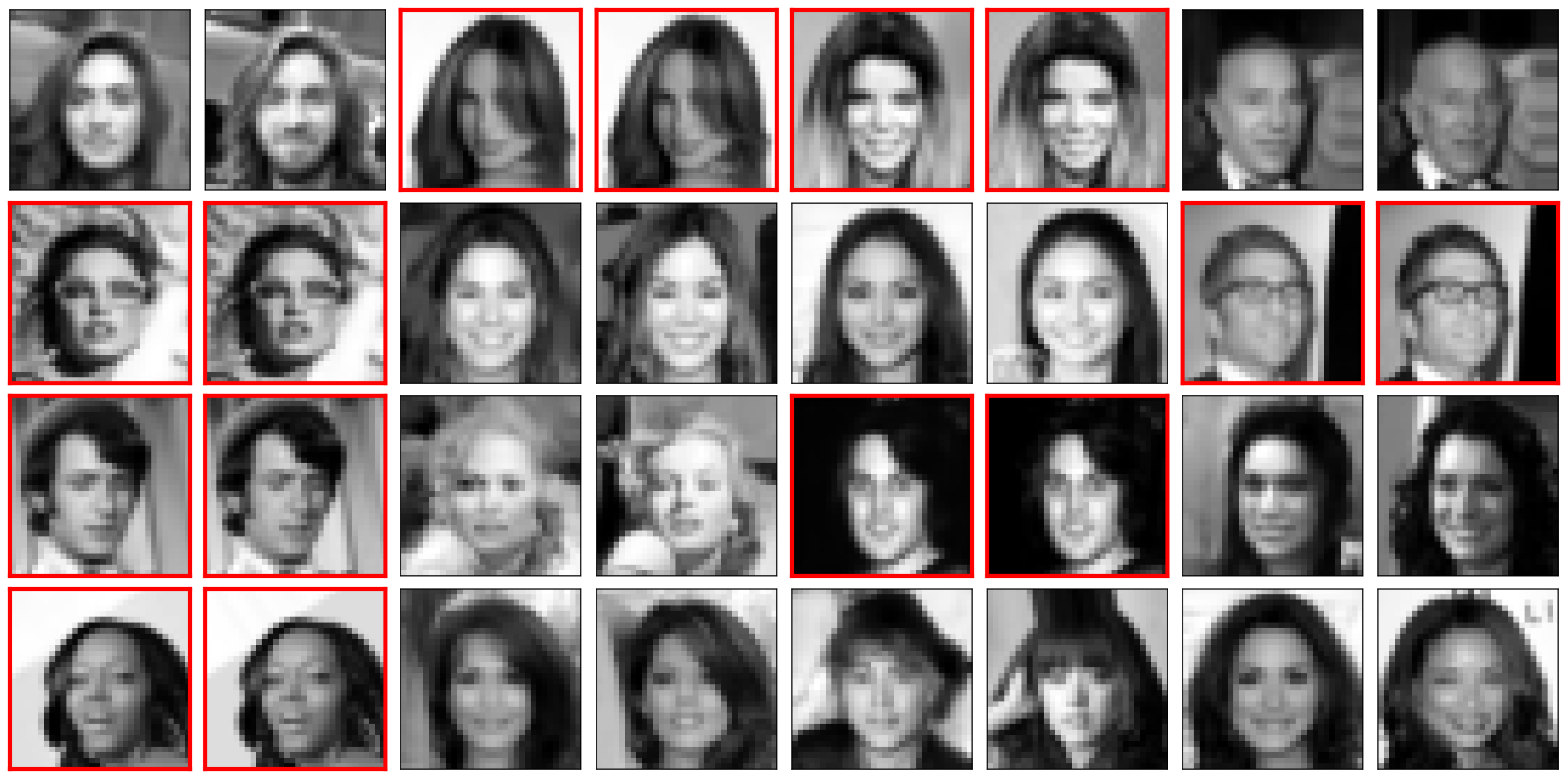}
    \caption{\textbf{Baseline (no gap).} 74.97\% memorization. The high prevalence of red borders confirms severe memorization in this low-data regime.}
    \label{fig:celeba-baseline}
\end{figure}

\begin{figure}[htbp]
    \centering
    \includegraphics[width=0.9\linewidth]{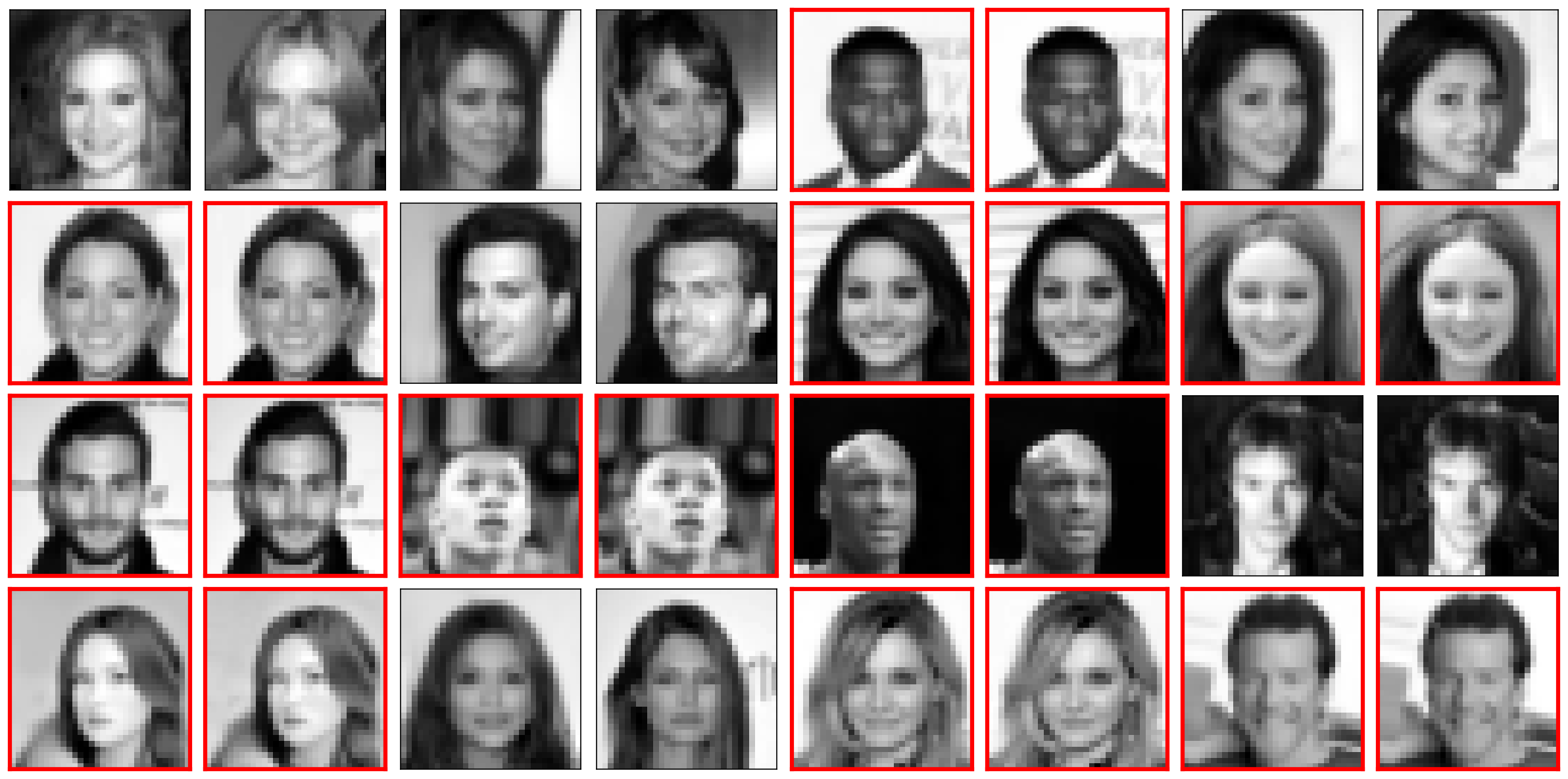}
    \caption{\textbf{Gap [100--200].} 42.93\% memorization. Skipping 100 timesteps in the early danger zone reduces memorization substantially.}
    \label{fig:celeba-gap-100-200}
\end{figure}

\begin{figure}[htbp]
    \centering
    \includegraphics[width=0.9\linewidth]{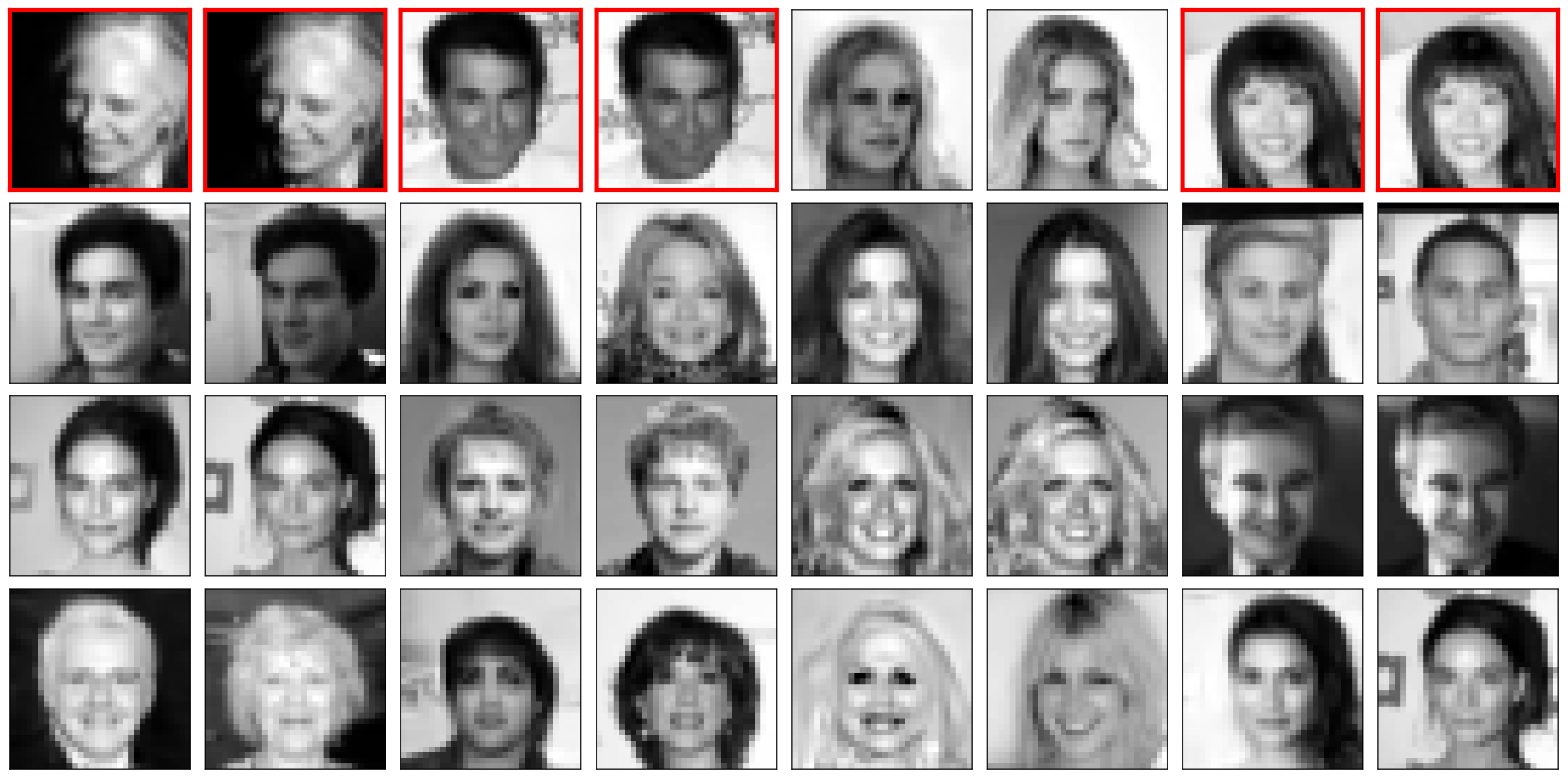}
    \caption{\textbf{Gap [100--275].} 13.45\% memorization. Extending the gap further into the danger zone dramatically reduces memorization.}
    \label{fig:celeba-gap-100-275}
\end{figure}

\begin{figure}[htbp]
    \centering
    \includegraphics[width=0.9\linewidth]{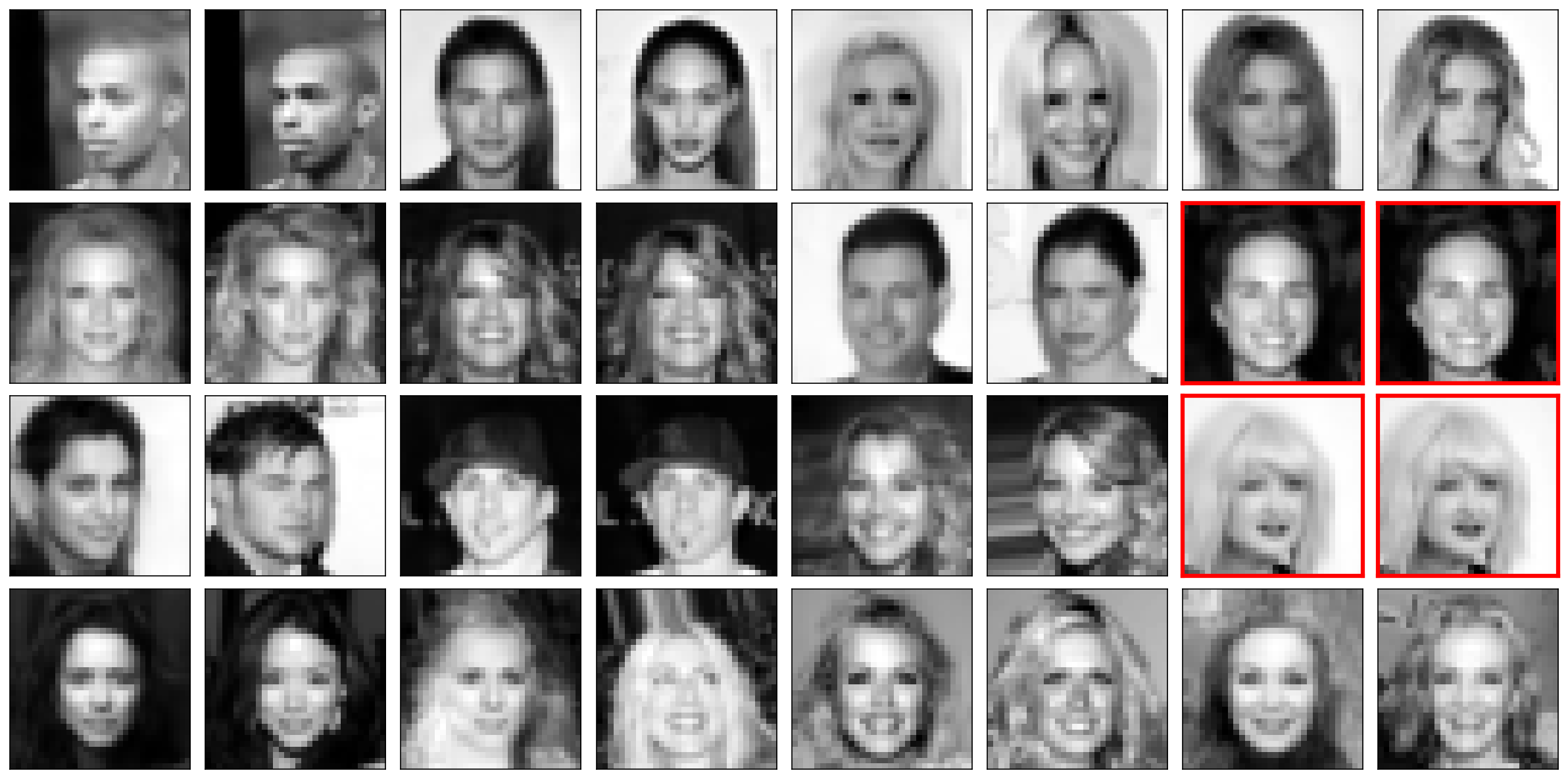}
    \caption{\textbf{Gap [100--350].} 9.24\% memorization. The widest gap achieves the lowest memorization rate, though at the cost of increased FID (26.74 vs 13.09 baseline).}
    \label{fig:celeba-gap-100-350}
\end{figure}

\end{document}